\documentclass{article}

\PassOptionsToPackage{numbers, compress}{natbib}
\usepackage[preprint]{neurips_2026}

\usepackage[utf8]{inputenc}
\usepackage[T1]{fontenc}
\usepackage{hyperref}
\usepackage{url}
\usepackage{booktabs}
\usepackage{amsfonts}
\usepackage{nicefrac}
\usepackage{microtype}
\usepackage{xcolor}

\usepackage{graphicx}
\usepackage{subcaption}

\usepackage{amsmath}
\usepackage{amssymb}
\usepackage{mathtools}
\usepackage{amsthm}

\usepackage{algorithm}
\usepackage{algorithmic}

\usepackage{pgfplots}
\pgfplotsset{compat=1.14}
\graphicspath{{./img/}{./images/}}
\usepackage{bbm}
\usepackage{comment}
\usepackage{bm}
\usepackage{multirow}
\usepackage{listings}
\usepackage{letltxmacro}
\newcommand*{\SavedLstInline}{}
\LetLtxMacro\SavedLstInline\lstinline
\DeclareRobustCommand*{\lstinline}{%
  \ifmmode
    \let\SavedBGroup\bgroup
    \def\bgroup{%
      \let\bgroup\SavedBGroup
      \hbox\bgroup
    }%
  \fi
  \SavedLstInline
}

\usepackage{tikz}
\usetikzlibrary{arrows.meta}
\usetikzlibrary{angles}
\usepackage{tikz-network}
\usetikzlibrary{patterns,decorations.pathreplacing,math}
\usetikzlibrary{decorations.markings}

\tikzset{
    arrowhead/.pic = {
    \draw[thick, rotate = 45] (0,0) -- (#1,0);
    \draw[thick, rotate = 45] (0,0) -- (0, #1);
    }
}

\definecolor{mygreen}{RGB}{34,139,34}
\definecolor{mypink}{RGB}{255,100,203}

\usepackage[capitalize,noabbrev]{cleveref}

\theoremstyle{plain}
\newtheorem{theorem}{Theorem}[section]

\newtheorem{lemma}[theorem]{Lemma}

\theoremstyle{definition}

\theoremstyle{remark}

\newif\ifshowdraftcomments
\showdraftcommentstrue %\showdraftcommentsfalse

\ifshowdraftcomments
  \usepackage[textsize=tiny]{todonotes}

\else
  \usepackage[disable,textsize=tiny]{todonotes}

\fi

\newcommand{\AlgComment}[1]{\hfill{\footnotesize\textcolor{blue}{$\triangleright$~#1}}}

\title{Manifold Random Features}

\author{
Ananya Parashar, Derek Long, Dwaipayan Saha\\
Department of Industrial Engineering and Operations Research \\
Columbia University, New York, NY 10027\\
\texttt{\{ap4658,dl3538,ds4386\}@columbia.edu}
\And
Krzysztof Choromanski\\
Department of Industrial Engineering and Operations Research \\
Columbia University, New York, NY 10027\\
\texttt{\{ap4658,dl3538,ds4386\}@columbia.edu}
}

\author{\hspace{-1.mm}
Ananya Parashar\textsuperscript{$1\,^{*}$}, Derek Long\textsuperscript{$1\,^{*}$}, Dwaipayan Saha\textsuperscript{$1\,^{*}$}, Krzysztof Choromanski\textsuperscript{$1,2$ \thanks{equal contribution} $\,\,$\thanks{Senior lead}}
\vspace{1.3mm}\\
\normalfont
\textsuperscript{$1$}Columbia University, 
\textsuperscript{$2$}Google DeepMind
}

\begin{document}

\maketitle

\begin{abstract}
We present a new paradigm for creating random features to approximate bi-variate functions (in particular, kernels) defined on general manifolds. This new mechanism of \textit{Manifold Random Features} (MRFs) leverages discretization of the manifold and the recently introduced technique of \textit{Graph Random Features} (GRFs; \citep{grfs-1}) to learn continuous fields on manifolds. Those fields are used to find continuous approximation mechanisms that otherwise, in general scenarios, cannot be derived analytically. MRFs provide positive and bounded features, a key property for accurate, low-variance approximation. We show deep asymptotic connection between GRFs, defined on discrete graph objects, and continuous random features used for regular kernels. As a by-product of our method, we re-discover recently introduced mechanism of Gaussian kernel approximation applied in particular to improve linear-attention Transformers, considering simple random walks on graphs and by-passing original complex mathematical computations. We complement our algorithm with a rigorous theoretical analysis and verify in thorough experimental studies.
\end{abstract}

%%%%%%%%%%%%%%%%%%%%%%%%%%%%%%%%%%%%%%%%%%%%%%%%%%%%%%%%%%%%%%%%%%%%%%%%%%%%%%%
% Main paper content from the ICML manuscript
%%%%%%%%%%%%%%%%%%%%%%%%%%%%%%%%%%%%%%%%%%%%%%%%%%%%%%%%%%%%%%%%%%%%%%%%%%%%%%%
\section{Introduction \& Related Work}
\label{sec:intro_related}

Random features (RFs) \citep{rf-1, rf-2, rf-3, rf-4, rf-7, rf-6, hrfs, de-rfs, unreas, uortmc, gcmcsampling, ortrfs} provide powerful techniques for translating potentially highly non-linear bi-variate functions into a simple dot-product (linear) kernel via randomized nonlinear transformations $\phi:\mathbb{R}^{d} \rightarrow \mathbb{R}^{m}$ applied separately to two input vectors. Thus RFs lead to effective mappings between nonlinear methods in the original spaces and well-understood and computationally more efficient linear methods in the $\phi$-transformed spaces. Examples range from classic algorithms such as SVM \citep{timit, rfsvm} to feedforward fully-connected architectures \citep{SNNK, kim2024magnituder} and attention techniques in Transformers \citep{performers, slim-performers, rf-attention, unified-rf-attention, linear-attention, hedgehog}. 

\begin{figure*}[!h]
    \centering
    \includegraphics[width=0.99\linewidth]{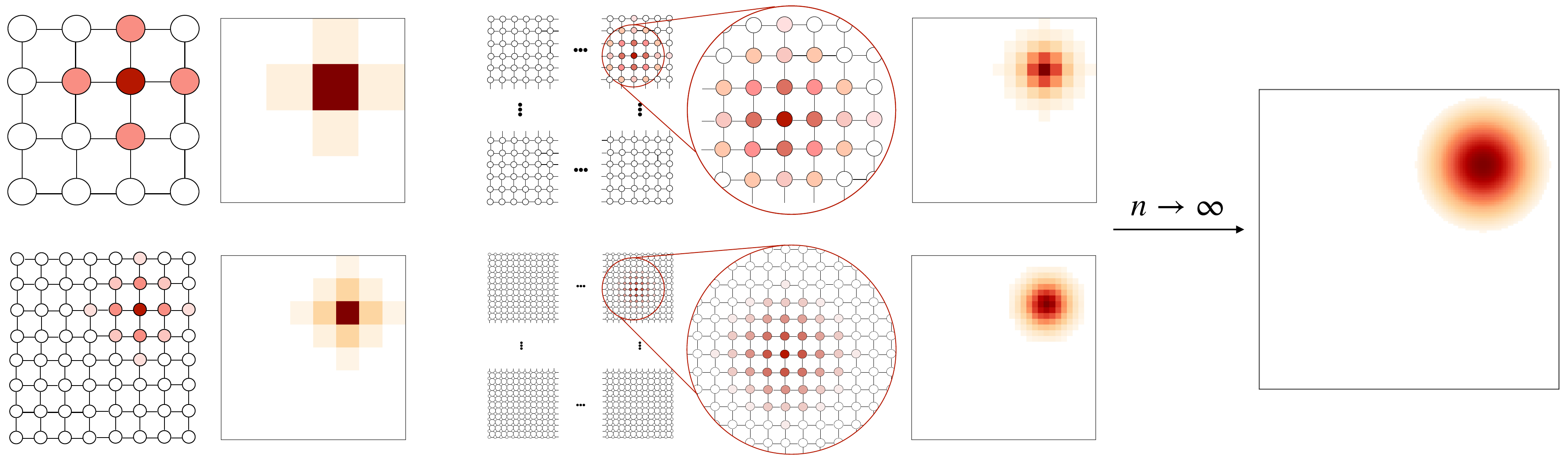}
    \caption{{Four grid-graphs of sizes: $4 \times 4$, $8 \times 8$, $16 \times 16$, $32 \times 32$} with the distinguished vertex $\mathbf{z}$ and its corresponding \textit{signature vectors} obtained by applying GRF algorithm \citep{grfs-1}. The signature vectors are represented by color-coding different vertices (with more intense shades corresponding to large values and the most intense used to color vertex $\mathbf{z}$). Next to the graphs, those signature vectors are also represented by color-coding unit-squares. As the resolution of the grid goes to infinity, those representations converge to the continuous object, namely function $g_{i}$ from Eq. \ref{eq:main_rfs}.}
    \label{fig:euclidean}
\vspace{-3mm}
\end{figure*}

Constructing efficient mappings $\phi_{1},\phi_{2}:\mathbb{R}^{d} \rightarrow \mathbb{R}^{m}$ (in many applications, a symmetric setting $\phi_{1}=\phi_{2}=\phi$ is considered, but a priori it is not a strict requirement) to approximate given bi-variate function $F:\mathbb{R}^{d} \times \mathbb{R}^{d} \rightarrow \mathbb{R}$   (with error $\epsilon \rightarrow0$ as $m, d \rightarrow \infty$, often unbiasedly), as:
\begin{equation}
F(\mathbf{x}, \mathbf{y}) \approx \phi_{1}(\mathbf{x})^{\top}\phi_{2}(\mathbf{y}),    
\end{equation}
is a nontrivial task, even for the well-known kernel-functions, which are some of the most natural targets of the RF-methods. For example, a standard method applying trigonometric random features to approximate Gaussian and softmax kernels, proposed in \cite{rf-1}, in practice does not work well in linear-attention Transformers, since it can produce negative features for the kernel that is strictly positive. The so-called \textit{positive random features} from \citep{performers} address this challenge, but further refinements (e.g. the boundedness of the RFs) are needed to improve approximation quality (see: \citep{chefs}). The problem becomes even more challenging when bi-variate functions $F$ are defined on non-Euclidean spaces, e.g. on manifolds (see: Fig. \ref{fig:first_figure}). 

In the Euclidean setting, a standard approach to constructing mappings $\phi_{1},\phi_{2}$ is by re-writing function $F:\mathbb{R}^{d} \times \mathbb{R}^{d} \rightarrow \mathbb{R}$ as follows for some $g_{1},g_{2}: \mathbb{R}^{d} \times \mathbb{R}^{d} \rightarrow \mathbb{R}$:
\begin{equation}
\label{eq:main_rfs}
F(\mathbf{x},\mathbf{y}) = \int_{\mathbb{R}^{d}} g_{1}(\mathbf{x}, \mathbf{\omega})g_{2}(\mathbf{y}, \mathbf{\omega}) d\omega.   
\end{equation}
Given a probabilistic distribution $P \in \mathcal{P}(\mathbb{R}^{d})$ with the corresponding density function $p:\mathbb{R}^{d} \rightarrow \mathbb{R}_{\geq 0}$, mappings $\phi_{1},\phi_{2}:\mathbb{R}^{d} \rightarrow \mathbb{R}^{m}$ can be then defined as follows for $i \in \{1, 2\}$:
\begin{align}
\begin{split}
\phi_{i}(\mathbf{z})  = \frac{1}{\sqrt{m}}  
\left(\frac{g_{i}(\mathbf{z},\omega_{1})}{\sqrt{p(\omega_{1})}},...,\frac{g_{i}(\mathbf{z},\omega_{m})}{\sqrt{p(\omega_{m})}}\right),
\end{split}    
\end{align}
where $\omega_{1},...,\omega_{m} \sim P$.
It is easy to see that the following holds: $F(\mathbf{x},\mathbf{y}) = \mathbb{E}[\phi_{1}(\mathbf{x})^{\top}\phi_{2}(\mathbf{y})]$. It is assumed that $p(\omega)>0$ for all $\omega \in \mathbb{R}^{d}$. If one wants to approximate $F$ only on a set: $\mathcal{B}_{F} \times \mathcal{B}_{F}$ for some bounded $\mathcal{B}_{F}$, then in practice one can consider only ``truncated'' distributions $\mathcal{P}$ with $p$ defined only on some (sufficiently large) bounded $\mathcal{B}_{p}$, such that 
$|\int_{\mathbb{R}^{d} \backslash \mathcal{B}_{p}} g_{1}(\mathbf{x}, \mathbf{\omega})g_{2}(\mathbf{y}, \mathbf{\omega}) d\omega|$ is sufficiently small.
Finding the representations given by Eq. \ref{eq:main_rfs} is usually a nontrivial task and generalizations to non-Euclidean spaces are even more challenging.

In this paper, we present a new paradigm for creating random features to approximate bi-variate functions (e.g. positive-definite kernels) defined on general manifolds. This new mechanism of \textit{Manifold Random Features} (MRFs) leverages discretization of the manifold and the recently introduced technique of \textit{Graph Random Features} (GRFs; \citep{grfs-1, grfs-general, grfs-gps, grfs-plusplus, rrws}) to learn continuous fields on manifolds, corresponding to functions $g_{i}$ from Eq. \ref{eq:main_rfs}, but in the general (not necessarily Euclidean) setting. It turns out that these fields can be thought of as continuous extensions of the combinatorial objects, the so-called \textit{signature vectors}, the building blocks of the GRF mechanism (see Figure \ref{fig:euclidean}). They are used to find continuous approximation mechanisms that otherwise, in general scenarios, cannot be derived analytically. MRFs provide positive and bounded features, a key property for accurate, low-variance approximation. We show deep asymptotic connection between GRFs, defined on discrete graph objects, and continuous random features used for regular kernels. As a by-product of our method, we rediscover recently introduced mechanism of Gaussian kernel approximation \citep{chefs} applied in particular to improve linear-attention Transformers, considering simple random walks on graphs and by-passing original complex computations. We complement our algorithm with a rigorous theoretical analysis and verify it in thorough experimental studies. 

%\begin{figure*}
%    \centering
%    \includegraphics[width=0.99\linewidth]{img/placeholder-2d-convergence.pdf} 
%    \caption{\small{...}}
%    \label{fig:2d-convergence}
%\end{figure*} 

% \vspace{-2mm}
\section{Manifold Random Features}\label{sec:mrfs}

\subsection{Preliminaries: Graph Random Features}
\label{sec:grfs}
MRFs are obtained by training continuous random feature mechanisms via discrete supervision given by \textit{Graph Random Features} (GRFs). Our first step is to provide below a gentle introduction to GRFs.

We will consider weighted, undirected N-vertex graphs $\mathrm{G}(\mathrm{V},\mathrm{E}, \mathbf{W}=[w(i,j)]_{i,j \in \mathrm{V}})$, where (1) $\mathrm{V}$ is a set of vertices, (2) $\mathrm{E} \subseteq \mathrm{V} \times \mathrm{V}$ is a set of undirected edges ($(i, j) \in \mathrm{E}$ indicates that there is an edge between $i$ and $j$ in $\mathrm{G}$), and (3) $\mathbf{W} \in \mathbb{R}_{\geq 0}^{N \times N}$ is a weighted adjacency matrix (no-edges encoded by zeros). 

We consider the following kernel matrix $\mathbf{K}_{\boldsymbol{\alpha}}(\mathbf{W})\in \mathbb{R}^{N \times N}$, where $\boldsymbol{\alpha}=(\alpha_{k})_{k=0}^{\infty}$ and $\alpha_{k} \in \mathbb{R}$:
\begin{equation}
\label{eq:main}
\mathbf{K}_{\boldsymbol{\alpha}}(\mathbf{W}) = \sum_{k=0}^{\infty} \alpha_{k} \mathbf{W}^{k}.    
\end{equation}
For bounded $(\alpha_{k})_{k=0}^{\infty}$ and $\|\mathbf{W}\|_{\infty}$ small enough, the above sum converges.
The matrix $\mathbf{K}_{\boldsymbol{\alpha}}(\mathbf{W})$ defines a kernel on the nodes of the graph. Importantly, Eq.~\ref{eq:main} covers as special cases several classes of graph kernels, in particular \textit{graph diffusion/heat kernels}, that will play an important role in our analysis later (see: Sec. \ref{sec:diffusion_kernel}).

GRFs provide a way to express $\mathbf{K}_{\boldsymbol{\alpha}}(\mathbf{W})$ (in expectation) as
$\mathbf{K}_{\boldsymbol{\alpha}}(\mathbf{W}) \overset{\mathbb{E}}{=} \mathbf{K}_{1}\mathbf{K}_{2}^{\top}$,
for independently sampled $\mathbf{K}_{1},\mathbf{K}_{2} \in \mathbb{R}^{N \times d}$ and some $d \leq N$.
%In practical applications, $\mathbf{K}_{1},\mathbf{K}_{2}$ are often sparse. 
This factorization gives an efficient (sub-quadratic) and unbiased approximation of the matrix-vector products $\mathbf{K}_{\boldsymbol{\alpha}}(\mathbf{W})\mathbf{x}$ as $\mathbf{K}_{1}(\mathbf{K}_{2}^{\top}\mathbf{x})$, if $\mathbf{K}_{1},\mathbf{K}_{2}$ are sparse or $d=o(N)$.
This is often the case in practice.
If this does not hold, explicitly materializing $\mathbf{K}_1 \mathbf{K}_2^\top$ enables one to approximate $\mathbf{K}_{\boldsymbol{\alpha}}(\mathbf{W})$ in quadratic (versus cubic) time.
Next, we describe the base GRF method for constructing sparse $\mathbf{K}_{1}, \mathbf{K}_{2}$ for $d=N$. 
Extensions for $d=o(N)$, using the Johnson-Lindenstrauss Transform \citep{jlt-main}, can be found in \citep{grfs-1}.
Each $\mathbf{K}_{j}$ for $j \in \{1,2\}$ is obtained by row-wise stacking of the vectors $\phi_{f}(i) \in \mathbb{R}^{N}$ for $i \in \mathrm{V}$, where $f:\mathbb{R} \rightarrow \mathbb{R}$ is the graph-kernel specific \textit{modulation function}. 
The procedure to construct random vectors $\phi_{f}(i)$ is given in Algorithm \ref{alg:constructing_rfs_for_k_alpha}. 
Intuitively, one samples an ensemble of random walks (RWs) from each node $i \in \mathrm{V}$. 
Every time a RW visits a node, the scalar value in that node is updated by the renormalized \textit{load}, a scalar stored by each walker, updated during each vertex-to-vertex transition. The renormalization is encoded by the \textit{modulation function}.

After all walks terminate, the vector $\phi_{f}(i)$ is obtained by the concatenation of all the scalars from the discrete scalar field, followed by a simple renormalization. 
For unbiased estimation, $f:\mathbb{N} \rightarrow \mathbb{C}$ needs to satisfy $\sum_{p=0}^{k}f(k-p)f(p) = \alpha_{k}$, 
% \begin{equation}
% \sum_{p=0}^{k}f(k-p)f(p) = \alpha_{k},    
% \end{equation}
for $k=0,1,...$ (see Theorem 2.1 in \citep{grfs-general}). We conclude that modulation function $f$ is obtained by de-convolving sequence $\boldsymbol{\alpha}=(\alpha_{k})_{k=0}^{\infty}$ defining graph kernel.
% \vspace{-3mm}
\subsection{Constructing MRFs}
\label{subsec:mrfs}
% \vspace{-1.8mm}
For a given Riemannian manifold $\mathcal{M}$, and a bi-variate function $F:\mathcal{M} \times \mathcal{M} \rightarrow \mathbb{R}$, we are looking for the function $\phi_{\mathcal{M},F}:\mathcal{M} \rightarrow \mathbb{R}^{m}$, such that for $x,y \in \mathcal{M}$:
\begin{equation}
F(x,y) \approx \phi_{\mathcal{M},F}(x)^{\top}\phi_{\mathcal{M},F}(y).    
\end{equation}
We refer to $\phi_{\mathcal{M},F}(z)$ as a \textit{manifold random features vector} for $z$.
We are ready to explain how MRFs are constructed. We start with the pre-processing procedure, conducted only once for a given manifold $\mathcal{M}$, before inputs $z_{1},...,z_{m}$ (for which vectors $\phi_{\mathcal{M},F}(z_{i})$ need to be computed) are even known.
The procedure consists of the following two steps:

\textbf{Discretizations of the input manifold $\mathcal{M}$:} We choose a resolution parameter $N \in \mathbb{N}$. We then choose a finite set of points
$\mathrm{V}_N = \{x_1,\dots,x_N\} \subset \mathcal{M}$
that discretizes $\mathcal{M}$; for example, $\mathrm{V}_N$ can arise from a mesh, a quasi--uniform point cloud, or samples from the Riemannian volume measure. We equip $\mathrm{V}_N$ with a weighted neighborhood graph $\mathrm{G}_N = (\mathrm{V}_N,\mathrm{E}_N,\mathbf{W}_N)$ that encodes the local geometry of $\mathcal{M}$, for instance by connecting each $x_i$ to its $k$ nearest neighbors with weights $\mathbf{W}_N(i,j) = \exp(-\| x_i - x_j\|^2/\sigma^2)$. We then consider a discretized version $F^{\mathrm{disc}}:\mathrm{V}_{N} \times \mathrm{V}_{N} \rightarrow \mathbb{R}$ of the original bi-variate function $F:\mathcal{M} \times \mathcal{M} \rightarrow \mathbb{R}$, with the matrix $\mathbf{F}=[F(x_{i},x_{j})]_{i,j=1,...,N}$ adhering to the form from Eq. \ref{eq:main}. In the most general setting, $F^{\mathrm{disc}}$ can be found by a regular regression algorithm trained on the dataset $\mathcal{D}=\{x_{i},F(x_{i})\}_{i=1}^{N}$ to learn a finite sequence of coefficients $(\alpha_{0},...,\alpha_{K})$ for some $K>0$. However in many prominent cases of bi-variate functions defined on manifolds, this is not necessary.
For instance, for the diffusion/heat kernels $F$ defined on manifolds, the corresponding $F^{\mathrm{disc}}$
are the diffusion/heat kernels on the corresponding graphs $\mathrm{G}_{N}$, with the coefficients $\alpha_{k}$ described by a closed-form formula. We describe this prominent special case in more depth in Sec. \ref{sec:diffusion_kernel}. 

\textbf{Training $g$-functions on manifolds:} We start this phase by constructing a training dataset $\mathcal{T} = \{(x, \omega, \phi_{f}(x)[\omega])\}$, where $x,\omega \in \mathrm{V}_{N}$ and $\phi_{f}(x)[\omega]$ stands for the value of the signature vector $\phi_{f}(x)$ in node $\omega$. 
In practice, $\mathcal{T}$ does not need to include the full $N^2$ set of pairs $(x,\omega)\in \mathrm{V}_n\times \mathrm{V}_n$ since diffusion-type kernels decay rapidly between far-apart nodes and signature vectors corresponding to neighboring nodes are similar.
The signature vectors $\{\phi_{f}(x)\}_{x \in \mathrm{V}_{N}}$ are computed with the use of Algorithm 1 (the GRF method). We construct $\mathcal{T}$ to train a function $g^{\mathcal{M},f}_{\theta}:\mathcal{M} \times \mathcal{M} \rightarrow \mathbb{R}_{\geq 0}$, parameterized by $\theta$ and satisfying:
\begin{equation}
F(x, y) \approx \int_{\mathcal{M}} g^{\mathcal{M},f}_{\theta}(x,\omega)g^{\mathcal{M}, f}_{\theta}(y,\omega)d\omega.    
\end{equation}
Note that $g^{\mathcal{M},f}_{\theta}$ is the generalization of the $g$-functions from Eq. \ref{eq:main_rfs}. We model it as a neural network.

We train $g^{\mathcal{M},f}_\theta$ to regress the discrete signature vectors on $\mathrm{V}_N \times \mathrm{V}_N$, and clamp the final output of $g^{\mathcal{M},f}_{\theta}$ at zero. We define the following objective to minimize (where $\mathcal{L}$ is a $\mathbb{R}^{m} \times \mathbb{R}^{m} \rightarrow \mathbb{R}$ loss function, e.g. $L_{2}$-loss):
\[
\min_\theta \; \frac{1}{|\mathcal{T}|} \sum_{(x,\omega,\phi_{f}(x)[\omega])\in\mathcal{T}}
\mathcal{L}\bigl(g^{\mathcal{M},f}_{\theta}(x,\omega),\, \phi_{f}(x)[\omega]\bigr).
\]
After training, $g^{\mathcal{M},f}_{\theta}$ provides a smooth surrogate for discrete signatures: for $x,\omega\in \mathrm{V}_N$, $g^{\mathcal{M},f}_{\theta}(x,\omega) \approx \phi_{f}(x)(\omega)$. That completes pre-processing.

\textbf{Inference:} Trained $g^{\mathcal{M},f}_{\theta}$ provides a way to approximate $F(x,y)$ for any $x,y \in \mathcal{M}$ as follows:
\begin{equation}
\label{eq:approx_sum}
F(x,y) \approx \sum_{\omega \in V_{N}} g^{\mathcal{M},f}_{\theta}(x,\omega)g^{\mathcal{M},f}_{\theta}(y,\omega).   
\end{equation}
This directly leads to the following definition of $\phi_{\mathcal{M},F}$:
\begin{equation}
\phi_{\mathcal{M},F}(z) = \frac{1}{\sqrt{m\kappa}}\left(\frac{g_{\theta}^{\mathcal{M},f}(z, \omega_{1})}{\sqrt{p(\omega_{1})}},...,\frac{g_{\theta}^{\mathcal{M},f}(z, \omega_{m})}{\sqrt{p(\omega_{m})}}\right),    
\end{equation}
where either: (1) $p \equiv 1$, $m=N$, $\kappa=\frac{1}{m}$, $\{\omega_{1},...,\omega_{m}\}=\mathrm{V}_{N}$ or: (2) $p$ is the density function of the probability distribution $\mathcal{P}(\mathrm{V}_{N})$ on $\mathrm{V}_{N}$, $\kappa=1$ and $\omega_{1},...,\omega_{m} \overset{\mathrm{iid}}{\sim} \mathcal{P}(\mathrm{V}_{N})$. The latter variant leads to an unbiased estimation of the sum from Eq. \ref{eq:approx_sum} (which is exactly what the former variant is designed to compute). It provides $m$-dimensional RF-vectors, instead of $N$-dimensional and thus can be used for dimensionality reduction.

\subsubsection{Diffusion/heat kernels}
\label{sec:diffusion_kernel}

Let $(\mathcal{M},g)$ be a compact $d$--dimensional Riemannian manifold and let $\Delta_{\mathcal{M}}$ be its Laplace--Beltrami operator \citep{laplace-beltrami}. The diffusion/heat kernel on $\mathcal{M}$ is defined as follows, for a given parameter $t>0$:
\[
\mathrm{K}^{\mathrm{heat}}_t(x,y) = \exp\bigl(t\Delta_{\mathcal{M}}\bigr)(x,y), \qquad x,y \in \mathcal{M}.
\]

Let $\mathbf{L}_N$ denote the corresponding (rescaled) random--walk graph Laplacian on the discretized $\mathcal{M}$ (corresponding to the graph $\mathrm{G}_{N}=(\mathrm{V}_N=\{x_{1},...,x_{N}\}, \mathrm{E}_N, \mathbf{W}_N)$), chosen so that $\mathbf{L}_N$ converges to $-\Delta_{\mathcal{M}}$ in the limit, as $N\to\infty$. The discrete graph diffusion/heat kernel at time $t>0$ is defined as follows:
\begin{equation}
\label{eq:heat-kernel}
\mathrm{K}^{\mathrm{heat}}_{N,t}(x_i,x_j) := \bigl[\exp(-t \mathbf{L}_{N})\bigr](i,j).
\end{equation}
It converges pointwise to the manifold heat kernel $\mathrm{K}^{\mathrm{heat}}_t$ as the discretization is refined. It is easy to see that $\mathrm{K}^{\mathrm{heat}}_{N,t}$ adheres to the form from Eq. \ref{eq:main}.

The class of manifold heat kernels provides a good insight into challenges related to using those constructs in machine learning that this paper addresses.

Let $(\mathcal{M},g)$ be a compact $d$--dimensional Riemannian manifold.
Since $\mathcal{M}$ is compact, the Laplace--Beltrami operator $-\Delta_{\mathcal{M}}$ has a discrete,
nonnegative spectrum
\[
0 = \lambda_0 \le \lambda_1 \le \lambda_2 \le \cdots \uparrow \infty,
\]
with the corresponding orthonormal eigenfunctions
$\{\phi_k\}_{k=0}^\infty \subset L^2(\mathcal{M})$ satisfying
$-\Delta_{\mathcal{M}} \phi_k = \lambda_k \phi_k$.
The heat kernel then admits the following expansion:
\begin{equation}
  \mathrm{K}^{\mathrm{heat}}_{t}(x,y)
  \;=\;
  \sum_{k=0}^{\infty}
    e^{-\lambda_k t}\,
    \phi_k(x)\,\phi_k(y),
  \quad
  t>0.
\label{eq:manifold_heat_kernel_spectral}
\end{equation}
This expansion converges absolutely and uniformly for any $t>0$ on compact manifolds. For reference of this result, see Theorem 10.13 of \cite{grigoryan2009heat}.

A naive approach to computing $\mathrm{K}^{\mathrm{heat}}_{t}(x, y)$ is via the truncation of the infinite series from Eq. \ref{eq:manifold_heat_kernel_spectral}.
For a discretization $\{x_1,\dots,x_N\}\subset\mathcal{M}$, one can numerically approximate the first $M$ eigenpairs of $-\Delta_\mathcal{M}$—or of a consistent graph Laplacian approximation $\mathbf{L}_N$—and build the truncated kernel:
\begin{equation}\label{eq:truncated_spectral_kernel}
\widehat{\mathrm{K}}_{t}^{\mathrm{heat}}(x, y)
:= \sum_{k=0}^{M-1} e^{-\lambda_k t}\phi_k(x)\phi_k(y),
\end{equation}
which converges to (\ref{eq:manifold_heat_kernel_spectral}) as $M\to\infty$.
Thus even for this well-studied class of manifold bi-variate functions, the (approximate) computation of the kernel matrix $\mathbf{K}^{\mathrm{heat}}_{t}=[\mathrm{K}^{\mathrm{heat}}(x_i, x_j)]_{i,j=1,...,N} \in \mathbb{R}^{N \times N}$ requires nontrivial spectral calculations of time complexity cubic in $N$. Furthermore, as it is the case in general for kernel methods not leveraging RFs, downstream applications (involving multiplications with matrices $\mathbf{K}^{\mathrm{heat}}_{t}$ or taking their inverses) require time $\Omega(N^{2})$. In contrast, RF-based methods require sub-quadratic time (e.g. linear or log-linear for multiplications with kernel matrices, see: \citep{recycling}). Note that for particularly symmetric manifolds $\mathcal{M}$, such as $2$-dimensional spheres in $\mathbb{R}^{3}$, there exist more explicit formulae for $\mathrm{K}^{\mathrm{heat}}_{t}(x, y)$ (see: Appendix \ref{app:spheres}), but they still require spectral computations.

% \ap{Moved this sec here}
\textbf{Positivity \& Boundedness:} We reiterate that GRFs, by definition (see: Algorithm \ref{alg:constructing_rfs_for_k_alpha}), provide positive random features. Furthermore, our NNs (using GRFs as teachers) are forced to produce such features (via inference-time clamping). Their boundedness follows from the fact that, in our applications (see: Sec. \ref{sec:experiments}), the considered NNs are continuous functions defined on compact manifolds. As mentioned before, both properties (positivity \& boundedness) play an important role in getting accurate estimators.  

As we already discussed in this section, general bi-variate functions $F$ can be handled by MRFs via standard ML regression methods used to fit $F$ to formula from Eq. \ref{eq:main}. We also would like to emphasize that matrices $\mathbf{W}$ used there can be thought of as ``generalized'' weighted adjacency matrices, obtained from original weighted adjacency matrices via various transformations (e.g. degree-based normalization), as discussed in \citep{grfs-general}. Thus in this paper we focused on the (already rich) family of functions defined by Eq. \ref{eq:main}.

\subsection{The curious case of the Gaussian kernel} \label{sec:grf_euclidean}
MRFs are designed to efficiently apply bi-variate functions defined in non-Euclidean spaces. However, as a by-product of the presented methods, quite unexpectedly, we also obtain RF-methods for approximating regular Gaussian kernels defined in the Euclidean spaces, that were only recently introduced. We do it by applying GRFs on grid-graphs. We do think that this result is of independent interest, since it shows how combinatorial tools can lead to purely continuous ML constructions. We provide more details here.

Consider the
following setting in the $d$-dimensional space: the Euclidean hypercube $[0,1]^d$ equipped with the Gaussian (RBF) kernel of the following form:
\begin{equation}
  \mathrm{K}^{\mathrm{Gauss}}_{\sigma}(\mathbf{x},\mathbf{y})
  \;=\;
  \exp\!\left(-\frac{\|\mathbf{x}-\mathbf{y}\|^2}{2\sigma^2}\right),
\end{equation}
for a fixed $\sigma>0$ and $x,y\in[0,1]^d$. We discretize $[0,1]^d$ with the grid of  $n$ rows/columns (so that $N=n^d$):
\begin{equation}
  \mathrm{V}_n
  :=
  \Bigl\{
    (k_1/n,\dots,k_d/n)
    : k_j \in \{0,\dots,n-1\}
  \Bigr\},
\end{equation}
and side length $h_n := 1/n$. We connect each grid point to its $2d$ nearest neighbors using wrap–around at the boundary, obtaining a $d$–dimensional discrete grid graph with vertex set $\mathrm{V}_n$. Let $\mathbf{L}_{n}$ denote the rescaled random–walk graph Laplacian on this graph, constructed as in Theorem \ref{thm:convergenceoflaplacian}, so that $\mathbf{L}_{n}$ converges to the continuous Laplacian $-\Delta$ on the discrete grid with wrap-around (the negated Laplace-Beltrami operator) in the limit $n\to\infty$. Take kernel $\mathrm{K}^{\mathrm{heat}}_{N,t}$ from Eq. \ref{eq:heat-kernel} for $t=\frac{\sigma^{2}}{2}$. As discussed in Sec. \ref{sec:diffusion_kernel},  $\mathrm{K}^{\mathrm{heat}}_{N,t}$ converges pointwise to the heat kernel on $[0, 1]^{d}$, which turns out to be the Gaussian kernel $\mathrm{K}^{\mathrm{Gauss}}_{\sigma}$  (to be more specific, $\mathrm{K}^{\mathrm{heat}}_{N,t}$ converges to the periodized Gaussian kernel; in the non–periodic
Euclidean limit this reduces to the standard Gaussian kernel
$\mathrm{K}^{\mathrm{Gauss}}_{\sigma}$, see: Theorem~\ref{thm:discrete-diffusion-torus-convergence-gaussian}). 

For each grid point $x\in \mathrm{V}_n$ we now construct a
signature vector $\phi_f(x)\in\mathbb{R}_{\geq 0}^{\mathrm{V}_n}$ corresponding to kernel $\mathrm{K}^{\mathrm{heat}}_{N,t}$, as in Algorithm \ref{alg:constructing_rfs_for_k_alpha}. Those signature vectors are intrinsically related to the $g$-functions in the representations of Gaussian kernels applied in new RF-mechanisms to approximate them. By applying techniques from \citep{chefs}, we obtain the following positive and bounded features representation (for completeness, we also provide a proof, see: Theorem~\ref{thm:gaussian-feature-representation-kernel}):
\begin{equation}
  \mathrm{K}^{\mathrm{Gauss}}_{\sigma}(\mathbf{x},\mathbf{y})
  \;=\;
  \int_{\mathbb{R}^d}
      g_\sigma(\mathbf{x},\omega)\,
      g_\sigma(\mathbf{y},\omega)\,d\omega,   
      \quad
      g_\sigma(\mathbf{x},\omega)
      :=
      \Bigl(\frac{2}{\pi\sigma^2}\Bigr)^{\!d/4}
      \exp\Bigl(-\frac{\|\mathbf{x}-\omega\|^2}{\sigma^2}\Bigr).
\end{equation}
To relate the discrete signature vectors $\phi_f(x)$ to the continuous features $g_\sigma(\mathbf{x},*)$, we interpret the sum over $\omega\in \mathrm{V}_n$ as a Riemann approximation of the integral in the expression above.
We show in Appendix~\ref{app:gaussian-grid} after multiplying by the grid–dependent constant
$c_{d,\sigma,n}:=(2\pi\sigma^2)^{d/4}n^{d/2}$,
the rescaled vectors $\psi_f(x)
  :=c_{d,\sigma,n}\cdot\phi_f(x)$
converge to $g_\sigma(\mathbf{x},\omega)$ evaluated
on the grid points, as $n\to\infty$.
Equivalently, for any $\mathbf{x},\mathbf{y}\in[0,1]^d$ and the corresponding $x,y \in \mathrm{V}_{n}$ of the grid-graph, we have:
\begin{equation}
  \lim_{n\to\infty}
  \bigl\langle\psi_f(x),\psi_f(y)\bigr\rangle
  \;=\;
  \mathrm{K}^{\mathrm{Gauss}}_{\sigma}(\mathbf{x},\mathbf{y}).
\end{equation}
We conclude that the signature vectors on increasingly refined
grids realize manifold random features for the Gaussian
kernel on the corresponding continuous cube.

\section{Experiments}\label{sec:experiments}

We validate MRFs in several scenarios. 
As a warm-up and sanity check for the discrete-to-continuous construction, Appendix~\ref{app:gaussian-grid} evaluates MRFs on regular Euclidean grids, where the limiting target is the Gaussian/RBF kernel (see: Sec. \ref{sec:grf_euclidean}). As the grid is refined, the rescaled GRF signatures converge empirically to the analytic positive Gaussian feature map, and the induced inner products converge to the Gaussian kernel. This experiment verifies the mechanism predicted by the theory.
% in Sec. \ref{subsec:sv-convergence} we consider the Euclidean setting with Gaussian kernels (see: Sec. \ref{sec:grf_euclidean}).
We then transition to non-Euclidean spaces, the main target of MRFs, testing MRFs on various 2D surfaces in 3D space in Sec. \ref{sec:manifolds-synthetic}, downstream interpolation tasks on meshes (representing rigid and deformable objects) in Sec. \ref{subsec:vertex_normals}, and then on attention using manifold-valued tokens as well as higher-dimensional space and beyond in Sec. \ref{sec:attention}. Note that MRFs require an offline preprocessing stage: graph construction, random-walk supervision, and training of \(g_\theta\). This cost is worthwhile when the learned features are reused for many out-of-sample evaluations or downstream inference steps, but may not be justified for one-off small problems. Experiments were run using a TPU v6e and an A100 GPU (for neural network training only).

\subsection{2D Surfaces in 3D Spaces}
\label{sec:manifolds-synthetic}
In Table~\ref{table:main}, we evaluate the approximation of $g$-functions and diffusion/heat kernels with MRFs on four embedded non-linear geometries: a sphere, an ellipsoid, a M\"obius strip, and a torus (or donut; detailed results for this shape are in the Appendix: Sec. \ref{sec:torus}). More details of this procedure are in Appendix \ref{app:surfaces}.

\begin{table*}[t]
\centering
\caption{
Summary of MRF accuracy, intrinsic-kernel reconstruction, and inference speed on each manifold.
The columns \(R^2\), Mean RE, and MSE evaluate the learned surrogate \(g_\theta(x,\cdot)\) against GRF signature values on validation nodes.
The kernel columns report relative Frobenius error for reconstructing the intrinsic manifold heat kernel: ``Best Ambient RF'' is the best tuned ambient Euclidean RF baseline among RFF/ORF/PRF/PORF, allowing up to \(4\times\) more features.
Timing is measured on kernel evaluation for 512 out-of-sample manifold points; ``Spectral'' denotes the baseline spectral decomposition method.
Full ambient RF results are in Appendix~\ref{app:ambient-rf}, Table~\ref{tab:tuned-sweep-best}.
}
\label{table:main}
\resizebox{\textwidth}{!}{%
\begin{tabular}{lccccccccc}
\toprule
\textbf{Manifold}
& \multicolumn{3}{c}{\textbf{Surrogate validation}}
& \multicolumn{2}{c}{\textbf{Kernel rel. Frob. error} \(\downarrow\)}
& \multicolumn{3}{c}{\textbf{Kernel evaluation time}} \\
\cmidrule(lr){2-4}
\cmidrule(lr){5-6}
\cmidrule(lr){7-9}
& \(R^2 \uparrow\)
& Mean RE \(\downarrow\)
& MSE \(\downarrow\)
& MRF \(\downarrow\)
& Best Ambient RF \(\downarrow\)
& MRFs Time \(\downarrow\)
& Spectral Time \(\downarrow\)
& Speed-up \(\uparrow\) \\
\midrule
Sphere
& 0.997
& 0.23
& 26.9
& \textbf{0.067}
& 0.101
& \(0.066\pm 3.3\times10^{-5}\)
& \(3.41\pm 3.1\times10^{-5}\)
& \textbf{55.2} \\

Ellipsoid
& 0.995
& 0.37
& 46.7
& \textbf{0.047}
& 0.291
& \(0.058 \pm 2.0\times10^{-5}\)
& \(3.59\pm 1.7\times10^{-5}\)
& \textbf{61.2} \\

M\"obius strip
& 0.997
& 0.11
& 17.9
& \textbf{0.035}
& 0.489
& \(0.059 \pm 2.3\times10^{-5}\)
& \(2.14\pm 1.7\times10^{-5}\)
& \textbf{37.2} \\

Torus
& 0.983
& 0.44
& 115.7
& \textbf{0.062}
& 0.417
& \(0.061 \pm 2.9\times10^{-5}\)
& \(3.62\pm 3.1\times10^{-5}\)
& \textbf{58.1} \\
\bottomrule
\end{tabular}
}
\end{table*}

\begin{figure*}[h]
    \includegraphics[width=0.33\linewidth]{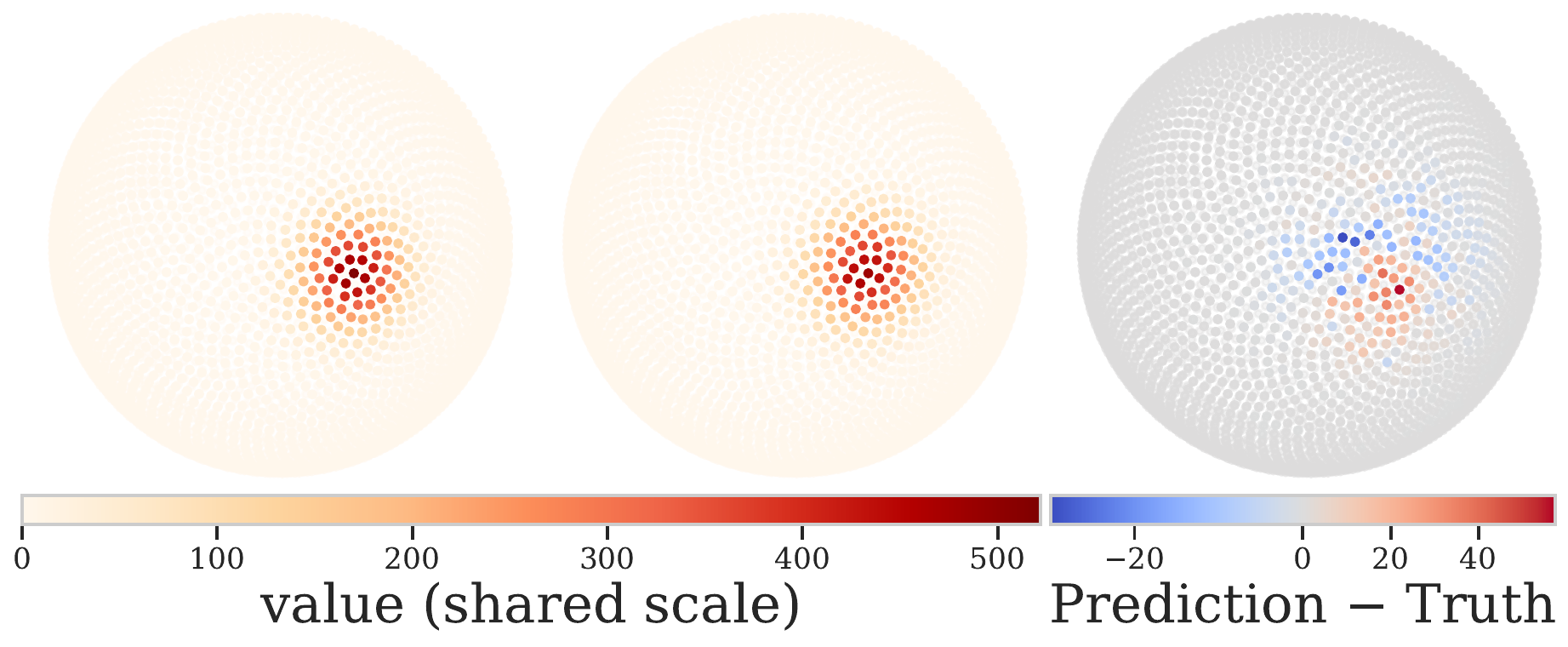}
    \includegraphics[width=0.33\linewidth]{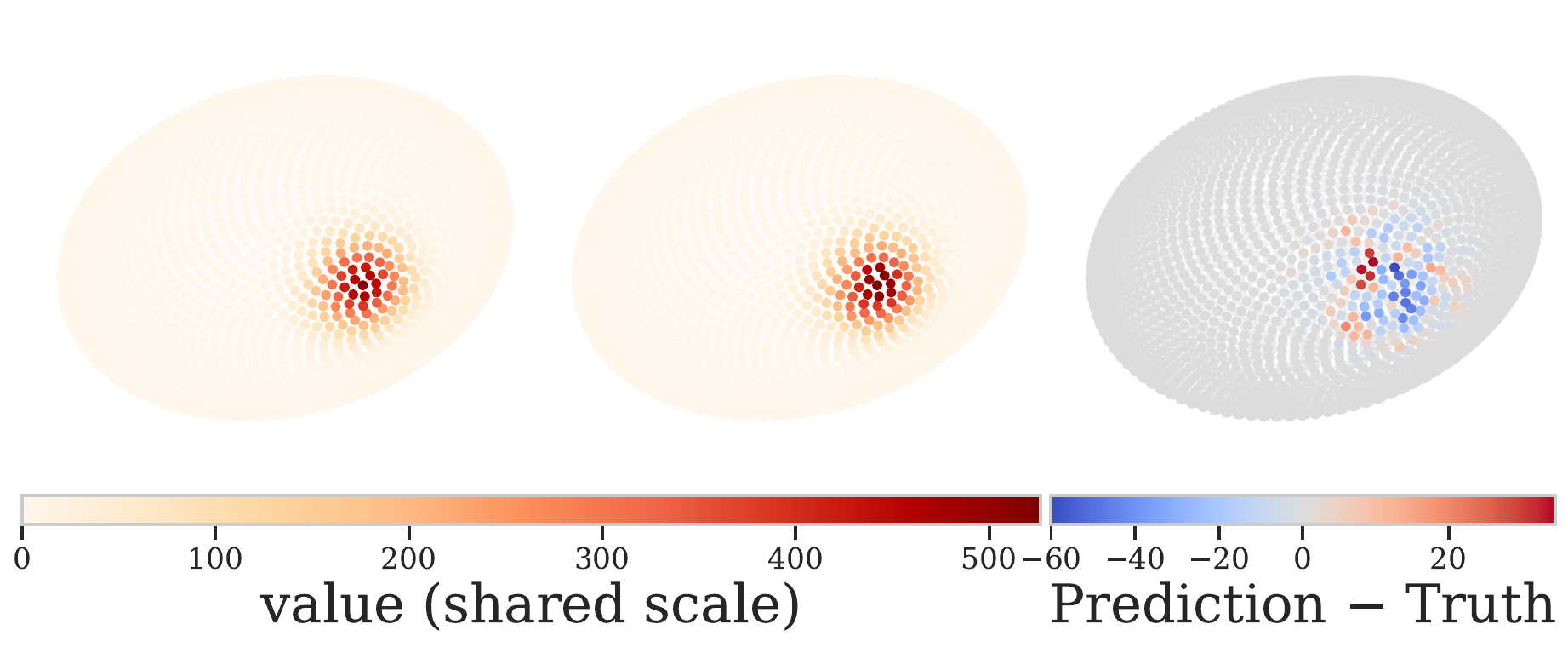}
    \includegraphics[width=0.33\linewidth]{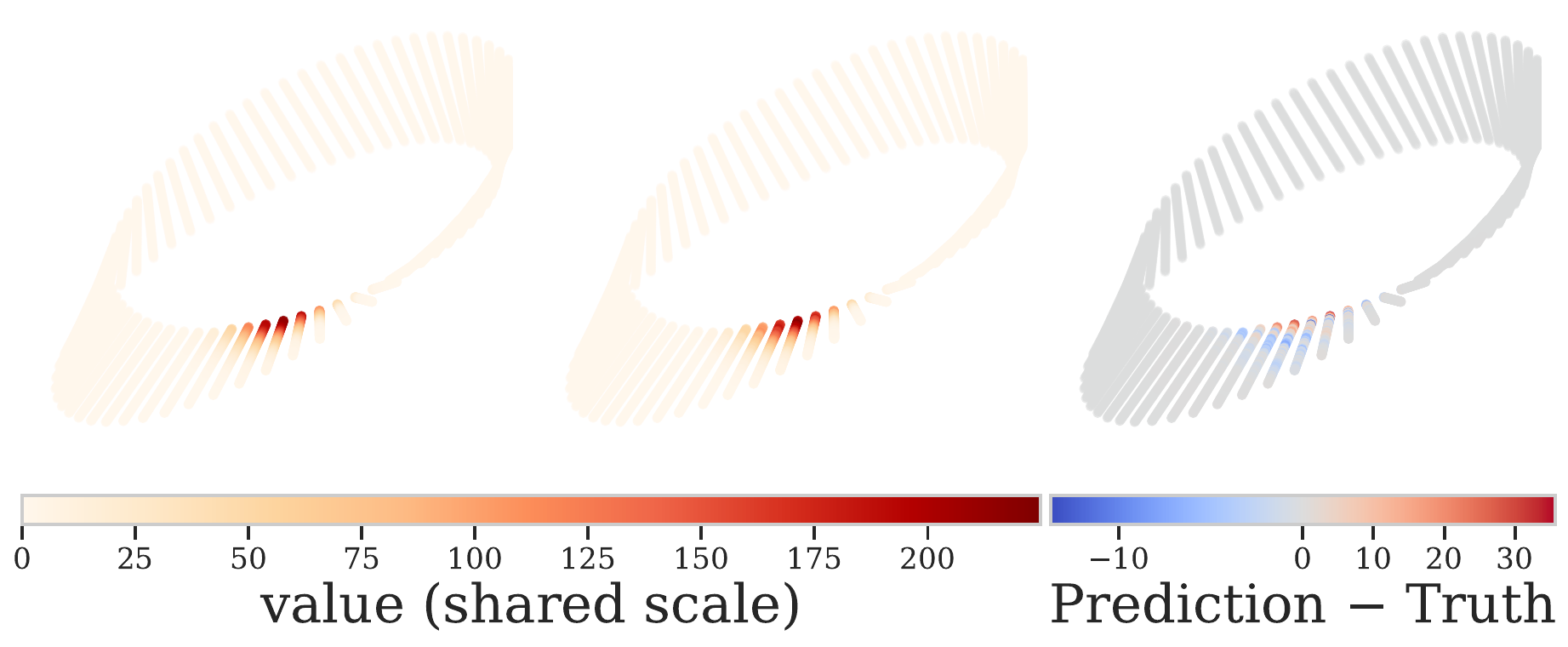}
    \includegraphics[width=0.33\linewidth]{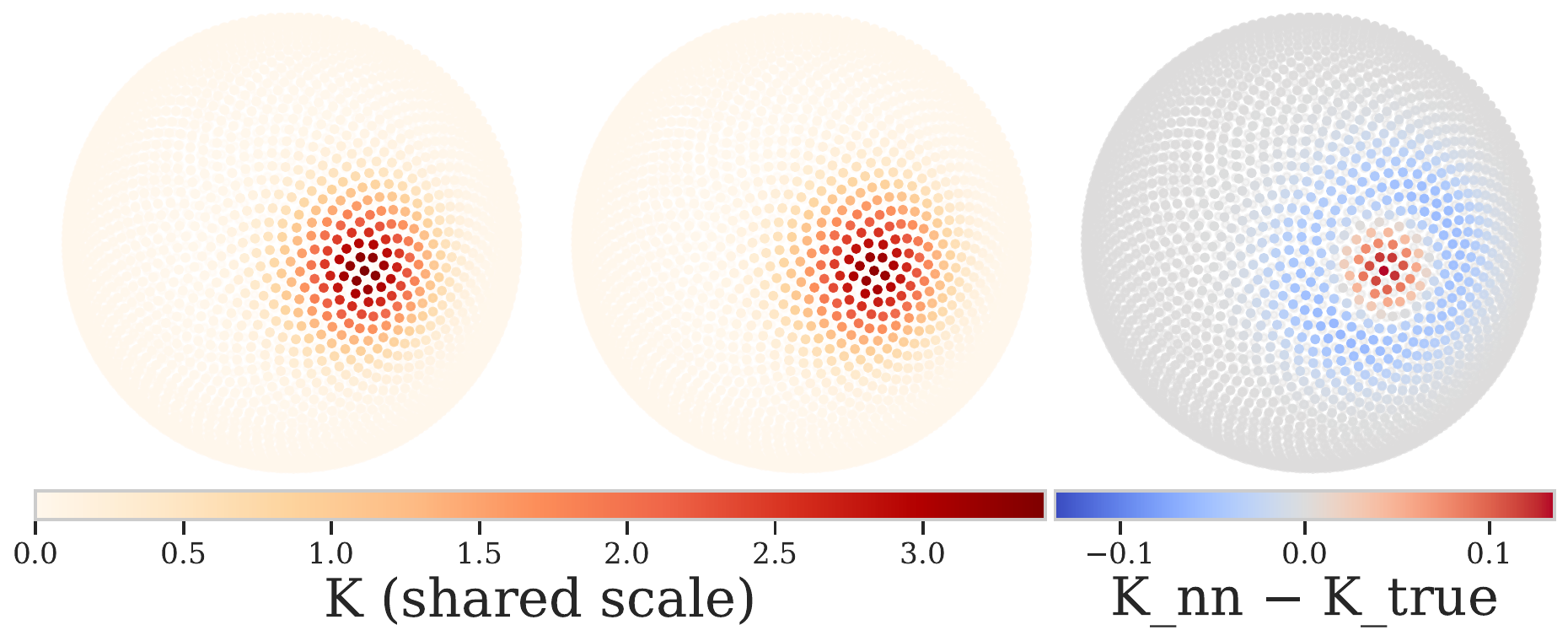}
    \includegraphics[width=0.33\linewidth]{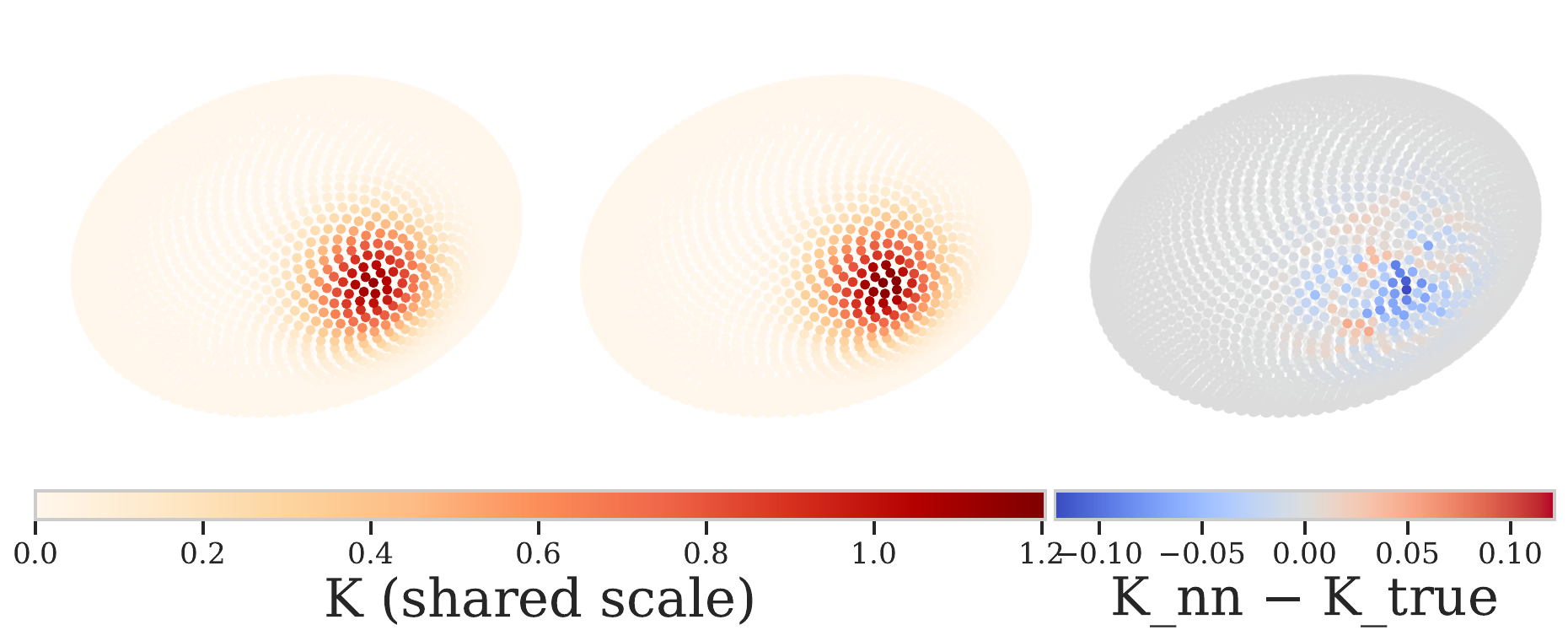}
    \includegraphics[width=0.33\linewidth]{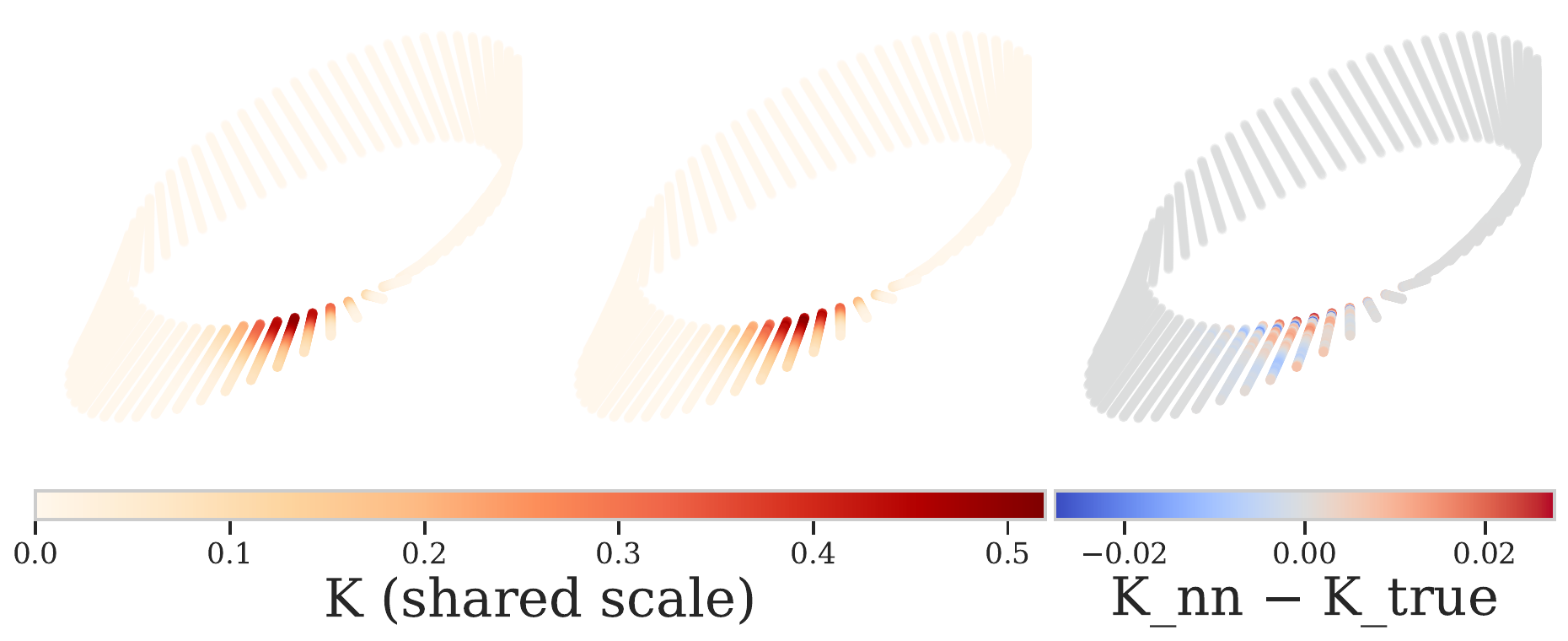}
    \caption{{Qualitative field and heat kernel comparisons for a sphere (left), an ellipsoid (middle), and a M\"obius strip (right). Example validation start (top row): predicted \(g_\theta(\mathbf{x},\cdot)\), ground truth \(\phi_f(x)[\cdot]\), and residual \(g_\theta(\mathbf{x},\cdot)-\phi_f(x)[\cdot]\) on the geometry. Reference graph heat kernel (bottom row) \(\mathrm{K}^{\mathrm{heat}}(\mathbf{x},\cdot)\) compared to Frobenius-aligned induced kernel $\widetilde{\mathrm{K}}_{\theta}(\mathbf{x},\cdot)=\widetilde{\mathrm{K}}_{\mathrm{NN}}(\mathbf{x},\cdot)$, and their difference, for a representative start point.}}
    \label{fig:simulated_manifolds}
    \vspace{-2.5mm}
\end{figure*}

For all considered manifolds, the 3D field plots confirm that the learned predictor reproduces the localized, structured heat profile (for a sphere, radially symmetric) and that residuals are concentrated near the peak (Fig.~\ref{fig:simulated_manifolds}). After Frobenius alignment, the induced kernel matrix $\widetilde{\mathbf{K}}_{\theta}$ matches the analytic kernel matrix, with remaining error again localized near the center. Validation scatter plots show strong monotonic agreement with the ground truth values, with the largest deviations at the highest-intensity region near the source point where the field is most sharply peaked. 
% Detailed results for the torus are in the Appendix \ref{sec:manifolds-synthetic-extra}; the same conclusions as for other shapes). All the results are summarized in Table \ref{table:main}. 
Importantly, MRFs offer significant speedups over the brute-force approaches, from $\mathbf{37x}$ to $\mathbf{61x}+$ in inference.

A natural question is whether manifold-aware features are necessary, or whether one can simply ignore the intrinsic geometry and apply standard Euclidean random features to the ambient coordinates. We therefore compare MRFs against Random Fourier Features (RFF;~\cite{rf-1}), Orthogonal Random Features (ORF;~\cite{ortrfs}), Positive Random Features (PRF;~\cite{performers}),
and Positive Orthogonal Random Features (PORF) (with the best ambient RF baseline after tuning appearing in Table~\ref{table:main}). Each baseline approximates the ambient Gaussian kernel $\exp(-\|\mathbf{x}-\mathbf{y}\|^2/2\sigma^2)$ in \(\mathbb R^3\), with $\sigma$ tuned by grid search, while the target remains the intrinsic manifold heat kernel. The full comparison, including all RF methods and feature budgets \(m\in\{4000,8000,16000\}\), is reported in Appendix~\ref{app:ambient-rf}, Table~\ref{tab:tuned-sweep-best}. MRFs consistently outperform these other methods, showing that the gains  come from respecting the intrinsic geometry of the manifold.

\subsection{Interpolation on Meshes} \label{subsec:vertex_normals}
% \dl{MOVE DETAILS TO APPENDIX}
As a downstream application for MRFs, we choose the task of interpolating various fields defined on meshes representing rigid and deformable objects. 

\textbf{Vertex normal prediction:} Here we focus on predicting normal vectors in the vertices of the mesh representing rigid body. Given a mesh $\mathrm{G}$ with vertex set $\mathrm{V}$ and vertex positions $\{\mathbf{x}_i\in\mathbb{R}^3\}_{i\in \mathrm{V}}$, let $\{\mathbf{n}_i\in\mathbb{R}^3\}_{i\in \mathrm{V}}$ denote the unit vertex normals computed from incident faces. We randomly mask a subset $M\subseteq \mathrm{V}$ of size $|M|=0.8|\mathrm{V}|$ (80\% missing normals), set masked normals to zero, and predict normals at masked vertices using kernel-based field integration on $\mathrm{G}$. Using meshes from the Thingi10k dataset \cite{thingi10k_dataset}, we compare our MRF approach against the baseline method of explicitly forming the full heat kernel matrix via spectral computations (see Appendix \ref{app:vertex_normal_prediction_setup} for more details). 
% speedup in interpolation time, defined as $\text{speedup} = t_{\text{baseline}}/t_{\text{MRFs}}$.
%     The spectral baseline becomes rapidly expensive due to $O(|\tilde V|^3)$ preprocessing and $O(|\tilde V|^2)$ application/memory,
%     whereas MRF inference remains substantially faster once features are precomputed.

% \ap{added ambient RF stuff}
Fig.~\ref{fig:vertex_normal_prediction_results} shows that the full-kernel construction cost grows rapidly with \(|\mathrm{V}|\), whereas MRF preprocessing remains comparatively stable because the walk and training budgets are fixed. 
In our largest completed run \((|\mathrm{V}|=47{,}024)\), full-kernel construction took \(2936\) seconds, while MRF data generation and training took \(171\) seconds; full-kernel interpolation took \(0.42\) seconds, while MRF interpolation took \(0.45\) seconds. This preprocessing gap widens for larger meshes because explicitly storing \(K\in\mathbb{R}^{|\mathrm{V}|\times|\mathrm{V}|}\) requires \(O(|\mathrm{V}|^2)\) memory, making full kernel formation impractical beyond \(10^5\) vertices. At the same time, Table~\ref{tab:vertex_velocity_random_feature_comparison} shows that MRFs achieve the highest average cosine similarity and the best interpolation-time scaling among RF methods. 
Thus, MRFs retain the accuracy of intrinsic heat-kernel interpolation while scaling more favorably than explicit full-kernel construction and outperforming ambient Euclidean RF features.
\begin{table}[t]
\centering
\caption{Random-feature comparison for vertex normals and velocity prediction. Values are averaged across mesh size $|V|$. Growth exponents $\alpha$ are fitted from interpolation time $t_{\mathrm{interp}}(N) \propto N^{\alpha}$. For each task and each fixed feature budget $m$, RFF/ORF/PRF/PORF use the best tuned $\sigma$.}
\resizebox{0.85\textwidth}{!}{%%
\begin{tabular}{llccccccccc}
\toprule
 & & & \multicolumn{4}{c}{$m=256$} & \multicolumn{4}{c}{$m=1024$} \\
\cmidrule(lr){4-7}\cmidrule(lr){8-11}
\textbf{Application} & \textbf{Metric} & \textbf{MRF} & \textbf{RFF} & \textbf{ORF} & \textbf{PRF} & \textbf{PORF} & \textbf{RFF} & \textbf{ORF} & \textbf{PRF} & \textbf{PORF} \\
\midrule
\multirow{2}{*}{\textbf{Vertex norm. prediction}} & cosine similarity $\uparrow$ & $\textbf{0.85}$ & $0.71$ & $0.72$ & $0.40$ & $0.41$ & $0.78$ & $0.78$ & $0.47$ & $0.43$ \\
% Vertex normal interp. time (ms) $\downarrow$ & $\textbf{-}$ & $3.6$ & $3.3$ & $4.2$ & $3.3$ & $11$ & $10$ & $13$ & $12$ \\
& growth $\alpha$ $\downarrow$ & $\textbf{0.50}$ & $1.06$ & $1.01$ & $1.09$ & $0.97$ & $1.10$ & $1.07$ & $1.13$ & $1.14$ \\
\midrule
\multirow{2}{*}{\textbf{Velocity prediction}} & cosine similarity $\uparrow$ & $\textbf{0.93}$ & $0.65$ & $0.66$ & $0.68$ & $0.69$ & $0.66$ & $0.79$ & $0.70$ & $0.70$ \\
% Velocity interp. time (ms) $\downarrow$ & $\textbf{4.5}$ & $10$ & $10$ & $10$ & $11$ & $37$ & $37$ & $38$ & $38$ \\
& growth $\alpha$ $\downarrow$ & $\textbf{0.71}$ & $1.18$ & $1.20$ & $1.19$ & $1.19$ & $1.03$ & $1.02$ & $1.04$ & $1.03$ \\
\bottomrule
\end{tabular}
}
\label{tab:vertex_velocity_random_feature_comparison}
\end{table}
\begin{figure}[t]
    \centering
    \includegraphics[width=0.24\linewidth]{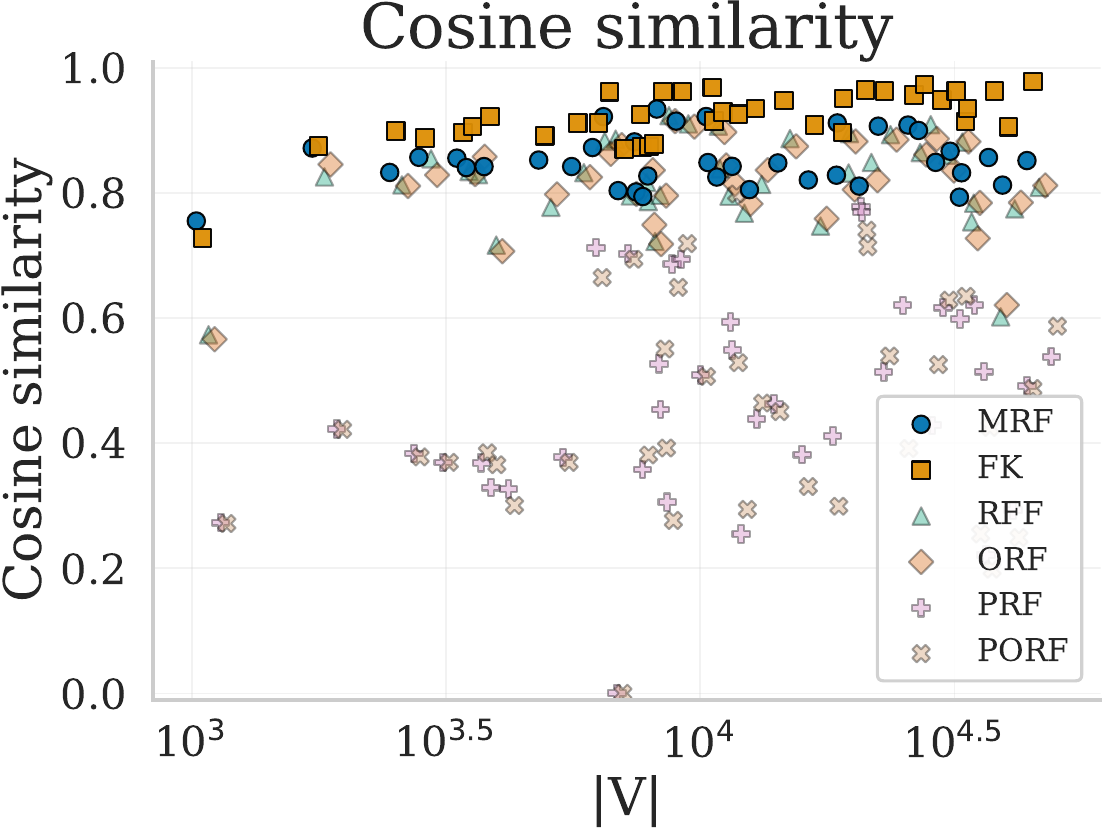}
    \includegraphics[width=0.24\linewidth]{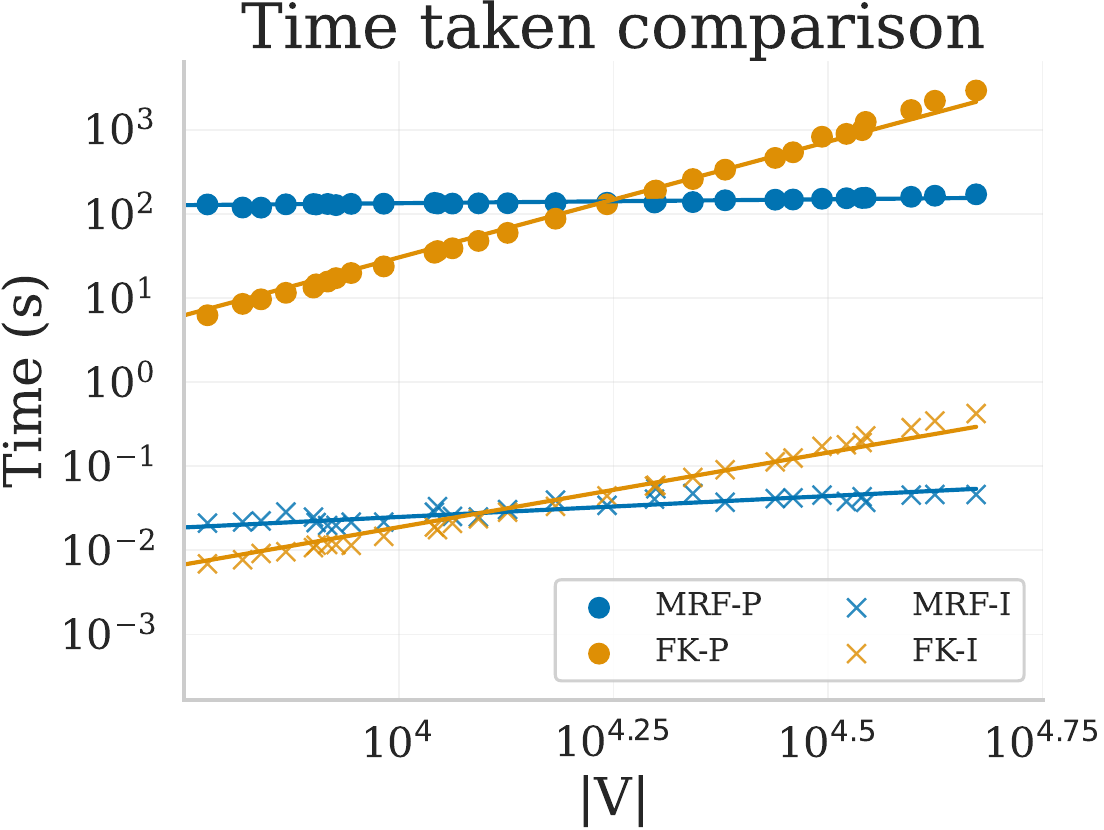}
    \includegraphics[width=0.24\linewidth]{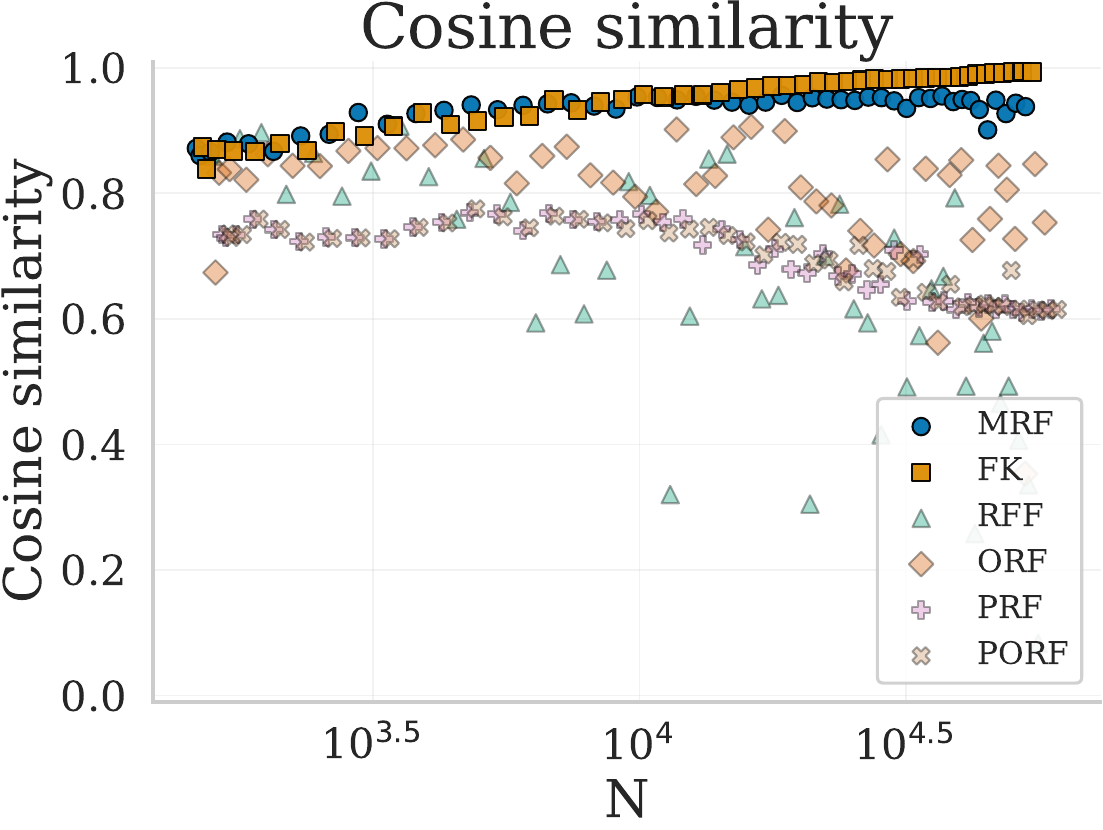}
    \includegraphics[width=0.24\linewidth]{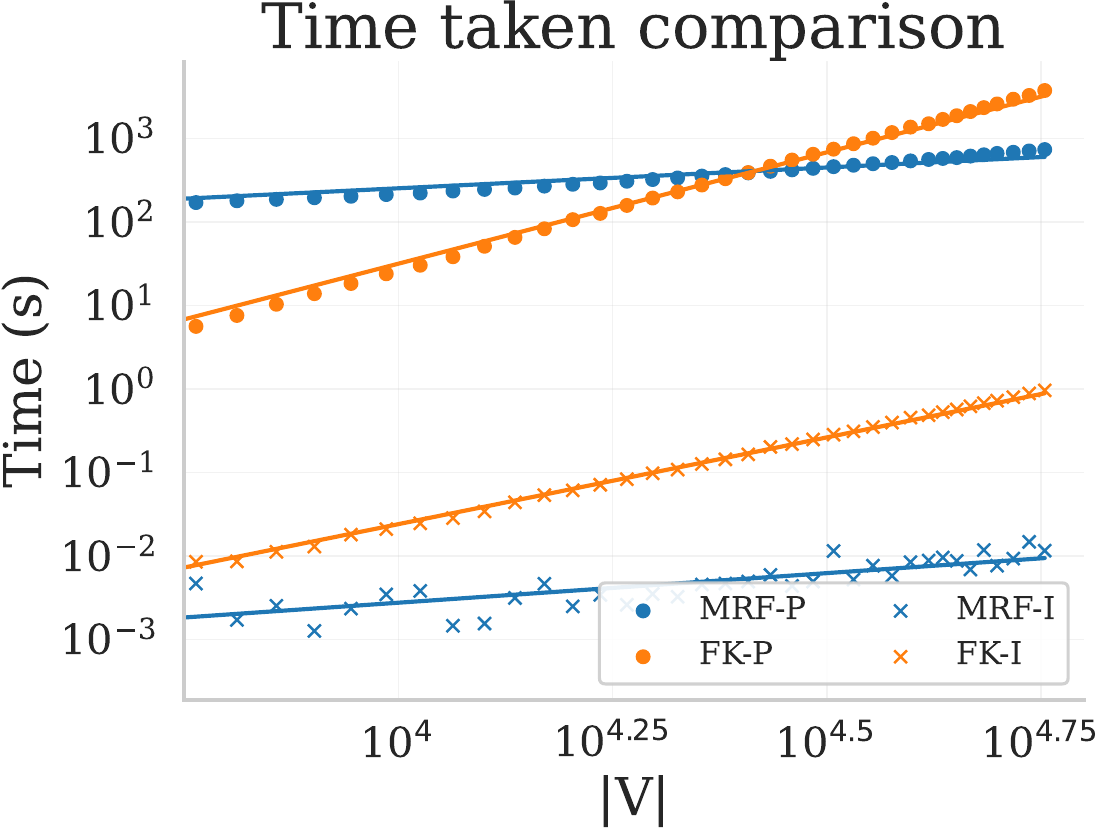}
    \caption{{Comparison across methods for vertex normal prediction on \texttt{Thingi10K} (left) and velocity prediction on \texttt{flag\_simple} (right); preprocessing and interpolation times denoted by `-P' and `-I'.}}
\label{fig:vertex_normal_prediction_results}
\vspace{-3mm}
\end{figure}

\textbf{Velocity prediction:} We also evaluate interpolation on the \texttt{flag\_simple} dataset \cite{flag_simple_dataset} where the signal of interest is the per-node velocity field. For a given mesh with node set $\mathrm{V}$ and node positions $\{\mathbf{x}_i\in\mathbb{R}^3\}_{i\in \mathrm{V}}$, let $\{\mathbf{u}_i\in\mathbb{R}^3\}_{i\in \mathrm{V}}$ denote the ground truth velocity vectors at a fixed time step. We randomly mask a small subset $M\subseteq \mathrm{V}$ with $|M|=0.05|\mathrm{V}|$ (5\% missing velocities), set $u_i=0$ for $i\in M$, and predict velocities at masked nodes using heat-kernel field integration (setup as in \cite{pmlr-v202-choromanski23b}). The \texttt{flag\_simple} meshes have only $\approx$1.5k vertices, which can be too small to expose inference-time scaling trends. To probe larger problem sizes, we construct a denser node set $\mathrm{\widetilde V}$ by sampling additional points on triangle faces, using area-weighted barycentric sampling. The mesh surface is represented by a set of triangular faces.
Concretely, for a target size $|\mathrm{\widetilde V}|=n_{\text{dense}}$, we draw $n_{\text{dense}}-|\mathrm{V}|$ faces i.i.d. (with probability proportional to triangle area). For each sampled face, we draw barycentric weights $(\beta_0,\beta_1,\beta_2)$ (uniform on the $2$-simplex) and define the sampled point's position as a convex combination of the triangle's three incident vertex positions.
At each time step, we assign a velocity to each sampled point by applying the same barycentric weights to the velocities of the corresponding face vertices, yielding a dense velocity field $\mathbf{U}\in\mathbb{R}^{|\widetilde V|\times 3}$.

We then build a sparse graph on $\mathrm{\widetilde V}$ (using the reference geometry from the first evaluated frame) via a $k$NN construction in $\mathbb{R}^3$
and define a symmetric, normalized affinity matrix $\mathbf{W}_f := \mathbf{D}^{-1/2}\mathbf{WD}^{-1/2}$ where $\mathbf{W}$ has Gaussian weights
$W_{ij}=\exp(-\|\mathbf{x}_i-\mathbf{x}_j\|_2^2/\sigma^2)$ and $\mathbf{D}$ is the degree matrix of $\mathbf{W}$.
The corresponding heat kernel is \(\mathbf{K} = \exp(\tau \mathbf{W}_f)\,.\)
% \ap{dunno if need more than this for RF vs ambient, I added this sentence: }
Table~\ref{tab:vertex_velocity_random_feature_comparison} shows that MRFs again achieve the best average cosine similarity and the best scaling exponent among RF baselines. Figure~\ref{fig:vertex_normal_prediction_results} further shows that the dense spectral baseline exhibits the expected steep growth in preprocessing time, while MRF preprocessing remains comparatively stable across the tested range since training/feature budgets are fixed per discretization. At inference time, MRFs consistently outperform the baseline, with speedups that exceed $\mathbf{10x}$ as $n_{\text{dense}}$ increases.
   
\subsection{Attention on Manifold-Valued Tokens and Manifolds Beyond 2D Surfaces}
\label{sec:attention}
\textbf{Linear-attention:} A key motivation for random features is linearizing attention, replacing a nonlinear query-key similarity with an inner product of feature maps. Standard RF-based linear attention assumes Euclidean geometry. To test whether MRFs provide a useful geometry-aware alternative, we isolate the geometric kernel component in a masked reconstruction task on manifold-valued tokens sampled from a M\"obius strip. The reference operator is an intrinsic graph-geodesic heat kernel, and each method replaces this operator by either an exact kernel approximation or a low-rank feature approximation. Figure~\ref{fig:attention-and-brodatz-experiment-results} shows that MRFs obtain the best MSE for every source count and the best relative \(\ell_2\) error on five of six source counts over other RF methods and the exact Gaussian kernel in ambient Euclidean space (ambient RBF); full results with additional baselines are in Appendix~\ref{app:attention}, Table~\ref{tab:attention-full}. These results suggest MRFs can serve as geometry-aware random features for linearized attention when token embeddings lie on, or are constrained by, a non-Euclidean geometry.

\begin{figure}[t]
    \centering
    \includegraphics[width=0.49\linewidth]{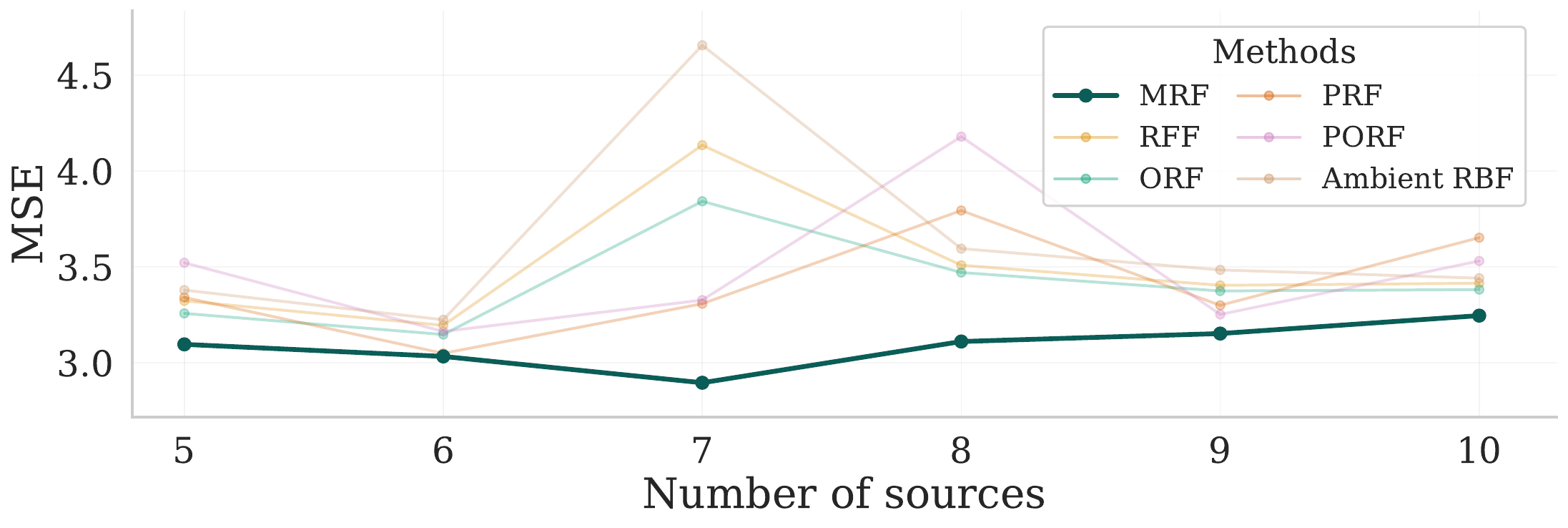}
    \includegraphics[width=0.49\linewidth]{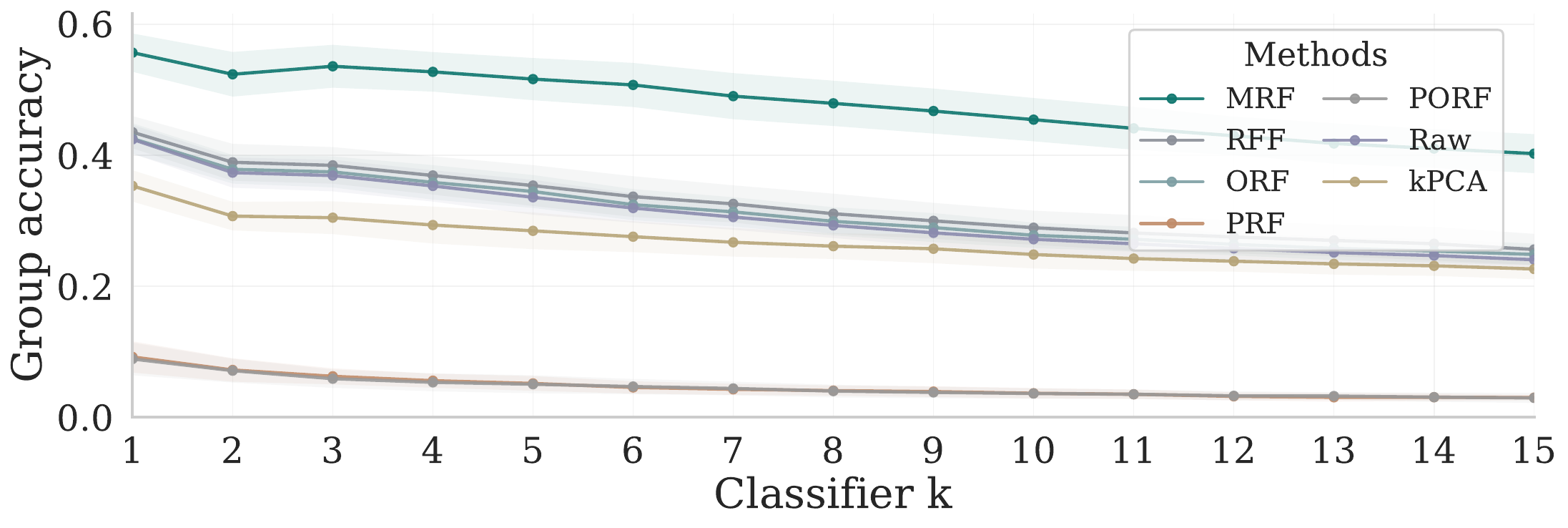}
    \caption{\textbf{Left}: Masked reconstruction on M\"obius-strip tokens. \textbf{Right}: Performance on Brodatz dataset across $k$ and over multiple seeds.}
    \label{fig:attention-and-brodatz-experiment-results}
    \vspace{-3mm}
\end{figure}

\textbf{Higher-dimensional descriptor manifolds:}
To test whether MRFs remain useful beyond embedded two-dimensional surfaces, we evaluate them on a Brodatz transformed-view classification task using 15-dimensional log-SPD covariance descriptors \citep{brodatz1966textures,Varma03}. Following the procedure in the paper, the graph is built on an unlabeled descriptor cloud generated from transformed texture windows, GRF signatures are computed on this graph, and the learned MRF features are used for $k$NN classification with group-vote evaluation. We compare against using raw log-SPD descriptors (Raw), a Gaussian-kernel PCA embedding ($k$PCA), and other RF methods. We report group accuracy, where predictions from 50 windows belonging to the same texture-angle block are aggregated by majority vote. As shown in Fig.~\ref{fig:attention-and-brodatz-experiment-results} (and Table~\ref{tab:brodatz-transform-group}, Appendix~\ref{app:brodatz}, where there are further experiment details), MRFs achieve the best-tuned accuracy, with mean accuracy (over multiple seeds) of $55.7\%$, compared with the second highest accuracy of $43.5\%$ achieved by RFF.

\textbf{Non-compact manifolds:} Compactness is natural in many practical use cases---in applied settings, the manifolds of interest (molecular surfaces, mesh surfaces, etc.) are often compact---and the present theory and main experiments focus on compact manifolds. This assumption is used to obtain finite discretizations, bounded learned features, and clean heat-kernel convergence statements. While out of the scope of this work, we provide some insight on how MRFs can be applied to non-compact manifolds (see Appendix~\ref{app:non-compact-manifolds} for details). In particular, we perform a preliminary experiment on hyperbolic 3-space $\mathbb{H}^3$, a canonical non-compact manifold with exponential volume growth, and show that MRFs extend to non-compact manifolds in a \textit{local-on-compacts} sense: when evaluation is restricted to a fixed bounded region, the MRF approximation improves as the finite support used to construct the features is enlarged. This does not constitute a global uniform guarantee on $\mathbb{H}^3$, but suggests a natural exhaustion-based route for extending MRFs beyond the compact setting. Developing the corresponding theory, and testing it across broader classes of non-compact manifolds, is an interesting direction for future work.

% That being said, compactness is natural in many practical use cases—in applied settings, the manifolds of interest (molecular surfaces, mesh surfaces, etc.) are often compact—further work would provide full theoretical extension or a uniform global guarantee, in addition to preliminary evidence that the framework is compatible with standard exhaustion techniques used in analysis on non-compact manifolds.

% \dl{SUMMARIZE RESULTS AND MOVE REST OF DETAIL TO APPENDIX:} Non-compact manifolds require additional care because a finite graph can only approximate bounded regions. Rather than attempting a global discretization, we adopt an exhaustion-based (local-on-compacts) approach: we approximate the heat kernel using finite constructions on balls $B_R(o)$, and evaluate accuracy on a fixed inner region $B_r(o)$. Empirically, we observe that approximation error on $B_r(o)$ decreases as outer radii $R$ increases. We evaluate the approximation on a fixed inner ball with $r=1.5$, using 128 query nodes sampled from that inner region and comparing against the analytic heat kernel at time $t=0.5$. GRF and MRF walk budgets are increased with $R$.
\vspace{-1mm}
\section{Conclusion}
\vspace{-1mm}
\label{sec:conclusion}

We have presented a new paradigm of \textit{Manifold Random Features} (MRFs), designed for scalable computations involving bi-variate functions operating on manifolds. MRFs are built with the use of Graph Random Features (GRFs), operating on the discretized manifolds. GRFs are used as teachers for the supervised learning of the continuous fields encoding the MRF-mechanisms. We provide strong empirical validation of MRFs, complemented with rigorous mathematical analysis.
As a byproduct of our analysis, we show deep connection between combinatorial concepts of random walks on graphs representing discretized manifolds and continuous kernel functions on these manifolds, in particular by re-discovering recently proposed new class of estimators approximating Gaussian kernels with positive and bounded random features.

\vspace{-1mm}
\section*{Impact Statement \& Limitations}
\vspace{-1mm}
This paper presents work whose goal is to advance the field of Machine Learning. There are no immediate societal impacts of our work, as we focus on a methodological framework for approximating functions on manifolds. We do believe that this work might lead to the discovery of new mechanisms of random features operating in continuous spaces and encoded as neural networks that are trained with supervision provided by Graph Random Features teachers. Extending MRFs beyond compact manifolds is left to future work as noted above at the end of Sec.~\ref{sec:attention}.

%%%%%%%%%%%%%%%%%%%%%%%%%%%%%%%%%%%%%%%%%%%%%%%%%%%%%%%%%%%%%%%%%%%%%%%%%%%%%%%
% References
%%%%%%%%%%%%%%%%%%%%%%%%%%%%%%%%%%%%%%%%%%%%%%%%%%%%%%%%%%%%%%%%%%%%%%%%%%%%%%%
\bibliography{references}

@inproceedings{rf-1,
  author       = {Ali Rahimi and
                  Benjamin Recht},
  editor       = {John C. Platt and
                  Daphne Koller and
                  Yoram Singer and
                  Sam T. Roweis},
  title        = {Random Features for Large-Scale Kernel Machines},
  booktitle    = {Advances in Neural Information Processing Systems 20, Proceedings
                  of the Twenty-First Annual Conference on Neural Information Processing
                  Systems, Vancouver, British Columbia, Canada, December 3-6, 2007},
  pages        = {1177--1184},
  publisher    = {Curran Associates, Inc.},
  year         = {2007},
  bibsource    = {dblp computer science bibliography, https://dblp.org}
}

@unpublished{laplace-beltrami,
	title= {Laplace-Beltrami: The Swiss Army Knife of Geometry Processing},
	author = {Justin Solomon and Keenan Crane and Etienne Vouga},
	year = {2014},
	note= {Symposium on Geometry Processing},
	URL= {http://ddg.cs.columbia.edu/SGP2014/LaplaceBeltrami.pdf},
}

@article{jlt-main,
  author       = {Sanjoy Dasgupta and
                  Anupam Gupta},
  title        = {An elementary proof of a theorem of Johnson and Lindenstrauss},
  journal      = {Random Struct. Algorithms},
  volume       = {22},
  number       = {1},
  pages        = {60--65},
  year         = {2003},
  url          = {https://doi.org/10.1002/rsa.10073},
  doi          = {10.1002/RSA.10073},
  bibsource    = {dblp computer science bibliography, https://dblp.org}
}

@inproceedings{rf-2,
  author       = {Ali Rahimi and
                  Benjamin Recht},
  editor       = {Daphne Koller and
                  Dale Schuurmans and
                  Yoshua Bengio and
                  L{\'{e}}on Bottou},
  title        = {Weighted Sums of Random Kitchen Sinks: Replacing minimization with
                  randomization in learning},
  booktitle    = {Advances in Neural Information Processing Systems 21, Proceedings
                  of the Twenty-Second Annual Conference on Neural Information Processing
                  Systems, Vancouver, British Columbia, Canada, December 8-11, 2008},
  pages        = {1313--1320},
  publisher    = {Curran Associates, Inc.},
  year         = {2008},
  bibsource    = {dblp computer science bibliography, https://dblp.org}
}

@inproceedings{rf-3,
  author       = {Quoc V. Le and
                  Tam{\'{a}}s Sarl{\'{o}}s and
                  Alexander J. Smola},
  title        = {Fastfood - Computing Hilbert Space Expansions in loglinear time},
  booktitle    = {Proceedings of the 30th International Conference on Machine Learning,
                  {ICML} 2013, Atlanta, GA, USA, 16-21 June 2013},
  series       = {{JMLR} Workshop and Conference Proceedings},
  volume       = {28},
  pages        = {244--252},
  publisher    = {JMLR.org},
  year         = {2013},
  url          = {http://proceedings.mlr.press/v28/le13.html},
  bibsource    = {dblp computer science bibliography, https://dblp.org}
}

@inproceedings{rf-4,
  author       = {Jiyan Yang and
                  Vikas Sindhwani and
                  Haim Avron and
                  Michael W. Mahoney},
  title        = {Quasi-Monte Carlo Feature Maps for Shift-Invariant Kernels},
  booktitle    = {Proceedings of the 31th International Conference on Machine Learning,
                  {ICML} 2014, Beijing, China, 21-26 June 2014},
  series       = {{JMLR} Workshop and Conference Proceedings},
  volume       = {32},
  pages        = {485--493},
  publisher    = {JMLR.org},
  year         = {2014},
  url          = {http://proceedings.mlr.press/v32/yangb14.html},
  bibsource    = {dblp computer science bibliography, https://dblp.org}
}

@inproceedings{grfs-general,
  author       = {Isaac Reid and
                  Krzysztof Marcin Choromanski and
                  Eli Berger and
                  Adrian Weller},
  title        = {General Graph Random Features},
  booktitle    = {The Twelfth International Conference on Learning Representations,
                  {ICLR} 2024, Vienna, Austria, May 7-11, 2024},
  publisher    = {OpenReview.net},
  year         = {2024},
  url          = {https://openreview.net/forum?id=viftsX50Rt},
  bibsource    = {dblp computer science bibliography, https://dblp.org}
}

@article{grfs-plusplus,
  author       = {Krzysztof Choromanski and
                  Avinava Dubey and
                  Arijit Sehanobish and
                  Isaac Reid},
  title        = {Computationally-efficient Graph Modeling with Refined Graph Random
                  Features},
  journal      = {CoRR},
  volume       = {abs/2510.07716},
  year         = {2025},
  url          = {https://doi.org/10.48550/arXiv.2510.07716},
  doi          = {10.48550/ARXIV.2510.07716},
  eprinttype    = {arXiv},
  eprint       = {2510.07716},
  bibsource    = {dblp computer science bibliography, https://dblp.org}
}

@article{rf-6,
  author       = {Fanghui Liu and
                  Xiaolin Huang and
                  Yudong Chen and
                  Johan A. K. Suykens},
  title        = {Random Features for Kernel Approximation: {A} Survey on Algorithms,
                  Theory, and Beyond},
  journal      = {{IEEE} Trans. Pattern Anal. Mach. Intell.},
  volume       = {44},
  number       = {10},
  pages        = {7128--7148},
  year         = {2022},
  url          = {https://doi.org/10.1109/TPAMI.2021.3097011},
  doi          = {10.1109/TPAMI.2021.3097011},
  bibsource    = {dblp computer science bibliography, https://dblp.org}
}

@article{rf-7,
  author       = {Ninh Pham and
                  Rasmus Pagh},
  title        = {Tensor Sketch: Fast and Scalable Polynomial Kernel Approximation},
  journal      = {CoRR},
  volume       = {abs/2505.08146},
  year         = {2025},
  url          = {https://doi.org/10.48550/arXiv.2505.08146},
  doi          = {10.48550/ARXIV.2505.08146},
  eprinttype    = {arXiv},
  eprint       = {2505.08146},
  bibsource    = {dblp computer science bibliography, https://dblp.org}
}

@inproceedings{gcmcsampling,
  author       = {Mark Rowland and
                  Krzysztof Choromanski and
                  Fran{\c{c}}ois Chalus and
                  Aldo Pacchiano and
                  Tam{\'{a}}s Sarl{\'{o}}s and
                  Richard E. Turner and
                  Adrian Weller},
  editor       = {Samy Bengio and
                  Hanna M. Wallach and
                  Hugo Larochelle and
                  Kristen Grauman and
                  Nicol{\`{o}} Cesa{-}Bianchi and
                  Roman Garnett},
  title        = {Geometrically Coupled Monte Carlo Sampling},
  booktitle    = {Advances in Neural Information Processing Systems 31: Annual Conference
                  on Neural Information Processing Systems 2018, NeurIPS 2018, December
                  3-8, 2018, Montr{\'{e}}al, Canada},
  pages        = {195--205},
  year         = {2018},
  bibsource    = {dblp computer science bibliography, https://dblp.org}
}

@inproceedings{slim-performers,
  author       = {Valerii Likhosherstov and
                  Krzysztof Marcin Choromanski and
                  Jared Quincy Davis and
                  Xingyou Song and
                  Adrian Weller},
  editor       = {Marc'Aurelio Ranzato and
                  Alina Beygelzimer and
                  Yann N. Dauphin and
                  Percy Liang and
                  Jennifer Wortman Vaughan},
  title        = {Sub-Linear Memory: How to Make Performers SLiM},
  booktitle    = {Advances in Neural Information Processing Systems 34: Annual Conference
                  on Neural Information Processing Systems 2021, NeurIPS 2021, December
                  6-14, 2021, virtual},
  pages        = {6707--6719},
  year         = {2021},
  bibsource    = {dblp computer science bibliography, https://dblp.org}
}

@inproceedings{rrws,
  author       = {Isaac Reid and
                  Eli Berger and
                  Krzysztof Marcin Choromanski and
                  Adrian Weller},
  title        = {Repelling Random Walks},
  booktitle    = {The Twelfth International Conference on Learning Representations,
                  {ICLR} 2024, Vienna, Austria, May 7-11, 2024},
  publisher    = {OpenReview.net},
  year         = {2024},
  url          = {https://openreview.net/forum?id=31IOmrnoP4},
  bibsource    = {dblp computer science bibliography, https://dblp.org}
}

@inproceedings{hedgehog,
  author       = {Michael Zhang and
                  Kush Bhatia and
                  Hermann Kumbong and
                  Christopher R{\'{e}}},
  title        = {The Hedgehog {\&} the Porcupine: Expressive Linear Attentions
                  with Softmax Mimicry},
  booktitle    = {The Twelfth International Conference on Learning Representations,
                  {ICLR} 2024, Vienna, Austria, May 7-11, 2024},
  publisher    = {OpenReview.net},
  year         = {2024},
  url          = {https://openreview.net/forum?id=4g02l2N2Nx},
  bibsource    = {dblp computer science bibliography, https://dblp.org}
}

@inproceedings{uortmc,
  author       = {Krzysztof Choromanski and
                  Mark Rowland and
                  Wenyu Chen and
                  Adrian Weller},
  editor       = {Kamalika Chaudhuri and
                  Ruslan Salakhutdinov},
  title        = {Unifying Orthogonal Monte Carlo Methods},
  booktitle    = {Proceedings of the 36th International Conference on Machine Learning,
                  {ICML} 2019, 9-15 June 2019, Long Beach, California, {USA}},
  series       = {Proceedings of Machine Learning Research},
  volume       = {97},
  pages        = {1203--1212},
  publisher    = {{PMLR}},
  year         = {2019},
  url          = {http://proceedings.mlr.press/v97/choromanski19a.html},
  bibsource    = {dblp computer science bibliography, https://dblp.org}
}

@inproceedings{hrfs,
  author       = {Krzysztof Marcin Choromanski and
                  Han Lin and
                  Haoxian Chen and
                  Arijit Sehanobish and
                  Yuanzhe Ma and
                  Deepali Jain and
                  Jake Varley and
                  Andy Zeng and
                  Michael S. Ryoo and
                  Valerii Likhosherstov and
                  Dmitry Kalashnikov and
                  Vikas Sindhwani and
                  Adrian Weller},
  title        = {Hybrid Random Features},
  booktitle    = {The Tenth International Conference on Learning Representations, {ICLR}
                  2022, Virtual Event, April 25-29, 2022},
  publisher    = {OpenReview.net},
  year         = {2022},
  url          = {https://openreview.net/forum?id=EMigfE6ZeS},
  bibsource    = {dblp computer science bibliography, https://dblp.org}
}

@article{unified-rf-attention,
  author       = {Duke Nguyen and
                  Aditya Joshi and
                  Flora D. Salim},
  title        = {Spectraformer: {A} Unified Random Feature Framework for Transformer},
  journal      = {CoRR},
  volume       = {abs/2405.15310},
  year         = {2024},
  url          = {https://doi.org/10.48550/arXiv.2405.15310},
  doi          = {10.48550/ARXIV.2405.15310},
  eprinttype    = {arXiv},
  eprint       = {2405.15310},
  bibsource    = {dblp computer science bibliography, https://dblp.org}
}

@misc{kim2024magnituder,
      title={Magnituder Layers for Implicit Neural Representations in 3D}, 
      author={Sang Min Kim and Byeongchan Kim and Arijit Sehanobish and Krzysztof Choromanski and Dongseok Shim and Avinava Dubey and Min-hwan Oh},
      year={2024},
      eprint={2410.09771},
      archivePrefix={arXiv},
      primaryClass={cs.CV},
      url={https://arxiv.org/abs/2410.09771}, 
}

@inproceedings{rf-attention,
  author       = {Hao Peng and
                  Nikolaos Pappas and
                  Dani Yogatama and
                  Roy Schwartz and
                  Noah A. Smith and
                  Lingpeng Kong},
  title        = {Random Feature Attention},
  booktitle    = {9th International Conference on Learning Representations, {ICLR} 2021,
                  Virtual Event, Austria, May 3-7, 2021},
  publisher    = {OpenReview.net},
  year         = {2021},
  url          = {https://openreview.net/forum?id=QtTKTdVrFBB},
  bibsource    = {dblp computer science bibliography, https://dblp.org}
}

@inproceedings{rfsvm,
  author       = {Yitong Sun and
                  Anna C. Gilbert and
                  Ambuj Tewari},
  editor       = {Samy Bengio and
                  Hanna M. Wallach and
                  Hugo Larochelle and
                  Kristen Grauman and
                  Nicol{\`{o}} Cesa{-}Bianchi and
                  Roman Garnett},
  title        = {But How Does It Work in Theory? Linear {SVM} with Random Features},
  booktitle    = {Advances in Neural Information Processing Systems 31: Annual Conference
                  on Neural Information Processing Systems 2018, NeurIPS 2018, December
                  3-8, 2018, Montr{\'{e}}al, Canada},
  pages        = {3383--3392},
  year         = {2018},
  bibsource    = {dblp computer science bibliography, https://dblp.org}
}

@inproceedings{performers,
  author       = {Krzysztof Marcin Choromanski and
                  Valerii Likhosherstov and
                  David Dohan and
                  Xingyou Song and
                  Andreea Gane and
                  Tam{\'{a}}s Sarl{\'{o}}s and
                  Peter Hawkins and
                  Jared Quincy Davis and
                  Afroz Mohiuddin and
                  Lukasz Kaiser and
                  David Benjamin Belanger and
                  Lucy J. Colwell and
                  Adrian Weller},
  title        = {Rethinking Attention with Performers},
  booktitle    = {9th International Conference on Learning Representations, {ICLR} 2021,
                  Virtual Event, Austria, May 3-7, 2021},
  publisher    = {OpenReview.net},
  year         = {2021},
  url          = {https://openreview.net/forum?id=Ua6zuk0WRH},
  bibsource    = {dblp computer science bibliography, https://dblp.org}
}

@article{linear-attention,
  author       = {Yutao Sun and
                  Zhenyu Li and
                  Yike Zhang and
                  Tengyu Pan and
                  Bowen Dong and
                  Yuyi Guo and
                  Jianyong Wang},
  title        = {Efficient Attention Mechanisms for Large Language Models: {A} Survey},
  journal      = {CoRR},
  volume       = {abs/2507.19595},
  year         = {2025},
  url          = {https://doi.org/10.48550/arXiv.2507.19595},
  doi          = {10.48550/ARXIV.2507.19595},
  eprinttype    = {arXiv},
  eprint       = {2507.19595},
  bibsource    = {dblp computer science bibliography, https://dblp.org}
}

@inproceedings{recycling,
  author       = {Krzysztof Choromanski and
                  Vikas Sindhwani},
  editor       = {Maria{-}Florina Balcan and
                  Kilian Q. Weinberger},
  title        = {Recycling Randomness with Structure for Sublinear time Kernel Expansions},
  booktitle    = {Proceedings of the 33nd International Conference on Machine Learning,
                  {ICML} 2016, New York City, NY, USA, June 19-24, 2016},
  series       = {{JMLR} Workshop and Conference Proceedings},
  volume       = {48},
  pages        = {2502--2510},
  publisher    = {JMLR.org},
  year         = {2016},
  url          = {http://proceedings.mlr.press/v48/choromanski16.html},
  bibsource    = {dblp computer science bibliography, https://dblp.org}
}

@inproceedings{SNNK,
  author       = {Arijit Sehanobish and
                  Krzysztof Marcin Choromanski and
                  Yunfan Zhao and
                  Kumar Avinava Dubey and
                  Valerii Likhosherstov},
  title        = {Scalable Neural Network Kernels},
  booktitle    = {The Twelfth International Conference on Learning Representations,
                  {ICLR} 2024, Vienna, Austria, May 7-11, 2024},
  publisher    = {OpenReview.net},
  year         = {2024},
  url          = {https://openreview.net/forum?id=4iPw1klFWa},
  bibsource    = {dblp computer science bibliography, https://dblp.org}
}

@inproceedings{timit,
  author       = {Po{-}Sen Huang and
                  Haim Avron and
                  Tara N. Sainath and
                  Vikas Sindhwani and
                  Bhuvana Ramabhadran},
  title        = {Kernel methods match Deep Neural Networks on {TIMIT}},
  booktitle    = {{IEEE} International Conference on Acoustics, Speech and Signal Processing,
                  {ICASSP} 2014, Florence, Italy, May 4-9, 2014},
  pages        = {205--209},
  publisher    = {{IEEE}},
  year         = {2014},
  url          = {https://doi.org/10.1109/ICASSP.2014.6853587},
  doi          = {10.1109/ICASSP.2014.6853587},
  bibsource    = {dblp computer science bibliography, https://dblp.org}
}

@inproceedings{ortrfs,
  author       = {Felix X. Yu and
                  Ananda Theertha Suresh and
                  Krzysztof Marcin Choromanski and
                  Daniel N. Holtmann{-}Rice and
                  Sanjiv Kumar},
  editor       = {Daniel D. Lee and
                  Masashi Sugiyama and
                  Ulrike von Luxburg and
                  Isabelle Guyon and
                  Roman Garnett},
  title        = {Orthogonal Random Features},
  booktitle    = {Advances in Neural Information Processing Systems 29: Annual Conference
                  on Neural Information Processing Systems 2016, December 5-10, 2016,
                  Barcelona, Spain},
  pages        = {1975--1983},
  year         = {2016},
  bibsource    = {dblp computer science bibliography, https://dblp.org}
}

@inproceedings{unreas,
  author       = {Krzysztof Marcin Choromanski and
                  Mark Rowland and
                  Adrian Weller},
  editor       = {Isabelle Guyon and
                  Ulrike von Luxburg and
                  Samy Bengio and
                  Hanna M. Wallach and
                  Rob Fergus and
                  S. V. N. Vishwanathan and
                  Roman Garnett},
  title        = {The Unreasonable Effectiveness of Structured Random Orthogonal Embeddings},
  booktitle    = {Advances in Neural Information Processing Systems 30: Annual Conference
                  on Neural Information Processing Systems 2017, December 4-9, 2017,
                  Long Beach, CA, {USA}},
  pages        = {219--228},
  year         = {2017},
  bibsource    = {dblp computer science bibliography, https://dblp.org}
}

@article{grfs-gps,
  author       = {Matthew Zhang and
                  Jihao Andreas Lin and
                  Krzysztof Choromanski and
                  Adrian Weller and
                  Richard E. Turner and
                  Isaac Reid},
  title        = {Graph Random Features for Scalable Gaussian Processes},
  journal      = {CoRR},
  volume       = {abs/2509.03691},
  year         = {2025},
  url          = {https://doi.org/10.48550/arXiv.2509.03691},
  doi          = {10.48550/ARXIV.2509.03691},
  eprinttype    = {arXiv},
  eprint       = {2509.03691},
  bibsource    = {dblp computer science bibliography, https://dblp.org}
}

@inproceedings{de-rfs,
  author       = {Valerii Likhosherstov and
                  Krzysztof Marcin Choromanski and
                  Kumar Avinava Dubey and
                  Frederick Liu and
                  Tam{\'{a}}s Sarl{\'{o}}s and
                  Adrian Weller},
  editor       = {Alice Oh and
                  Tristan Naumann and
                  Amir Globerson and
                  Kate Saenko and
                  Moritz Hardt and
                  Sergey Levine},
  title        = {Dense-Exponential Random Features: Sharp Positive Estimators of the
                  Gaussian Kernel},
  booktitle    = {Advances in Neural Information Processing Systems 36: Annual Conference
                  on Neural Information Processing Systems 2023, NeurIPS 2023, New Orleans,
                  LA, USA, December 10 - 16, 2023},
  year         = {2023},
  bibsource    = {dblp computer science bibliography, https://dblp.org}
}

@inproceedings{grfs-1,
  author       = {Krzysztof Marcin Choromanski},
  editor       = {Andreas Krause and
                  Emma Brunskill and
                  Kyunghyun Cho and
                  Barbara Engelhardt and
                  Sivan Sabato and
                  Jonathan Scarlett},
  title        = {Taming graph kernels with random features},
  booktitle    = {International Conference on Machine Learning, {ICML} 2023, 23-29 July
                  2023, Honolulu, Hawaii, {USA}},
  series       = {Proceedings of Machine Learning Research},
  volume       = {202},
  pages        = {5964--5977},
  publisher    = {{PMLR}},
  year         = {2023},
  url          = {https://proceedings.mlr.press/v202/choromanski23a.html},
  bibsource    = {dblp computer science bibliography, https://dblp.org}
}

@inproceedings{chefs,
  author       = {Valerii Likhosherstov and
                  Krzysztof Marcin Choromanski and
                  Kumar Avinava Dubey and
                  Frederick Liu and
                  Tam{\'{a}}s Sarl{\'{o}}s and
                  Adrian Weller},
  editor       = {Sanmi Koyejo and
                  S. Mohamed and
                  A. Agarwal and
                  Danielle Belgrave and
                  K. Cho and
                  A. Oh},
  title        = {Chefs' Random Tables: Non-Trigonometric Random Features},
  booktitle    = {Advances in Neural Information Processing Systems 35: Annual Conference
                  on Neural Information Processing Systems 2022, NeurIPS 2022, New Orleans,
                  LA, USA, November 28 - December 9, 2022},
  year         = {2022},
  bibsource    = {dblp computer science bibliography, https://dblp.org}
}

@book{grigoryan2009heat,
  title     = {Heat Kernel and Analysis on Manifolds},
  author    = {Grigor'yan, Alexander},
  series    = {AMS/IP Studies in Advanced Mathematics},
  volume    = {47},
  year      = {2009},
  publisher = {American Mathematical Society},
}

@article{Zhao_2018,
   title={Exact Heat Kernel on a Hypersphere and Its Applications in Kernel SVM},
   volume={4},
   ISSN={2297-4687},
   url={http://dx.doi.org/10.3389/fams.2018.00001},
   DOI={10.3389/fams.2018.00001},
   journal={Frontiers in Applied Mathematics and Statistics},
   publisher={Frontiers Media SA},
   author={Zhao, Chenchao and Song, Jun S.},
   year={2018},
   month=jan }

@article{thingi10k_dataset,
  author       = {Qingnan Zhou and
                  Alec Jacobson},
  title        = {Thingi10K: {A} Dataset of 10, 000 3D-Printing Models},
  journal      = {CoRR},
  volume       = {abs/1605.04797},
  year         = {2016},
  url          = {http://arxiv.org/abs/1605.04797},
  eprinttype    = {arXiv},
  eprint       = {1605.04797},
  bibsource    = {dblp computer science bibliography, https://dblp.org}
}

@article{flag_simple_dataset,
  author       = {Tobias Pfaff and
                  Meire Fortunato and
                  Alvaro Sanchez{-}Gonzalez and
                  Peter W. Battaglia},
  title        = {Learning Mesh-Based Simulation with Graph Networks},
  journal      = {CoRR},
  volume       = {abs/2010.03409},
  year         = {2020},
  url          = {https://arxiv.org/abs/2010.03409},
  eprinttype    = {arXiv},
  eprint       = {2010.03409},
  bibsource    = {dblp computer science bibliography, https://dblp.org}
}

@InProceedings{pmlr-v202-choromanski23b,
  title = 	 {Efficient Graph Field Integrators Meet Point Clouds},
  author =       {Choromanski, Krzysztof Marcin and Sehanobish, Arijit and Lin, Han and Zhao, Yunfan and Berger, Eli and Parshakova, Tetiana and Pan, Alvin and Watkins, David and Zhang, Tianyi and Likhosherstov, Valerii and Basu Roy Chowdhury, Somnath and Dubey, Kumar Avinava and Jain, Deepali and Sarlos, Tamas and Chaturvedi, Snigdha and Weller, Adrian},
  booktitle = 	 {Proceedings of the 40th International Conference on Machine Learning},
  pages = 	 {5978--6004},
  year = 	 {2023},
  editor = 	 {Krause, Andreas and Brunskill, Emma and Cho, Kyunghyun and Engelhardt, Barbara and Sabato, Sivan and Scarlett, Jonathan},
  volume = 	 {202},
  series = 	 {Proceedings of Machine Learning Research},
  month = 	 {23--29 Jul},
  publisher =    {PMLR},
  url = 	 {https://proceedings.mlr.press/v202/choromanski23b.html}
}

@misc{brandolini2010quadraturerulesdistributionpoints,
      title={Quadrature rules and distribution of points on manifolds}, 
      author={Luca Brandolini and Christine Choirat and Leonardo Colzani and Giacomo Gigante and Raffaello Seri and Giancarlo Travaglini},
      year={2010},
      eprint={1012.5409},
      archivePrefix={arXiv},
      primaryClass={math.NT},
      url={https://arxiv.org/abs/1012.5409}, 
}

@misc{cadavid2022approachevaluatingcertaintrigonometric,
      title={On an approach for evaluating certain trigonometric character sums using the discrete time heat kernel}, 
      author={Carlos A. Cadavid and Paulina Hoyos and Jay Jorgenson and Lejla Smajlović and Juan D. Vélez},
      year={2022},
      eprint={2201.07878},
      archivePrefix={arXiv},
      primaryClass={math.CO},
      url={https://arxiv.org/abs/2201.07878}, 
}

@InProceedings{Varma03,
  author       = "Manik Varma and Andrew Zisserman",
  title        = "Texture Classification: {A}re Filter Banks Necessary?",
  booktitle    = "IEEE Conference on Computer Vision and Pattern Recognition",
  volume       = "2",
  pages        = "691--698",
  year         = "2003",
}

@book{brodatz1966textures,
  author    = {Brodatz, Phil},
  title     = {Textures: A Photographic Album for Artists and Designers},
  publisher = {Dover Publications},
  address   = {New York},
  year      = {1966}
}
\bibliographystyle{plainnat}

%%%%%%%%%%%%%%%%%%%%%%%%%%%%%%%%%%%%%%%%%%%%%%%%%%%%%%%%%%%%%%%%%%%%%%%%%%%%%%%
% Appendix
%%%%%%%%%%%%%%%%%%%%%%%%%%%%%%%%%%%%%%%%%%%%%%%%%%%%%%%%%%%%%%%%%%%%%%%%%%%%%%%
\newpage
\appendix
\section{Appendix}
\label{sec:appendix}

This appendix first reports auxiliary GRF/MRF implementation details and the Gaussian-grid theory supporting the Euclidean limiting case. Then we report supplementary experimental protocols and full results in the same order as the main experimental section for consistency.

\subsection{Additional GRF/MRF preliminaries}
\label{app:preliminaries}

We collect two auxiliary items used throughout the paper: an illustration of manifold heat-kernel fields and the regular GRF random-walk estimator.

\subsubsection{Heat-kernel illustration}

\begin{figure}[h]
    \centering
    \vspace{-3mm}
    \includegraphics[width=0.5\linewidth]{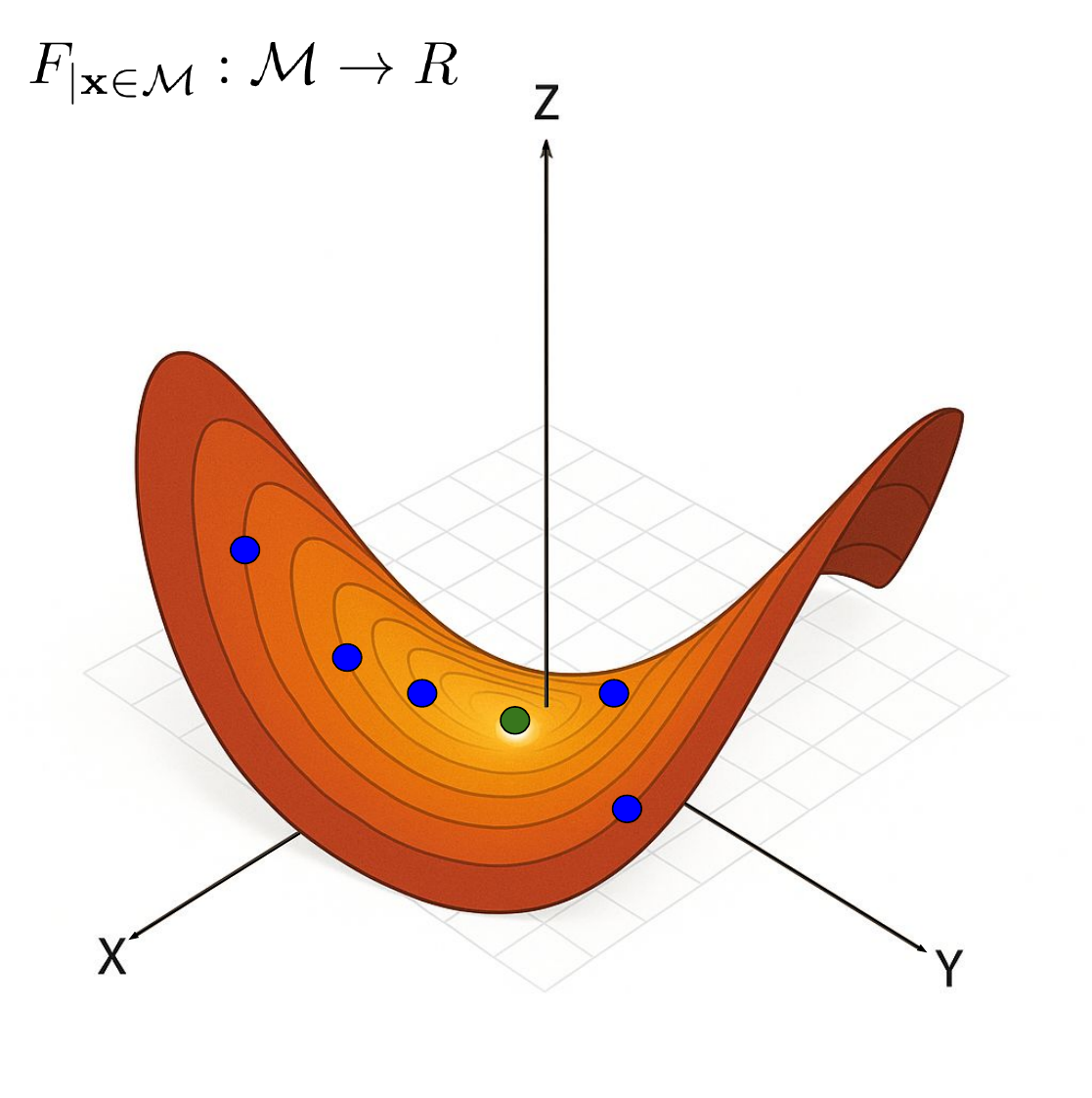}
    \vspace{-6mm}
    \caption{{An illustration of the heat kernel $F:\mathcal{M} \times \mathcal{M} \rightarrow \mathbb{R}$ defined on the saddle-like surface $\mathcal{M}$ with a distinguished point $\mathbf{x} \in \mathcal{M}$ marked green. The values $F(\mathbf{x}, \mathbf{y})$ for different $\mathbf{y} \in \mathcal{M}$ are color-coded (different shades of red), with a couple of samples $\mathbf{y}$ highlighted as blue dots. For a general manifold, $F$ is given by a system of partial differential equations and neither it nor the corresponding RF-mechanism are given by the closed-form expressions.}}
    \vspace{-3mm}
    \label{fig:first_figure}
\end{figure}

\subsubsection{Regular GRF random-walk construction}
% \vspace{-2mm}
\begin{algorithm*}[h]
\caption{\textbf{\textcolor{violet}{Regular GRFs:}} Construct vectors $\phi_{f}(i) \in \mathbb{R}^N$ to approximate $\mathbf{K}_{\boldsymbol{\alpha}}(\mathbf{W})$}\label{alg:constructing_rfs_for_k_alpha}
\textbf{Input:} weighted adjacency matrix $\mathbf{W} \in \mathbb{R}^{N \times N}$, vector of unweighted node degrees (number of out-neighbours) $\mathrm{deg} \in \mathbb{R}^N$, modulation function $f:(\mathbb{N} \cup \{0\})  \to \mathbb{R}$, termination probability $p_\textrm{halt} \in (0,1)$, node $i \in \mathcal{N}$, number of random walks to sample $m \in \mathbb{N}$.

\textbf{Output:} signature vector $\phi_f(i) \in \mathbb{R}^N$. \\
% \vspace{-4mm}
\begin{algorithmic}[1]
\STATE initialize: $\phi_f(i) \leftarrow \boldsymbol{0}$
\FOR{$w = 1, ..., m$}
\STATE initialize: \lstinline{load} $\leftarrow 1$, \lstinline{current_node} $\leftarrow i$, \lstinline{terminated} $\leftarrow$ \lstinline{False}, \lstinline{walk_length} $\leftarrow 0$ 
\WHILE{ \lstinline{terminated} $=$ \lstinline{False}}
\STATE $\phi_f(i)$[\lstinline{current_node}] $\leftarrow$ $\phi_f(i)[\texttt{current$\_$node}]+\texttt{load}\times f\left(\right.$\lstinline{walk_length}$\left.\right)$
\STATE \lstinline{walk_length} $\leftarrow$ $\texttt{walk\_length} + 1$
\STATE \lstinline{new_node} $\leftarrow \lstinline{Unif} \left[ \mathcal{N}(\right.$\lstinline{current_node}$\left.)\right]$ \AlgComment{\textcolor{blue}{assign to one of neighbours}}
\STATE \lstinline{load} $\leftarrow$ $\texttt{load}\times \frac{\textrm{deg}\scriptsize{[\lstinline{current_node}]}}{1-p_\textrm{halt}} \times\mathbf{W}\left[\right.$\lstinline{current_node,new_node}$\left.\right]$ \AlgComment{\textcolor{blue}{update load}}
\STATE $\lstinline{current_node} \leftarrow \lstinline{new_node}$
\STATE \lstinline{terminated} $\leftarrow \left( t \sim \lstinline{False}(0,1) < 
p_\textrm{halt}\right)$ \AlgComment{\textcolor{blue}{draw RV $t$ to decide on termination}}
\ENDWHILE
\ENDFOR
\STATE normalize: $\phi_f(i) \leftarrow \phi_f(i)/m$
\end{algorithmic}
\end{algorithm*}

\subsection{Kernel reconstruction and scale alignment}
\label{sec:align}
In order to address the scaling challenge, as we go from the original continuous space to the discretized space, we conduct scale alignment that can be thought of as extra normalization.
In matrix form, letting $\mathbf{G}_\theta \in \mathbb{R}^{N\times N}$ with $(\mathbf{G}_\theta)_{ij}=g_\theta(\mathbf{x}_i,\boldsymbol{\omega}_j)$, the induced kernel is
$\mathbf{K}_\theta = \mathbf{G}_\theta\, \mathbf{G}_\theta^\top.$
To compare against a ground-truth kernel matrix $\mathbf{K}_{\mathrm{GT}}$ (see Sec. \ref{sec:diffusion_kernel} for how these values are computed), we apply a single global scaling that matches Frobenius norms:
\begin{equation}
\widetilde{\mathbf{K}}_\theta \;=\; \alpha \mathbf{K}_\theta,
\qquad
\alpha \;=\; \frac{\|\mathbf{K}_{\mathrm{GT}}\|_F}{\|\mathbf{K}_\theta\|_F}.
\label{eq:frob_scale}
\end{equation}
All kernel errors and visualizations use $\widetilde{\mathbf{K}}_\theta$.

% \newpage
\subsection{Gaussian-grid limit: from graph diffusion to continuous features}
\label{app:gaussian-limit}
This section consolidates the discrete-to-continuous argument behind the Gaussian-kernel sanity check in Sec.~\ref{sec:grf_euclidean}. We first show convergence of the rescaled grid Laplacian and its diffusion kernel, then recall the continuous positive Gaussian feature factorization, and finally give empirical validation on finite grids.

\subsubsection{Diffusion kernel on the discrete Euclidean grid}
Let $d\in\mathbb{N}$ and, for each $n\in\mathbb{N}$, let $h_n := \frac{1}{n}$. Let $\mathbb{T}^d$ be the $d$–dimensional torus and let $\mathrm{V}_n\subset\mathbb{T}^d$ be the regular $d$–dimensional grid
\[
\mathrm{V}_n := \Big\{x = (k_1 h_n,\dots,k_d h_n) : k_j\in\{0,\dots,n-1\}\Big\}.
\]
Consider the graph whose vertices are $\mathrm{V}_n$ and whose neighbors are the $2d$
wrap–around nearest neighbors. 
\label{subsec:convergence_kernel}

\begin{theorem}[Convergence of Discrete to Continuous Laplacian]\label{thm:convergenceoflaplacian}
    Define the averaging operator $\mathbf{T}_n:\mathbb{R}^{\mathrm{V}_n}\to \mathbb{R}^{\mathrm{V}_n}$ by
\[
(\mathbf{T}_n f)(x)
:= \frac{1}{2d}\sum_{y\sim x} f(y),
\]
where the sum is over the $2d$ neighbors $y$ of $x$ in the wrap-around grid. Furthermore, denote
the (random–walk) graph and rescaled Laplacian as
\[
\mathbf{L}_{\mathrm{curr}}^{(n)} := \mathbf{I} - \mathbf{T}_n \qquad 
\mathbf{L}_n := \frac{2d}{h_n^2}\,\mathbf{L}_{\mathrm{curr}}^{(n)}
      = \frac{2d}{h_n^2}\,(\mathbf{I} - \mathbf{T}_n).
\]
Let $\Delta$ be the usual Laplacian on $\mathbb{T}^d$. Then we have
\begin{equation}
    \mathbf{L}_n \xrightarrow[n\to\infty]{} -\Delta.
\end{equation}
\end{theorem}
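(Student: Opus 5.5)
The statement $\mathbf{L}_n \to -\Delta$ compares operators on different spaces, so I will first fix its meaning. I will prove consistency on smooth test functions: for every $f \in C^4(\mathbb{T}^d)$, with restriction $R_n f := f|_{\mathrm{V}_n}$,
\[
\max_{x\in \mathrm{V}_n} \bigl|(\mathbf{L}_n R_n f)(x) - (-\Delta f)(x)\bigr| \le C_d\, h_n^2 \,\|f\|_{C^4} \xrightarrow[n\to\infty]{} 0 .
\]
As a complementary statement, useful later for the heat-kernel result (Theorem~\ref{thm:discrete-diffusion-torus-convergence-gaussian}), I will also show convergence on the shared eigenbasis of Fourier modes. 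There each eigenvalue of $\mathbf{L}_n$ converges to the corresponding eigenvalue of $-\Delta$.

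\textbf{Step 1: rewrite $\mathbf{L}_n$.} The factor $2d$ cancels the $\frac{1}{2d}$ in $\mathbf{T}_n$, so
\[
(\mathbf{L}_n f)(x) = \frac{1}{h_n^2}\sum_{y\sim x}\bigl(f(x)-f(y)\bigr) = -\sum_{j=1}^d \frac{f(x+h_n e_j) - 2f(x) + f(x-h_n e_j)}{h_n^2}.
\]
This is minus the standard $(2d+1)$-point discrete Laplacian. The wrap-around structure means $x\pm h_n e_j$ is interpreted modulo $1$, so on $\mathbb{T}^d$ there are no boundary cases to handle separately.

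\textbf{Step 2: Taylor expansion in each coordinate.} Since $f\in C^4(\mathbb{T}^d)$ and the torus is compact, Taylor's theorem with remainder gives
\[
f(x+he_j)+f(x-he_j)-2f(x) = h^2\partial_j^2 f(x) + R_j(x,h).
\]
The odd-order terms cancel by symmetry, and $|R_j(x,h)|\le \frac{h^4}{12}\|\partial_j^4 f\|_\infty$ uniformly in $x$. Dividing by $h^2$ and summing over $j$ yields the displayed uniform bound with $C_d = d/12$. If only $f\in C^2$ is assumed, I will instead use the mean-value form of the remainder together with uniform continuity of $\partial_j^2 f$. That gives uniform convergence without a rate.

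\textbf{Step 3: spectral version.} For $k\in\mathbb{Z}^d$, the characters $e_k(x)=e^{2\pi i k\cdot x}$ restricted to $\mathrm{V}_n$ are eigenvectors of $\mathbf{T}_n$, because the grid graph is a Cayley graph of $(\mathbb{Z}/n)^d$. A direct computation gives
\[
\mathbf{L}_n e_k = \frac{2}{h_n^2}\sum_{j=1}^d \bigl(1-\cos(2\pi k_j h_n)\bigr)\, e_k .
\]
Expanding the cosine shows the eigenvalue tends to $4\pi^2|k|^2$, the eigenvalue of $-\Delta$ on $e_k$, for each fixed $k$.

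\textbf{Main obstacle.} The analytic content is elementary; the delicate point is the notion of convergence itself. Convergence holds only in the consistency sense (on restrictions of smooth functions) or mode-by-mode. The eigenvalues are not uniform in $k$: for $|k|\sim n$ they saturate near $4d/h_n^2$ rather than growing like $4\pi^2|k|^2$. I will therefore state the theorem explicitly in the consistency form above. I will note that the Fourier-mode statement is exactly what is needed downstream for the heat kernel: the factor $e^{-t\lambda}$ suppresses the high modes, so dominated convergence over $k$ can be applied later.
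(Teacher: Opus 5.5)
Your proposal is correct, and Steps 1--2 follow the paper's own argument. The paper restricts $f\in C^4(\mathbb{T}^d)$ to $\mathrm{V}_n$, Taylor-expands $f(x\pm h_n\mathbf{e}_k)$ so the odd-order terms cancel, sums over coordinates, and obtains $(\mathbf{L}_n f_n)(x) = -\Delta f(x) + O(h_n^2)$. Your fourth-order Lagrange remainder with the explicit uniform constant $d/12$ is cleaner than the paper's expansion, which evaluates the cubic term at intermediate points.

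Step 3 goes beyond the paper, which never proves the mode-by-mode convergence $\frac{2}{h_n^2}\sum_j(1-\cos(2\pi k_j h_n)) \to 4\pi^2|k|^2$. It is correct, and combined with a summability or stability argument it is a sound way to justify the ``exponentiate the operators'' step of the next theorem, which pointwise consistency alone does not deliver.
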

\begin{proof}
For $f\in C^4(\mathbb{T}^d)$ define its restriction to the grid $\mathrm{V}_n$ by $f_n(x) := f(x)$, $x\in \mathrm{V}_n$.
Fix $x\in\mathbb{T}^d$ and $f\in C^4(\mathbb{T}^d)$. For each coordinate direction
$\mathbf{e}_k$ ($k=1,\dots,d$), we have the Taylor expansions
\[
f(x \pm h_n \mathbf{e}_k)
= f(x) \pm h_n \partial_k f(x)
  + \frac{h_n^2}{2}\,\partial_{kk} f(x)
  \pm \frac{h_n^3}{6}\,\partial_{kkk} f(\xi_\pm)
  + \frac{h_n^4}{24}\,\partial_{kkkk} f(\eta_\pm),
\]
for some points $\xi_\pm,\eta_\pm$ on the segment between $x$ and $x\pm h_n \mathbf{e}_k$.
Adding these two expansions gives
\[
f(x + h_n \mathbf{e}_k) + f(x - h_n \mathbf{e}_k)
= 2 f(x) + h_n^2 \partial_{kk} f(x) + O(h_n^4).
\]

Summing over $k=1,\dots,d$,
\begin{align*}
\sum_{y\sim x} f(y)
&= \sum_{k=1}^d \big( f(x + h_n \mathbf{e}_k) + f(x - h_n \mathbf{e}_k) \big) \\
&= \sum_{k=1}^d \big( 2 f(x) + h_n^2 \partial_{kk}f(x) + O(h_n^4) \big) \\
&= 2d\,f(x) + h_n^2 \sum_{k=1}^d \partial_{kk} f(x) + O(d h_n^4).
\end{align*}

Furthermore, we know
\begin{align*}
(\mathbf{L}_{\mathrm{curr}}^{(n)} f_n)(x)
&= f(x) - \frac{1}{2d}\sum_{y\sim x} f(y) \\
&= -\,\frac{h_n^2}{2d} \sum_{k=1}^d \partial_{kk} f(x) + O(h_n^4) \\
&= -\,\frac{h_n^2}{2d}\,\Delta f(x) + O(h_n^4),
\end{align*}
where we plugged in our earlier result in the second line. Multiplying by $\frac{2d}{h_n^2}$ gives
\[
(\mathbf{L}_n f_n)(x)
= \frac{2d}{h_n^2} (\mathbf{L}_{\mathrm{curr}}^{(n)} f_n)(x)
= -\,\Delta f(x) + O(h_n^2),
\]
for any $x\in \mathrm{V}_n$. Thus, for each smooth $f$,
\[
\mathbf{L}_n f_n \;\xrightarrow[n\to\infty]{}\; -\Delta f.
\]
\end{proof}

\begin{theorem}[Convergence of Discrete Diffusion Kernel to Periodized Gaussian]
\label{thm:discrete-diffusion-torus-convergence-gaussian}
Fix $\sigma>0$ and let $f\in C^4(\mathbb{T}^d)$.
Then we have
\[
\exp\!\Big(-\tfrac{\sigma^2}{2}\mathbf{L}_n\Big)\, f\big|_{\mathrm{V}_n}
 \;\xrightarrow[n\to\infty]{}\;
\Big(\exp\!\big(\tfrac{\sigma^2}{2}\Delta\big) f\Big)\big|_{\mathrm{V}_n}.
\]
Moreover, the limiting operator $\exp(\tfrac{\sigma^2}{2}\Delta)$ acts by
convolution with a periodized Gaussian kernel
\[
k_\sigma(x,y)
=
\sum_{k\in\mathbb{Z}^d}
\frac{1}{(2\pi\sigma^2)^{d/2}}
\exp\!\Big(-\frac{\|x-y+k\|^2}{2\sigma^2}\Big),
\qquad x,y\in\mathbb{T}^d.
\]
\end{theorem}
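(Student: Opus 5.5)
The cleanest route is Fourier analysis on the torus. On both $\mathbb{T}^d$ and the wrap-around grid $\mathrm{V}_n$, the characters $e_k(x)=e^{2\pi i\langle k,x\rangle}$ are simultaneous eigenfunctions of the relevant operators. For the continuous operator we have $-\Delta e_k=4\pi^2\|k\|^2e_k$ for $k\in\mathbb{Z}^d$. For the averaging operator $\mathbf{T}_n$ of Theorem~\ref{thm:convergenceoflaplacian}, restricted to the grid, a direct computation gives
\[
\mathbf{T}_n e_k=\Bigl(\tfrac{1}{d}\textstyle\sum_{j=1}^d\cos(2\pi k_j h_n)\Bigr)e_k .
\]
Hence
\[
\mathbf{L}_n e_k=\mu_n(k)e_k,\qquad \mu_n(k)=\tfrac{2}{h_n^2}\textstyle\sum_{j}\bigl(1-\cos(2\pi k_jh_n)\bigr)=\tfrac{4}{h_n^2}\sum_j\sin^2(\pi k_jh_n).
\]
The claimed convergence then reduces to comparing the two semigroups mode by mode. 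Strictly, $f|_{\mathrm{V}_n}$ expands only in the $n^d$ aliased grid characters, so I would avoid using aliasing directly and argue as follows.

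\textbf{Step 1 (Fourier expansion).} Write $f=\sum_{k\in\mathbb{Z}^d}\hat f(k)e_k$. Since $f\in C^4$ and is periodic, $|\hat f(k)|\lesssim(1+\|k\|)^{-4}$. For $d\le 3$ this series is absolutely summable, and for general $d$ I would either note that the estimate is needed only through $\sum_k |\hat f(k)|$, which follows from Sobolev regularity when $4>d/2$, or truncate and use smoothness. By linearity, $\exp(-\tfrac{\sigma^2}{2}\mathbf{L}_n)$ acts on $f|_{\mathrm{V}_n}=\sum_k\hat f(k)e_k|_{\mathrm{V}_n}$ termwise, yielding $\sum_k\hat f(k)e^{-\frac{\sigma^2}{2}\mu_n(k)}e_k|_{\mathrm{V}_n}$. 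The continuous side is $\sum_k\hat f(k)e^{-2\pi^2\sigma^2\|k\|^2}e_k$.

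\textbf{Step 2 (Uniform error bound).} The sup-norm error on $\mathrm{V}_n$ is at most
\[
\sum_k|\hat f(k)|\,\bigl|e^{-\frac{\sigma^2}{2}\mu_n(k)}-e^{-2\pi^2\sigma^2\|k\|^2}\bigr|.
\]
For each fixed $k$, $\mu_n(k)\to4\pi^2\|k\|^2$ because $\sin^2(x)/x^2\to1$. Every term is bounded by $2|\hat f(k)|$, since both exponents are nonnegative. Dominated convergence over $k\in\mathbb{Z}^d$ then gives uniform convergence to $0$. If a rate is wanted, split at $\|k\|\le n^{1/2}$: on that range, $|\mu_n(k)-4\pi^2\|k\|^2|=O(\|k\|^4h_n^2)$; the tail is controlled by the decay of $\hat f$.

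\textbf{Step 3 (Periodized Gaussian).} The semigroup $\exp(\tfrac{\sigma^2}{2}\Delta)$ is the Fourier multiplier $e^{-2\pi^2\sigma^2\|k\|^2}$, so it equals convolution with the kernel $\sum_k e^{-2\pi^2\sigma^2\|k\|^2}e_k(x-y)$. Poisson summation, using that the Fourier transform of the $\mathbb{R}^d$ Gaussian density $(2\pi\sigma^2)^{-d/2}e^{-\|z\|^2/2\sigma^2}$ is $e^{-2\pi^2\sigma^2\|\xi\|^2}$, identifies this kernel with $k_\sigma(x,y)$. The lattice sum converges absolutely and uniformly, which justifies the interchange.

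\textbf{Main obstacle.} The main obstacle is the interchange of limit and infinite sum in Step 2, that is, securing a summable dominating sequence. Here the $C^4$ hypothesis, or a mild Sobolev condition in higher $d$, is essential. A fallback is to use Theorem~\ref{thm:convergenceoflaplacian} plus a Trotter–Kato/Duhamel argument: $\mathbf{L}_n$ is positive semidefinite, so $\|e^{-s\mathbf{L}_n}\|_\infty\le1$ as a Markov semigroup. Writing
\[
e^{-t\mathbf{L}_n}f_n-(e^{t\Delta}f)_n=\int_0^t e^{-(t-s)\mathbf{L}_n}\bigl((-\Delta)-\mathbf{L}_n\bigr)\bigl(e^{s\Delta}f\bigr)_n\,ds
\]
and using the $O(h_n^2)$ consistency bound applied to the smooth functions $e^{s\Delta}f$ gives convergence with rate $O(t h_n^2)$. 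This bound relies on $\|\partial^4 e^{s\Delta}f\|_\infty\le\|\partial^4 f\|_\infty$, which holds because the heat semigroup commutes with derivatives and is a sup-norm contraction.
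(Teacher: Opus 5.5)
Your proposal is correct, and your primary argument follows a different route from the paper. The paper's proof is brief. It takes the generator consistency $\mathbf{L}_n f_n=-\Delta f+O(h_n^2)$ from Theorem~\ref{thm:convergenceoflaplacian}, asserts that ``exponentiating the operators'' gives $e^{-t\mathbf{L}_n}f_n\to(e^{t\Delta}f)_n$, and then identifies the torus heat kernel by citing the periodization of the Euclidean heat kernel.

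You instead diagonalize $\mathbf{L}_n$ exactly on characters. You correctly observe that every $e_k|_{\mathrm{V}_n}$, aliased or not, is an eigenvector with symbol $\mu_n(k)=\frac{4}{h_n^2}\sum_j\sin^2(\pi k_jh_n)\to4\pi^2\|k\|^2$. You then compare the two semigroups mode by mode via dominated convergence, and you derive the periodized Gaussian yourself by Poisson summation rather than citing it. This gives a transparent, explicit picture of what converges.

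The cost is that you need $\sum_k|\hat f(k)|<\infty$, and as you note, the Sobolev argument only delivers this for $d\le 7$. For larger $d$, the fix is not ``smoothness'' but stability plus density. Approximate $f$ uniformly by a trigonometric polynomial $P$, and use that both $e^{-t\mathbf{L}_n}$ and $e^{t\Delta}$ are sup-norm contractions. Only the finitely many modes of $P$ then matter, and this even handles merely continuous $f$.

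Your Duhamel fallback is in effect the paper's route made rigorous. Consistency of the generators alone does not imply convergence of the semigroups. The ingredient the paper leaves implicit is exactly the uniform stability bound $\|e^{-s\mathbf{L}_n}\|_{\infty\to\infty}\le 1$. Note that this bound comes from $e^{-s\mathbf{L}_n}=e^{-sc}e^{sc\mathbf{T}_n}$, with $c=2d/h_n^2$, being entrywise nonnegative and row-stochastic. It does not come from positive semidefiniteness, which only controls the $\ell^2$ operator norm. Combined with $\|\partial_k^4 e^{s\Delta}f\|_\infty\le\|\partial_k^4 f\|_\infty$ and the $O(h_n^2)$ Taylor remainder, this argument works in every dimension under the stated $C^4$ hypothesis. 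It also yields an explicit $O(t\,h_n^2)$ rate, which the paper's proof does not provide.
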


\begin{proof}
Using the result of Theorem \ref{thm:convergenceoflaplacian} and exponentiating the operators implies that for any fixed $t\ge 0$,
\[
e^{-t \mathbf{L}_n} f\big|_{\mathrm{V}_n}
\;\longrightarrow\;
\big(e^{t\Delta} f\big)\big|_{\mathrm{V}_n}.
\]
Taking $t=\sigma^2/2$ yields the stated convergence.

The operator $e^{t\Delta}$ on the torus $\mathbb{T}^d$ is the heat semigroup and
admits an integral representation
\[
(e^{t\Delta} f)(x)
=
\int_{\mathbb{T}^d} p_t(x,y)\,f(y)\,dy,
\]
where $p_t$ is the heat kernel on the torus. This kernel is obtained by
periodizing the Euclidean heat kernel \cite{brandolini2010quadraturerulesdistributionpoints}
\[
p_t(x,y)
=
\sum_{k\in\mathbb{Z}^d}
\frac{1}{(4\pi t)^{d/2}}
\exp\!\Big(-\frac{\|x-y+k\|^2}{4t}\Big).
\]

Setting $t=\sigma^2/2$ gives
\[
p_{\sigma^2/2}(x,y)
=
\sum_{k\in\mathbb{Z}^d}
\frac{1}{(2\pi\sigma^2)^{d/2}}
\exp\!\Big(-\frac{\|x-y+k\|^2}{2\sigma^2}\Big),
\]
which is a Gaussian kernel with variance $\sigma^2$ in each coordinate,
periodized to respect the torus geometry.
\end{proof}

\textbf{Remark:} If instead of the discrete grid, we worked on $\mathbb{R}^d$ (no wrap–around), the corresponding
kernel would be the standard Gaussian
\[
\mathrm{K}_\sigma(\mathbf{x},\mathbf{y})
:= \frac{1}{(2\pi\sigma^2)^{d/2}}
   \exp\!\Big(-\,\frac{\|\mathbf{x}-\mathbf{y}\|^2}{2\sigma^2}\Big),
\]
which is the usual Gaussian kernel with bandwidth $\sigma$. On the torus we obtain its periodic version by summing over $k\in\mathbb{Z}^d$.

\subsubsection{Euclidean Gaussian feature representation}
\label{subsection:whatisg}
\begin{theorem}[RBF Kernel Factorization] \label{thm:gaussian-feature-representation-kernel}
Let $d \in \mathbb{N}$ and $\sigma > 0$, and define
\[
g_\sigma(\mathbf{x},\boldsymbol{\omega})
:= \left(\frac{2}{\pi\sigma^2}\right)^{d/4}
\exp\!\left(-\frac{\|\mathbf{x}-\boldsymbol{\omega}\|^2}{\sigma^2}\right),
\qquad \mathbf{x},\boldsymbol{\omega} \in \mathbb{R}^d.
\]
Then for all $\mathbf{x},\mathbf{y} \in \mathbb{R}^d$,
\[
\int_{\mathbb{R}^d} g_\sigma(\mathbf{x},\boldsymbol{\omega})\, g_\sigma(\mathbf{y},\boldsymbol{\omega})\, d\boldsymbol{\omega}
= \exp\!\left(-\frac{\|\mathbf{x}-\mathbf{y}\|^2}{2\sigma^2}\right).
\]
In particular, $g_\sigma$ is a continuous feature map for the Gaussian kernel with bandwidth~$\sigma$.
\end{theorem}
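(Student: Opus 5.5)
The plan is a direct Gaussian-integral computation. Since the Euclidean norm splits over coordinates, both the integrand and the normalizing constant $\bigl(\frac{2}{\pi\sigma^2}\bigr)^{d/4}$ factor as products over the $d$ coordinates. By Fubini (everything is nonnegative), it therefore suffices to prove the one-dimensional identity
\[
\sqrt{\tfrac{2}{\pi\sigma^2}}\int_{\mathbb{R}} \exp\!\Bigl(-\tfrac{(x-\omega)^2+(y-\omega)^2}{\sigma^2}\Bigr)\,d\omega = \exp\!\Bigl(-\tfrac{(x-y)^2}{2\sigma^2}\Bigr).
\]
Taking the product over coordinates then gives the $d$-dimensional claim, with exponents adding up to $-\|\mathbf{x}-\mathbf{y}\|^2/(2\sigma^2)$.

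For the one-dimensional identity, complete the square in $\omega$ using
\[
(x-\omega)^2+(y-\omega)^2 = 2\Bigl(\omega-\tfrac{x+y}{2}\Bigr)^2 + \tfrac{(x-y)^2}{2}.
\]
This follows from the parallelogram-type identity $a^2+b^2=\tfrac{(a+b)^2+(a-b)^2}{2}$ with $a=x-\omega$ and $b=y-\omega$. The factor $\exp(-(x-y)^2/(2\sigma^2))$ does not depend on $\omega$ and comes out of the integral.

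What remains is $\int_{\mathbb{R}} \exp(-2u^2/\sigma^2)\,du$, after the translation $u=\omega-\frac{x+y}{2}$. By the standard Gaussian integral $\int e^{-au^2}du=\sqrt{\pi/a}$ with $a=2/\sigma^2$, this equals $\sqrt{\pi\sigma^2/2}$. That value exactly cancels the prefactor $\sqrt{2/(\pi\sigma^2)}$, which is the product of the two copies of $(2/(\pi\sigma^2))^{1/4}$.

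There is no real obstacle here. The only point needing care is the bookkeeping of constants: the prefactor appears twice, once in each $g_\sigma$, so the exponent $d/4$ becomes $d/2$ overall. Continuity of $g_\sigma$ is immediate, so the "in particular" statement follows at once.
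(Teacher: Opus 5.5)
Your proposal is correct and follows essentially the same route as the paper: complete the square via $\|\mathbf{x}-\boldsymbol{\omega}\|^2+\|\mathbf{y}-\boldsymbol{\omega}\|^2 = 2\|\boldsymbol{\omega}-\mathbf{m}\|^2+\tfrac12\|\mathbf{x}-\mathbf{y}\|^2$ (the paper's Lemma~\ref{lemma:helpfulnorm}), pull out the $\boldsymbol{\omega}$-independent factor, and evaluate the remaining Gaussian integral, whose value $(\pi\sigma^2/2)^{d/2}$ cancels the squared prefactor. The only difference is that you first reduce to one coordinate via Tonelli, whereas the paper completes the square directly in vector form; this is purely cosmetic.
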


\begin{proof}
Set
\( C_\sigma := \left(\frac{2}{\pi\sigma^2}\right)^{d/4} \) and $\mathbf{m} := \frac{\mathbf{x}+\mathbf{y}}{2}$ for fixed $\mathbf{x},\mathbf{y} \in \mathbb{R}^d$. By Lemma \ref{lemma:helpfulnorm} we have
\[
\| \mathbf{x}-\boldsymbol{\omega}\|^2 + \|\mathbf{y}-\boldsymbol{\omega}\|^2
= 2\|\boldsymbol{\omega}-\mathbf{m}\|^2 + \tfrac12 \|\mathbf{x}-\mathbf{y}\|^2.
\]
Therefore,
\begin{align*}
\int_{\mathbb{R}^d} g_\sigma(\mathbf{x},\boldsymbol{\omega})\,g_\sigma(\mathbf{y},\boldsymbol{\omega})\, d\boldsymbol{\omega}
&=
C_\sigma^2
\int_{\mathbb{R}^d}
\exp\!\left(
-\frac{\|\mathbf{x}-\boldsymbol{\omega}\|^2}{\sigma^2}
-\frac{\|\mathbf{y}-\boldsymbol{\omega}\|^2}{\sigma^2}
\right)\,d\boldsymbol{\omega}
\\
&=
C_\sigma^2
\exp\!\left(-\frac{\|\mathbf{x}-\mathbf{y}\|^2}{2\sigma^2}\right)
\int_{\mathbb{R}^d}
\exp\!\left(-\frac{2\|\boldsymbol{\omega}-\mathbf{m}\|^2}{\sigma^2}\right)\,d\boldsymbol{\omega}
\\
&=
C_\sigma^2
\exp\!\left(-\frac{\|\mathbf{x}-\mathbf{y}\|^2}{2\sigma^2}\right)
\left(\frac{\pi\sigma^2}{2}\right)^{d/2}
\\
&=
\exp\!\left(-\frac{\|\mathbf{x}-\mathbf{y}\|^2}{2\sigma^2}\right),
\end{align*}
noting that for $a>0$, $\int_{\mathbb{R}} e^{-a u^2}\,du = \sqrt{\frac{\pi}{a}}$.
\end{proof}

\begin{lemma}\label{lemma:helpfulnorm}
For any $\mathbf{x},\mathbf{y}$ in $\mathbb{R}^d$ let $\mathbf{m} := \frac{\mathbf{x}+\mathbf{y}}{2}$, then we have
\begin{equation}\label{eq:cs}
\|\mathbf{x}-\boldsymbol{\omega}\|^2 + \|\mathbf{y}-\boldsymbol{\omega}\|^2
= 2\|\boldsymbol{\omega} - \mathbf{m}\|^2 + \tfrac12 \|\mathbf{x}-\mathbf{y}\|^2,
\qquad \boldsymbol{\omega} \in \mathbb{R}^d.
\end{equation}
\end{lemma}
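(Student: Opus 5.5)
The identity in Lemma~\ref{lemma:helpfulnorm} is a purely algebraic statement, so the plan is to verify it by direct expansion, or equivalently by the parallelogram law.

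Write $\mathbf{a} := \mathbf{x}-\boldsymbol{\omega}$ and $\mathbf{b} := \mathbf{y}-\boldsymbol{\omega}$. Then
\[
\mathbf{a}+\mathbf{b} = 2(\mathbf{m}-\boldsymbol{\omega}), \qquad \mathbf{a}-\mathbf{b} = \mathbf{x}-\mathbf{y}.
\]
The parallelogram law $\|\mathbf{a}+\mathbf{b}\|^2+\|\mathbf{a}-\mathbf{b}\|^2 = 2\|\mathbf{a}\|^2+2\|\mathbf{b}\|^2$ follows by expanding both squared norms via the inner product, since the cross terms $\pm 2\langle \mathbf{a},\mathbf{b}\rangle$ cancel. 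Substituting gives
\[
4\|\boldsymbol{\omega}-\mathbf{m}\|^2 + \|\mathbf{x}-\mathbf{y}\|^2 = 2\|\mathbf{x}-\boldsymbol{\omega}\|^2 + 2\|\mathbf{y}-\boldsymbol{\omega}\|^2 .
\]
Dividing by $2$ yields exactly Eq.~\eqref{eq:cs}.

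As a fallback, one can expand both sides in terms of $\|\mathbf{x}\|^2$, $\|\mathbf{y}\|^2$, $\|\boldsymbol{\omega}\|^2$, $\langle\mathbf{x},\boldsymbol{\omega}\rangle$, $\langle\mathbf{y},\boldsymbol{\omega}\rangle$ and $\langle\mathbf{x},\mathbf{y}\rangle$, and compare coefficients. For example, the right-hand side is $2\|\boldsymbol{\omega}\|^2 - 2\langle\boldsymbol{\omega},\mathbf{x}+\mathbf{y}\rangle + \tfrac12\|\mathbf{x}+\mathbf{y}\|^2 + \tfrac12\|\mathbf{x}-\mathbf{y}\|^2$. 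The last two terms sum to $\|\mathbf{x}\|^2+\|\mathbf{y}\|^2$, which matches the left-hand side.

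There is no real obstacle here. The only care needed is with the factor bookkeeping, namely the $\tfrac12$ coming from $\mathbf{m}$ being a midpoint.
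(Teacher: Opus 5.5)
Your proof is correct and essentially matches the paper, which expands the left-hand side in inner products and completes the square around $\mathbf{m}$; that is exactly your fallback argument. Your main route applies the parallelogram law to $\mathbf{x}-\boldsymbol{\omega}$ and $\mathbf{y}-\boldsymbol{\omega}$, which is only a more compact packaging of the same algebra, since the paper's last step $\|\mathbf{x}\|^2+\|\mathbf{y}\|^2-\tfrac12\|\mathbf{x}+\mathbf{y}\|^2=\tfrac12\|\mathbf{x}-\mathbf{y}\|^2$ is itself the parallelogram law.
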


\begin{proof}
We have
\begin{align*}
\|\mathbf{x}-\boldsymbol{\omega}\|^2 + \|\mathbf{y}-\boldsymbol{\omega}\|^2
&= (\|\mathbf{x}\|^2 - 2\langle \mathbf{x},\boldsymbol{\omega}\rangle + \|\boldsymbol{\omega}\|^2)
 + (\|\mathbf{y}\|^2 - 2\langle \mathbf{y},\boldsymbol{\omega}\rangle + \|\boldsymbol{\omega}\|^2) \\
&= 2\|\boldsymbol{\omega}\|^2 -2\langle \mathbf{x}+\mathbf{y},\boldsymbol{\omega}\rangle + \|\mathbf{x}\|^2+\|\mathbf{y}\|^2 \\
&= 2\left(\|\boldsymbol{\omega}\|^2 - \Big\langle \mathbf{x}+\mathbf{y},\boldsymbol{\omega}\Big\rangle + \Big\|\frac{\mathbf{x}+\mathbf{y}}{2}\Big\|^2\right)
 +\|\mathbf{x}\|^2+\|\mathbf{y}\|^2- \tfrac12\|\mathbf{x}+\mathbf{y}\|^2\\
&= 2\|\boldsymbol{\omega} - \mathbf{m}\|^2 + \tfrac12 \|\mathbf{x}-\mathbf{y}\|^2.
\end{align*}
\end{proof}

\subsubsection{Empirical Gaussian-grid validation} \label{app:gaussian-grid}
Following the discussion in Sec. \ref{sec:grf_euclidean}, for each dimension $d\in\{2,4,...,32\}$, we discretize the hypercube $[0,1]^d$ via the regular grid $\mathrm{V}_n$ with spacing $h_n = 1/n$ and $n^d$ nodes, connecting each node to its $2d$ wrap–around nearest neighbors. For each choice of $n$ and $d$, we construct signature vectors $\phi_f(x)$ at every grid node $x\in \mathrm{V}_n$ using the random walk procedure of Algorithm 1 with a fixed diffusion scale $\sigma=0.2$, termination probability $p_{\text{halt}} = 0.005$, $m=100{,}000$ walks per node, and vary $n$ from $5$ to $105$ in steps of $10$. We then form rescaled vectors $\psi_f(x) = c_{d,\sigma,n}\,\phi_f(x)$. 

\textbf{Note:} On the ($d$-dimensional grid) $\mathrm{V}_n$, the random-walk Laplacian decomposes as a Kronecker sum of 1D operators. Hence, the corresponding terms $e^{-tL/2}$ and the induced kernel $e^{-tL}$ factorize across coordinates; see Prop. 20 \cite{cadavid2022approachevaluatingcertaintrigonometric} for further details. We exploit this to compute the 1D quantities and tensorize to obtain the $d$-dimensional results.

We compute the relative mean squared error (MSE) of approximating  $g_\sigma(\mathbf{c},*)$ with $\psi_f(c)[*]$, as well as that of approximating true kernel values with those induced by MRFs (in the grid points). 
The latter are given as:
\begin{equation}
\widehat{\mathrm{K}}^{\mathrm{Gauss}}_{\sigma,n}(x,y)
  :=
  \bigl\langle \psi_f(x),\psi_f(y)\bigr\rangle,
  \qquad x,y\in \mathrm{V}_n.
\end{equation}
Across all tested dimensions, we observe that errors decrease rapidly as the grid is refined (see: Fig.~\ref{fig:MSEs-euclidean}).
\begin{figure}[h]
    \centering
    \includegraphics[width=0.49\linewidth]{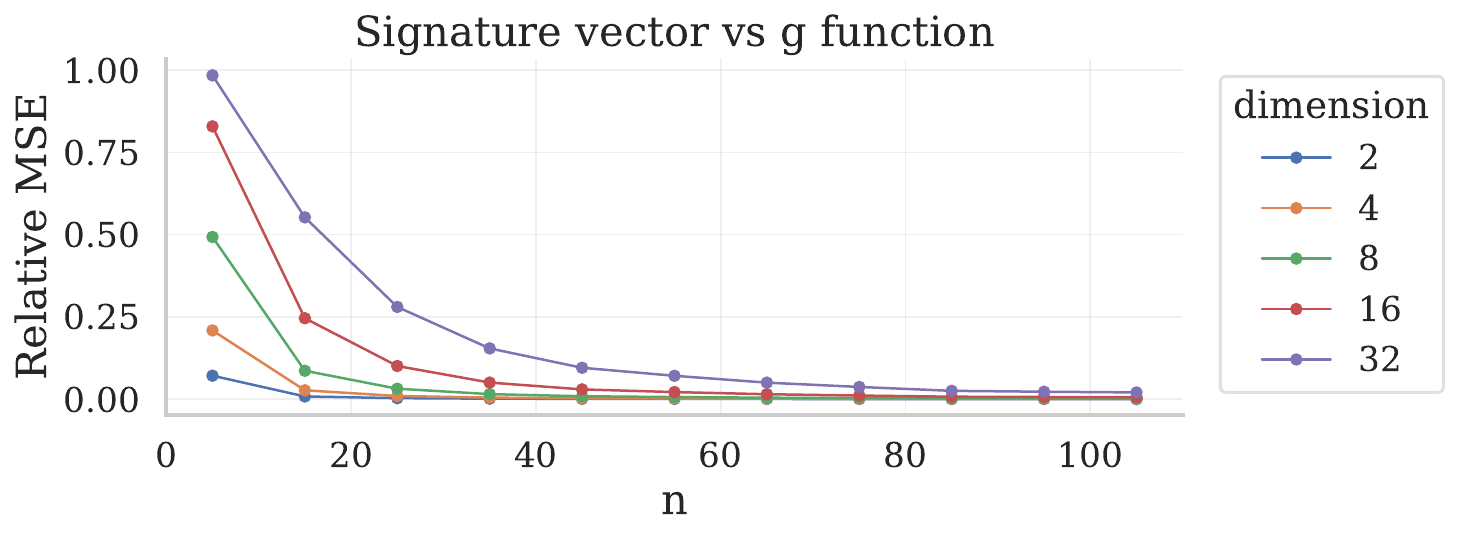}
    \includegraphics[width=0.49\linewidth]{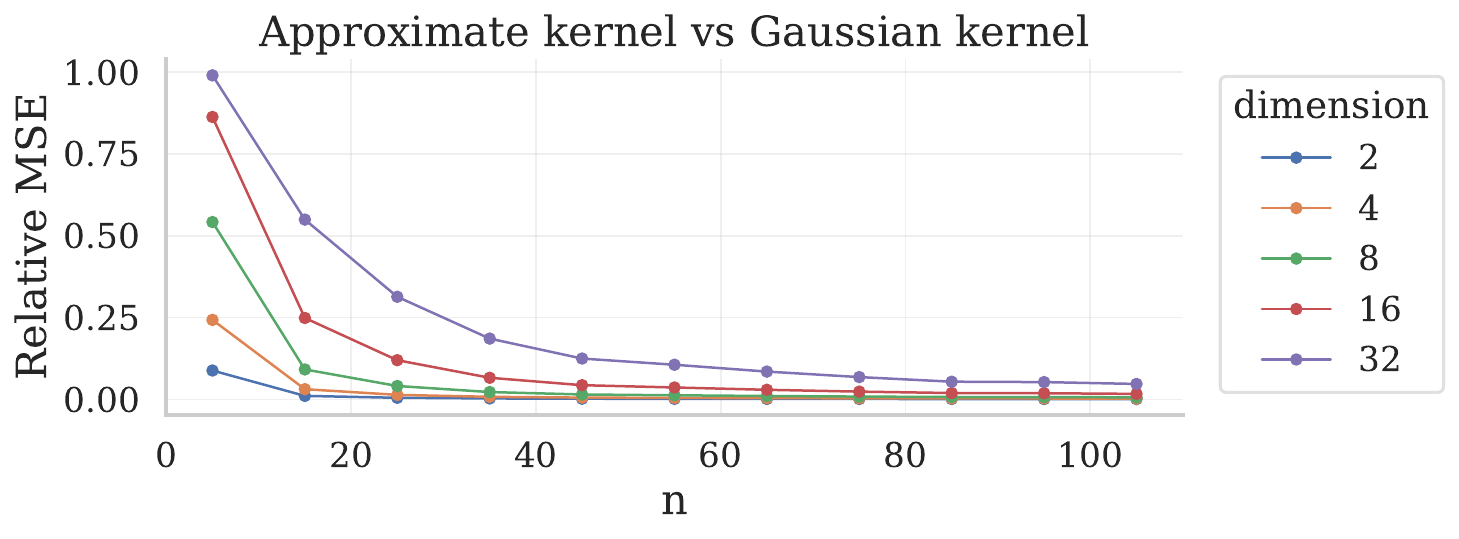}
    % \vspace{-6mm}
    \caption{{Empirical relative mean squared errors (MSEs) of the approximations of $g$-functions and true Gaussian kernel values with signature vectors and kernels induced by them for $d \in \{2,4,...,32\}$ and $n=5,15,...,105$. Relative MSE of the approximation of vector $\mathbf{y}$ with random vector $\mathbf{x}$ is defined as $\mathbb{E}[||\mathbf{x}-\mathbf{y}||^2_2/||\mathbf{y}||^2_2]$. We use $s=30$ repetitions for each $n$ value (standard deviations not visible; negligible relative to means).}}
    \label{fig:MSEs-euclidean}
    % \vspace{-3mm}
\end{figure}

\subsubsection{Convergence visualization for MRF-induced Gaussian kernels}
\label{subsec:convergenceofgrf}

We use the Gaussian-grid setting above.
For a representative grid point $c$ (we choose the center $c=(\tfrac{1}{2},\dots,\tfrac{1}{2})$), we visualize the field $\omega\mapsto \psi_n(c)[\omega]$ over $\mathrm{V}_n$ and compare it to the analytic feature $g_\sigma(c,\omega)$ evaluated on the same grid.
As $n$ increases, the discrete field becomes increasingly smooth and radially symmetric, approaching the shape of the Gaussian bump predicted by the theory (see Fig.~\ref{fig:convergence-visualization-gaussian-kernel-2d} below for the two–dimensional case).

\begin{figure*}[h]
    \centering
    \includegraphics[width=0.95\linewidth]{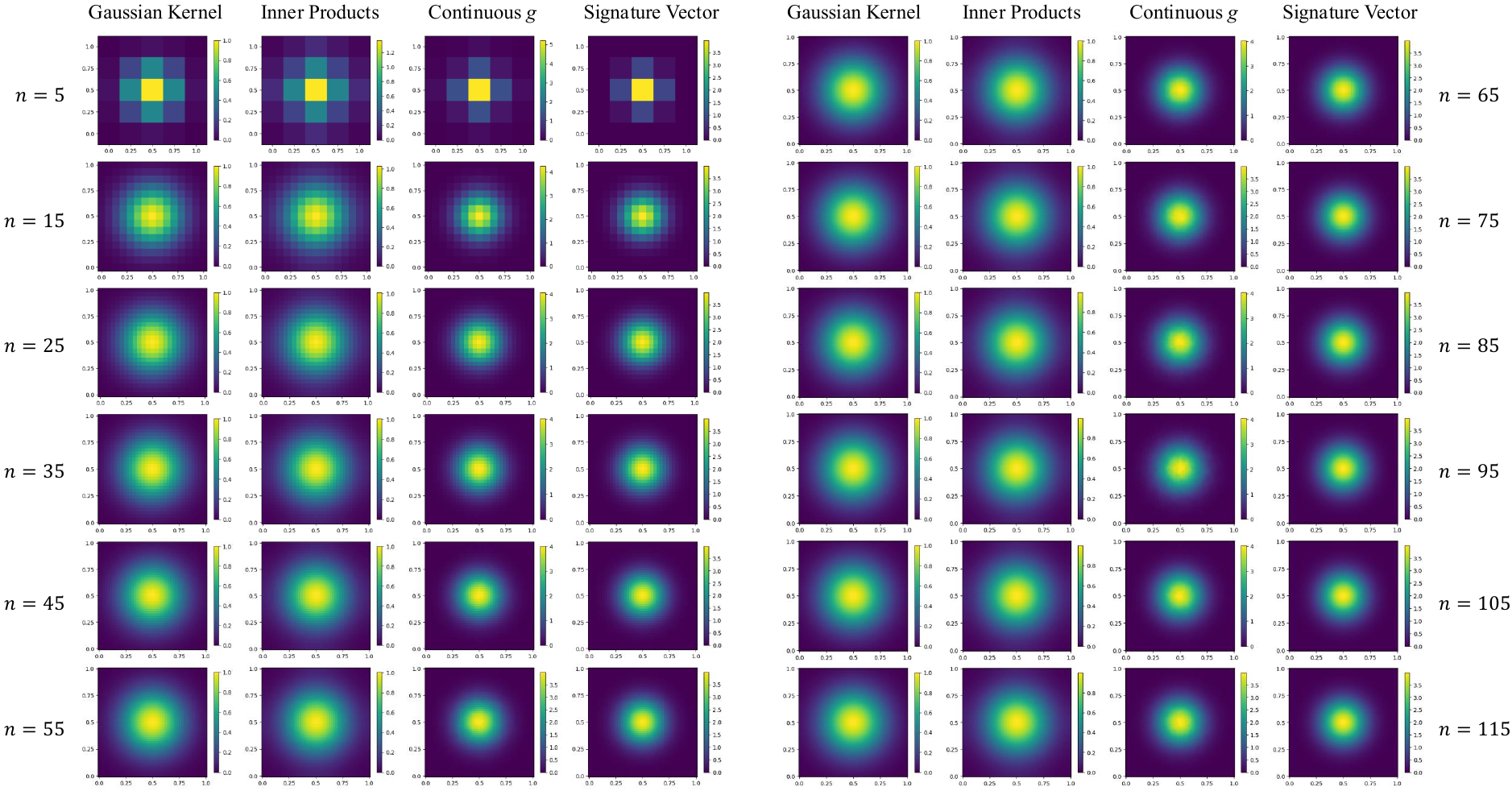}
    \caption{{\textbf{First column:} Exact values of the Gaussian kernel, computed in the grid points and broadcast to all the points in the continuous rectangle corresponding to that grid point. \textbf{Second column:} Inner(dot)-products of the signature vectors, leading to the approximate Gaussian kernel values. \textbf{Third column:} The values of the exact function $g_{\sigma}(c,\omega)$ (for a fixed $c$), computed in the grid points and broadcast to all the points in the continuous rectangle corresponding to that grid point. \textbf{Fourth column:} Values of the renormalized signature vector $\psi(c)[\omega]$ for a fixed $c$. We use the following setting: $d=2$, $\sigma=0.2$, $p_\text{halt}=0.005$, and $m=100{,}000$.}}
    \label{fig:convergence-visualization-gaussian-kernel-2d}
\end{figure*}

\subsection{Surface heat-kernel reconstruction experiments}\label{app:surfaces}
This section gives the mathematical ground truth and full experimental details for the 2D embedded-surface experiments, followed by the ambient Euclidean RF baseline sweep and validation diagnostics.

\subsubsection{Sphere ground-truth heat kernel} \label{app:spheres}
For the unit hypersphere $S^{d} \subset \mathbb{R}^{d+1}$, the heat kernel
admits an explicit spectral expansion in terms of Gegenbauer polynomials.
Following Theorem~1 of \citet{Zhao_2018}, the exact hyperspherical heat
kernel $G^{\mathrm{ext}}(\mathbf{x},\mathbf{y};t)$ on $S^{n-1}$ (with $n=d+1$)
can be written as the uniformly and absolutely convergent series
\begin{equation}
  G^{\mathrm{ext}}(\mathbf{x},\mathbf{y};t)
  \;=\;
  \sum_{\ell=0}^{\infty}
    e^{-\ell(\ell+n-2)t}\,
    \frac{2\ell+n-2}{n-2}\,
    \frac{1}{A_{S^{n-1}}}\,
    C_{\ell}^{\frac{n-2}{2}}(\mathbf{x}\cdot\mathbf{y}),
  \label{eq:heat-kernel-Sd}
\end{equation}
where $C_{\ell}^{\alpha}$ denotes the Gegenbauer polynomial of order $\ell$
and index $\alpha$, $\mathbf{x}\cdot\mathbf{y} \in [-1,1]$, $t>0$, and
\[
  A_{S^{n-1}}
  \;=\;
  \frac{2\pi^{\frac{n}{2}}}{\Gamma\!\left(\frac{n}{2}\right)}
\]
is the surface area of $S^{n-1}$.

In the case $d=2$ (so $n=3$ and we are on $S^{2}$), we have
$\alpha = \tfrac{n-2}{2} = \tfrac{1}{2}$, and the Gegenbauer polynomials
reduce to Legendre polynomials via
$C_{\ell}^{1/2}(z) = P_{\ell}(z)$. Therefore \eqref{eq:heat-kernel-Sd}
specializes to the classical spherical expansion
\begin{equation} \label{eq:heat_kernel_sphere_2D}
  G^{\mathrm{ext}}(\mathbf{x},\mathbf{y};t)
  \;=\;
  \sum_{\ell=0}^{\infty}
    \frac{2\ell+1}{4\pi}e^{-\ell(\ell+1)t}\,
    P_{\ell}(\langle\mathbf{x},\mathbf{y}\rangle).
\end{equation}
For numerical ground truth, we truncate \eqref{eq:heat_kernel_sphere_2D} at
$\ell \le L_{\max}$:
\[
\mathrm{K}^{\text{true}}_t(\mathbf{x}_i,\mathbf{x}_j)
\;\approx\;
\sum_{\ell=0}^{L_{\max}}
  \frac{2\ell+1}{4\pi}\,
  e^{-\ell(\ell+1)t}\,
  P_\ell(\langle \mathbf{x}_i, \mathbf{x}_j\rangle),
\]
where $L_{\max}$ is chosen so that the tail mass is negligible. In our experiments, we choose $L_{\max}=50$.

\subsubsection{Common setup and surface-specific details}\label{app:surfaces-setup}
For each geometry, we discretize the surface into \(N=4000\) points, build a \(k\)NN graph (with \(k=24\) for the M\"obius strip and \(k=8\) for the other three), generate signature feature vectors from 1000 (sampled) start points with parameters: $m=100,000$ (number of random walks), $p_{\text{halt}}=0.01$, and $\sigma^2=20$. We then train a neural network predictor \(g_\theta(\mathbf{x},\omega)\) to learn \(\phi_f(x)[\omega]\).
Our network architecture is a small multilayer perceptron that takes as input the coordinates of the pair $(\mathbf{x},\omega)$ along with an approximation of the geodesic distance $d_{\mathcal{M}}(\mathbf{x},\omega)$, (computed via a shortest path calculation between $x$ and $\omega$), and outputs the signature-vector values. Training was run for $1000$ epochs with an absolute relative error loss (clamped at $\varepsilon=0.1$), defined as follows:
\begin{equation}
    \mathcal{L}(g_\theta(\mathbf{x},\omega),\phi_{f}(x)(\omega))=\frac{|g_\theta(\mathbf{x},\omega)-\phi_{f}(x)(\omega)|}{\max\{\phi_{f}(x)(\omega),\varepsilon\}},
\end{equation} 
and optimized with Adam. We report root mean square error (RMSE), validation diagnostics (prediction vs.\ actual and error vs.\ truth; those in the Appendix: Sec. \ref{sec:val-diagnostics}), qualitative 3D visualizations of \(g_\theta(\mathbf{x},\cdot)\), \(\phi_f(x)[\cdot]\), and their difference for representative validation starts, and heat-kernel comparisons using the kernel induced by the learned features. From the learned feature matrix \(\mathbf{G}_\theta\) we form \(\mathbf{K}_{\theta} = \mathbf{G}_\theta \mathbf{G}_\theta^\top\), and compare it against the ground-truth kernel matrix \(\mathbf{K}_{\mathrm{GT}}\). Before reporting kernel errors, we apply the Frobenius-norm alignment (see: Appendix, Sec. \ref{sec:align}) and evaluate the renormalized version \(\widetilde{\mathbf{K}}_\theta\).

\textbf{Sphere:} We embed a sphere \(x^2+y^2+z^2=1\) in \(\mathbb{R}^3\). For the ground truth, we use the analytic spherical heat kernel (see: Sec. \ref{app:spheres}) truncated at \(L_{\max}=50\) spherical harmonics. The diffusion-time used in the analytic kernel is set to $t_{\text{analytical}}\approx0.25$ to match the diffusion length observed on the discretized graph (and account for the number of nodes $N$). 

\textbf{Ellipsoid:} We embed an ellipsoid \(\frac{x^2}{a^2}+\frac{y^2}{b^2}+\frac{z^2}{c^2}=1\) in \(\mathbb{R}^3\) (with \(a=1.0, b=1.3, c=0.7\))
and discretize it via scaled Fibonacci construction.
% We use the same signature-walk parameters as on the sphere.
As a reference kernel, we use the graph heat kernel \(\mathrm{K}_{\mathrm{true}}=\exp(-t\mathbf{L})\) computed from a \(k\)NN graph Laplacian on the ellipsoid.

\textbf{M\"obius strip:} We embed a M\"obius strip in \(\mathbb{R}^3\) and discretize it using a uniform grid in the intrinsic parameters \((u,v)\).
The ground truth is constructed similarly as for the ellipsoid. The M\"obius strip is in principle more sensitive to discretization and graph construction (e.g., near-duplicate points, irregular sampling density, or disconnected \(k\)NN graphs can destabilize training).

\subsubsection{Additional Results: Torus}
\label{sec:torus}
We embed a torus in \(\mathbb{R}^3\) and discretize it using a uniform grid. We then proceed as for the sphere and ellipsoid case. The conclusions are the same as for the sphere and the ellipsoid
(see: Fig. \ref{fig:torus_field}).

\begin{figure}[H]
    \centering
    \includegraphics[width=0.45\linewidth]{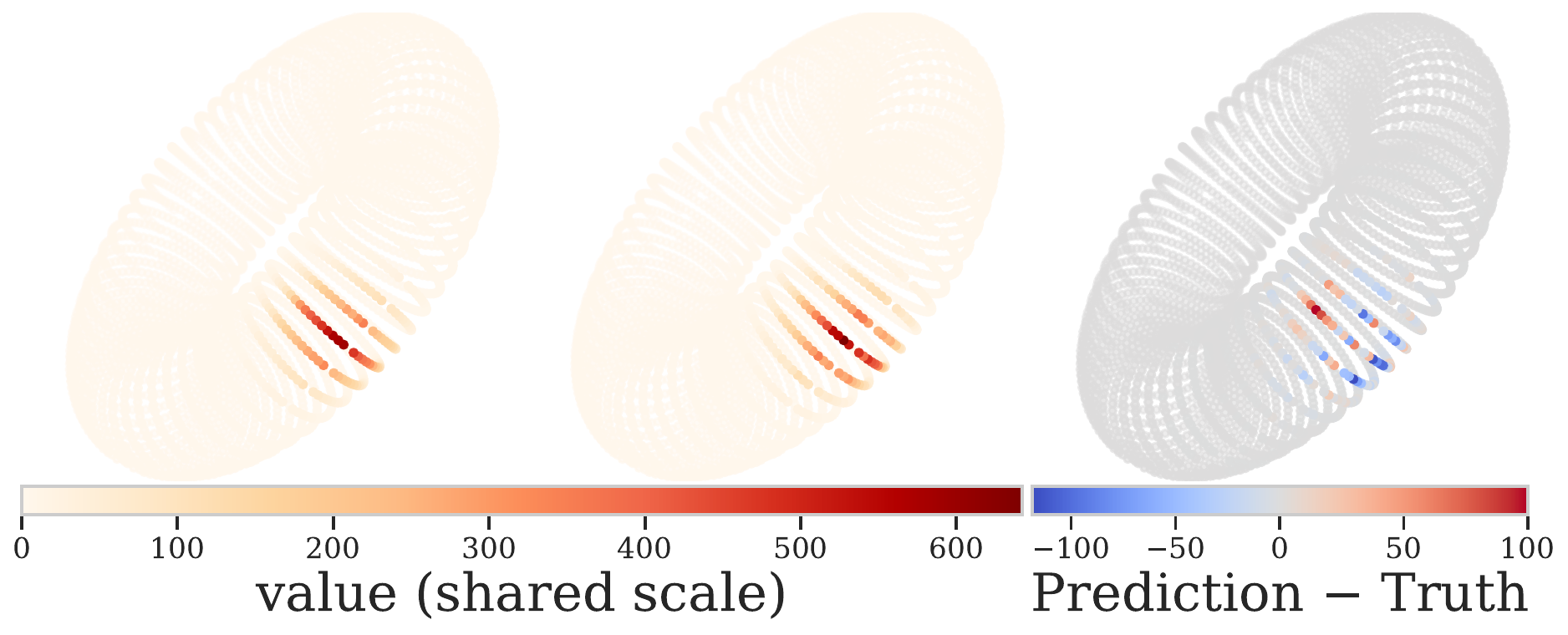}
    \includegraphics[width=0.45\linewidth]{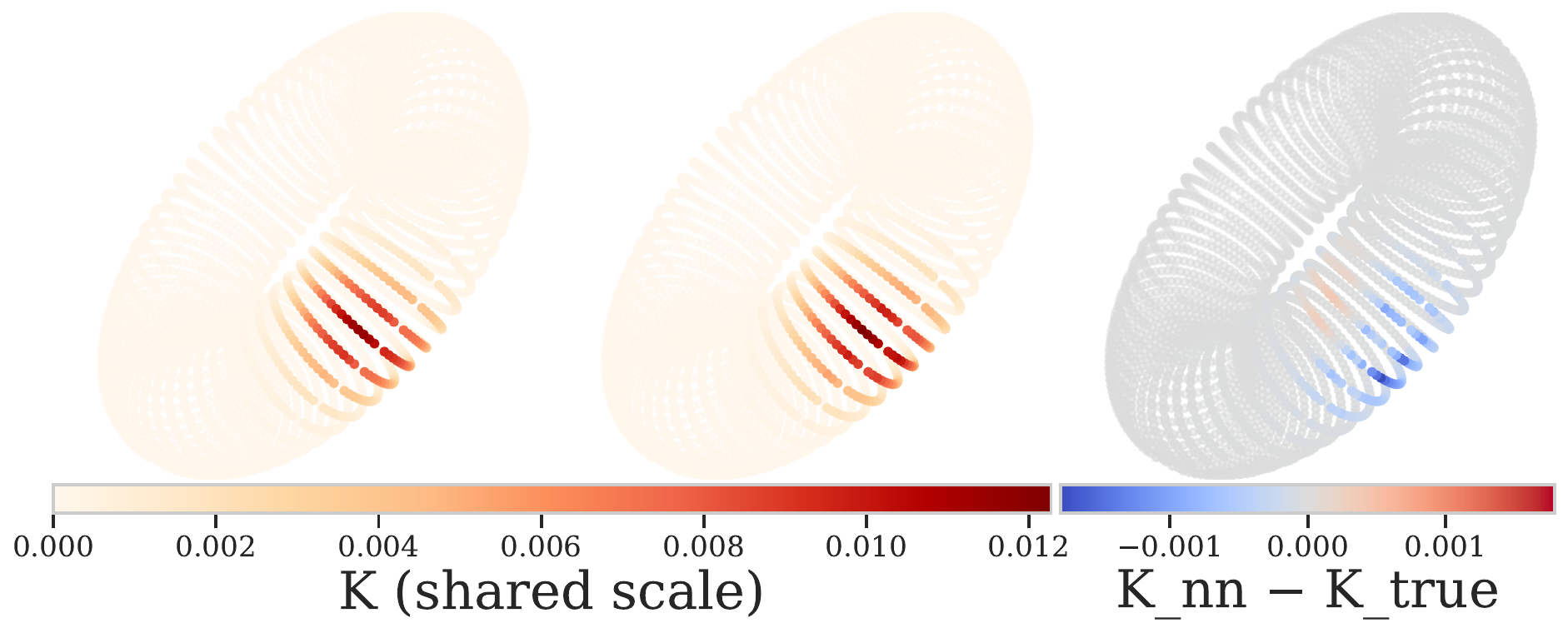}
    \caption{Qualitative field and heat kernel comparison for a torus. Example validation start (left):
    predicted \(g_\theta(\mathbf{x},\cdot)\), ground truth \(\phi_f(\mathbf{x})[\cdot]\), and residual on the torus. Reference graph heat kernel (right) \(\mathrm{K}^{\mathrm{heat}}(\mathbf{x},\cdot)\)
    compared to Frobenius-aligned induced kernel $\widetilde{\mathrm{K}}_{\theta}(\mathbf{x},\cdot)=\widetilde{\mathrm{K}}_{\mathrm{NN}}(\mathbf{x},\cdot)$, and their difference, for a representative start point.}
    \label{fig:torus_field}
\end{figure}
% \vspace{-3mm}

\subsubsection{Validation Diagnostics for All Shapes}
\label{sec:val-diagnostics}

Validation diagnostics results for all the shapes, confirming strong performance of the MRF method, are presented in Fig. \ref{fig:sphere_val_diag}, Fig. \ref{fig:ellipsoid_val_diag}, Fig. \ref{fig:mobius_val_diag} and Fig. \ref{fig:torus_val_diag}.

\begin{figure*}[!h]
    \centering
    \includegraphics[width=0.32\linewidth]{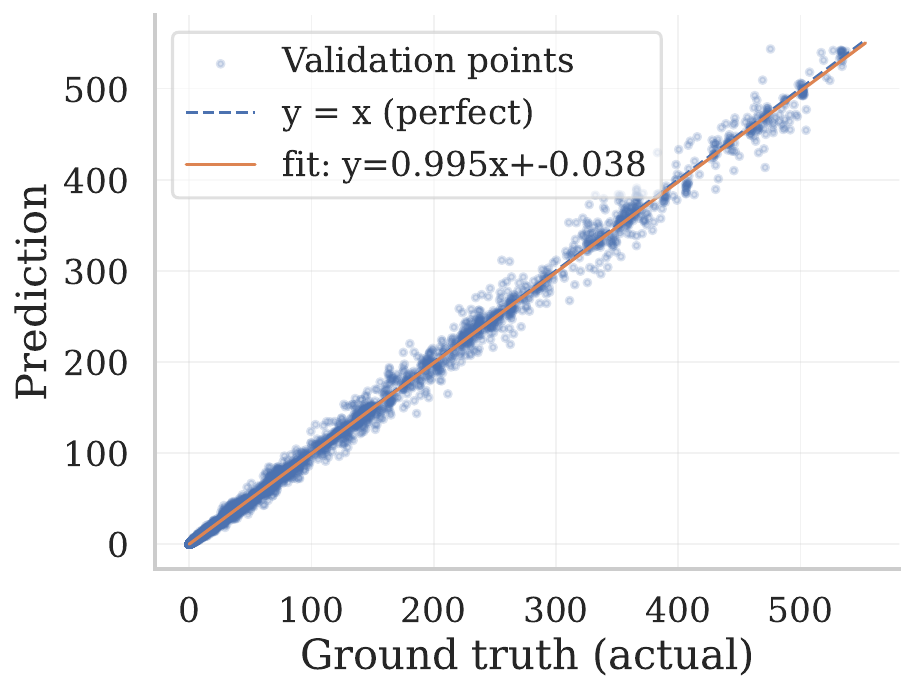}
    \includegraphics[width=0.32\linewidth]{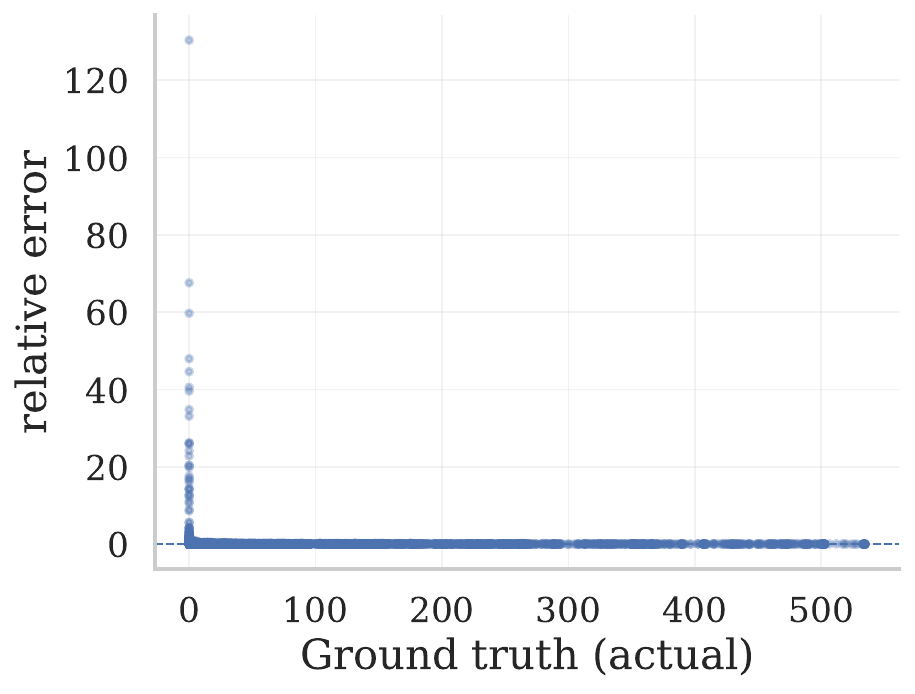}
    \includegraphics[width=0.32\linewidth]{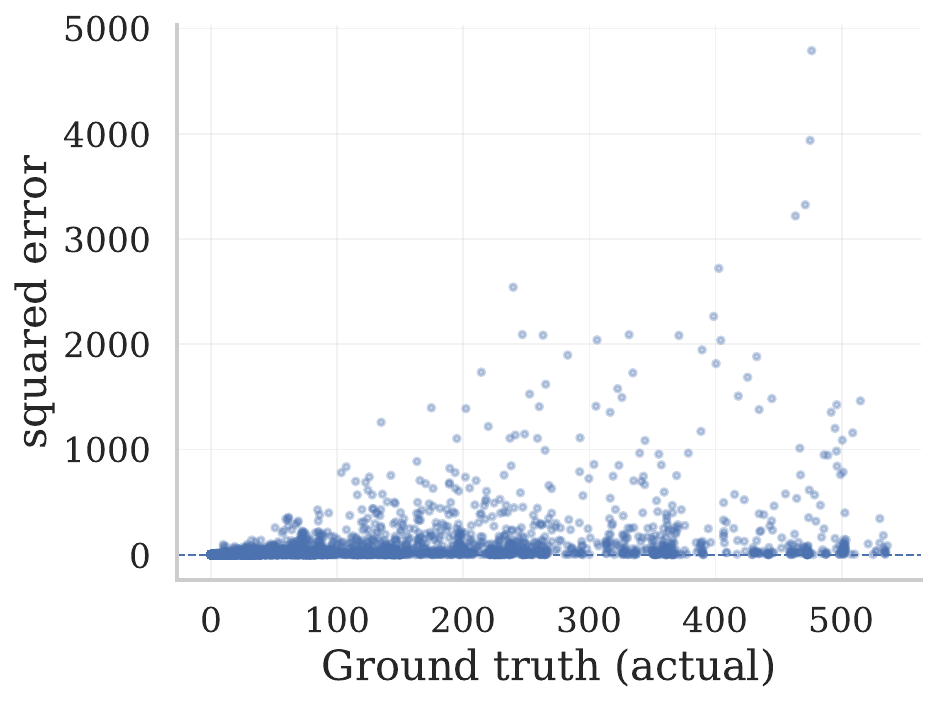}
    \caption{\textbf{Sphere validation diagnostics.} Left: predicted vs.\ ground-truth \(\varphi_t\) values.
    Middle/right: relative error and squared error as a function of ground-truth magnitude, highlighting where the model under/over-estimates.}
    \label{fig:sphere_val_diag}
\end{figure*}
\begin{figure*}[!h]
    \centering
    \includegraphics[width=0.32\linewidth]{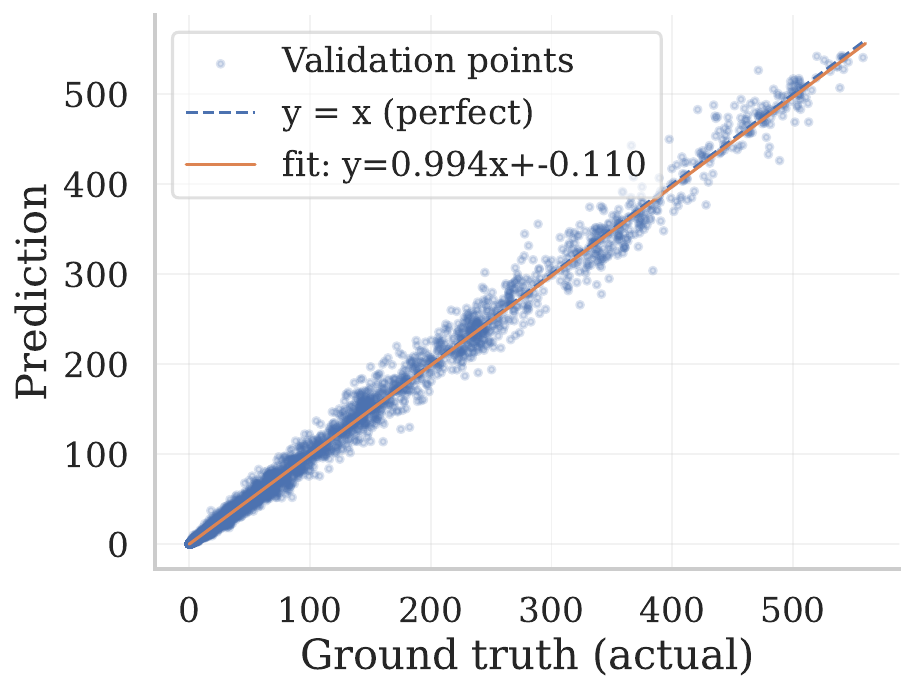}
    \includegraphics[width=0.32\linewidth]{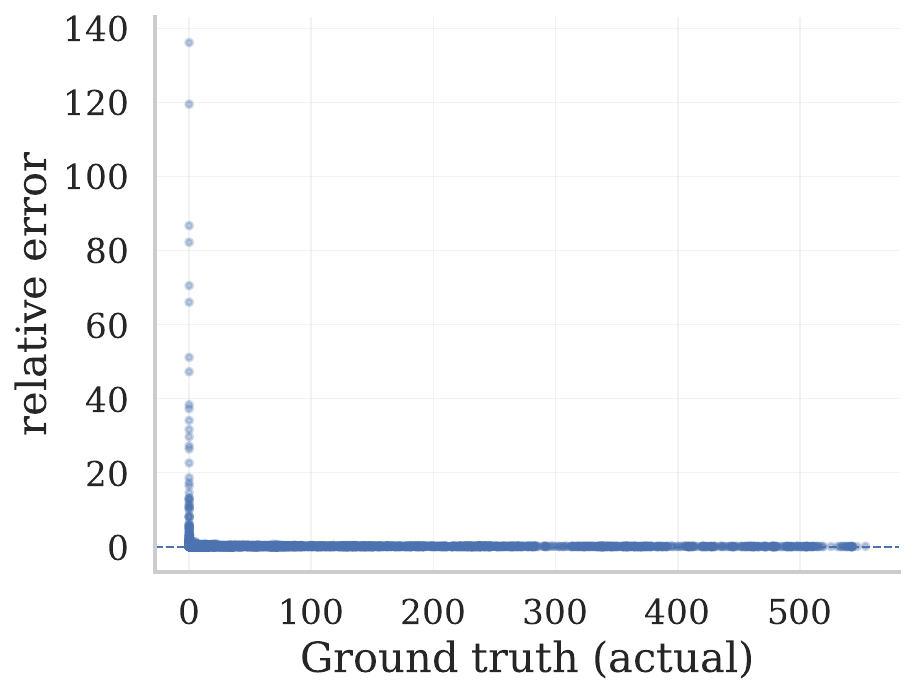}
    \includegraphics[width=0.32\linewidth]{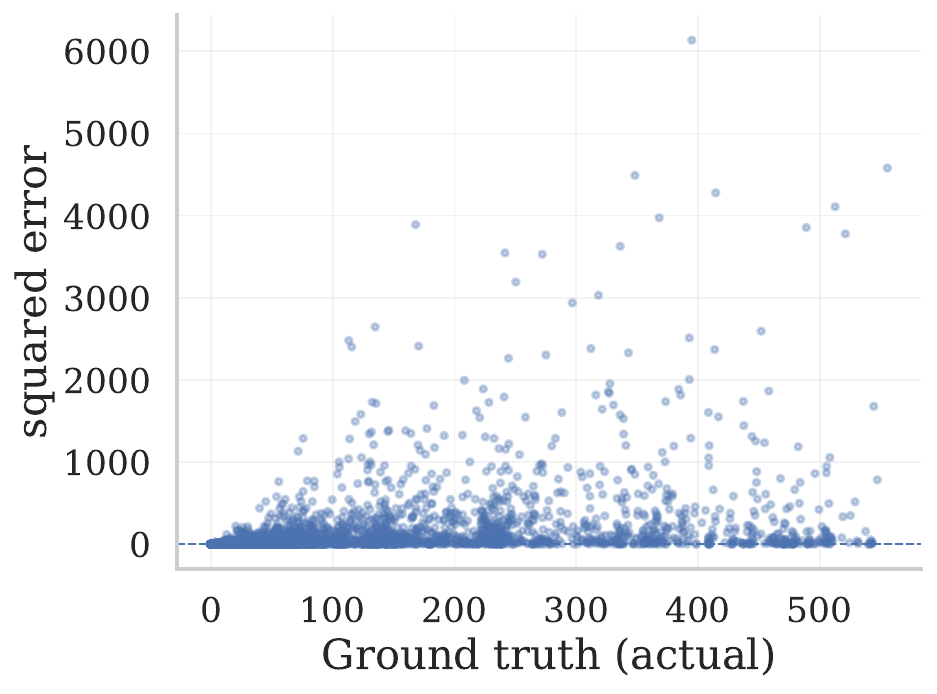}
    \caption{\textbf{Ellipsoid validation diagnostics.} Left: predicted vs.\ ground-truth \(\varphi_t\) values.
    Middle/right: relative and squared errors as a function of ground-truth magnitude.}
    \label{fig:ellipsoid_val_diag}
\end{figure*}
\begin{figure*}[!h]
    \centering
    \includegraphics[width=0.32\linewidth]{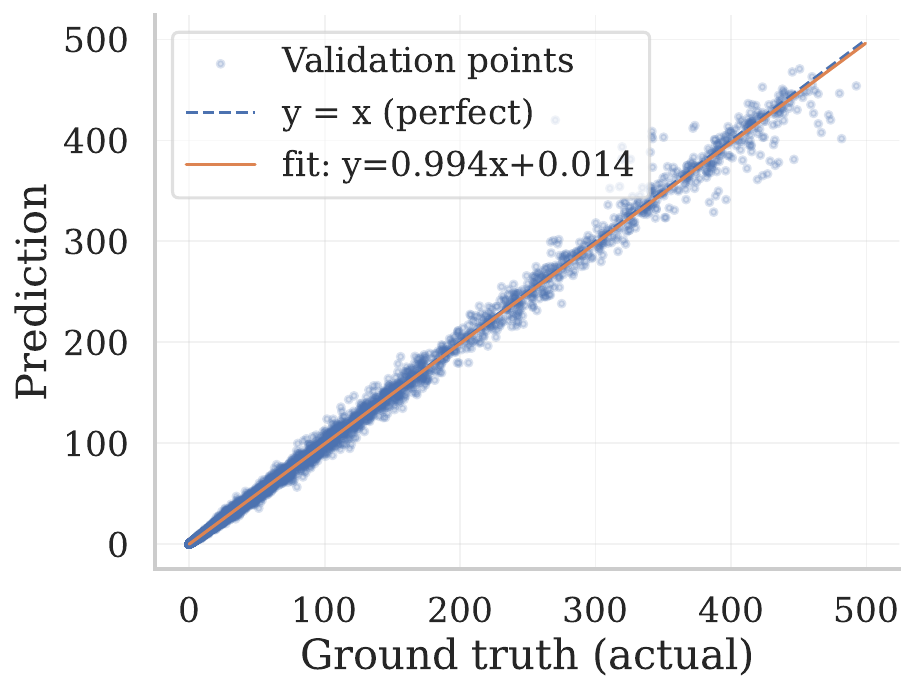}
    \includegraphics[width=0.32\linewidth]{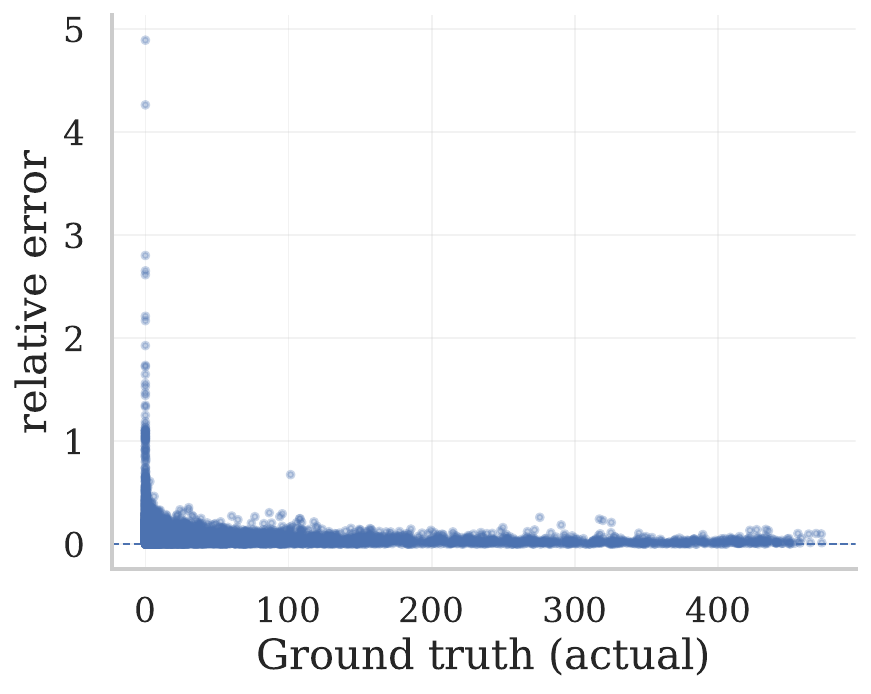}
    \includegraphics[width=0.32\linewidth]{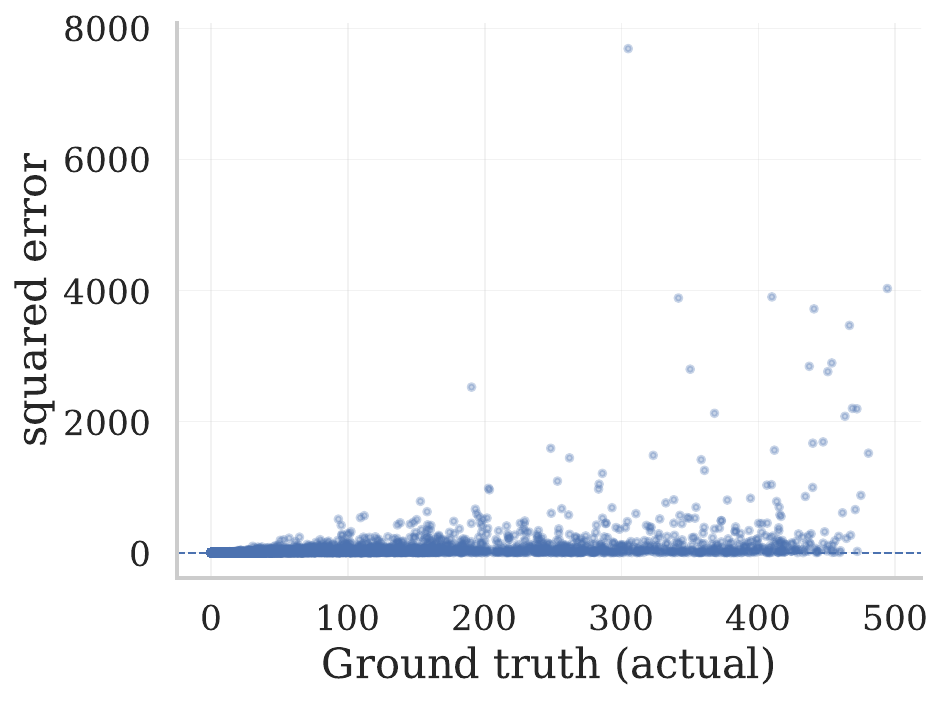}
    \caption{\textbf{M\"obius validation diagnostics.} Predicted vs.\ ground-truth \(\varphi_t\) values (left) and error vs.\ truth (middle/right),
    illustrating calibration and where errors concentrate.}
    \label{fig:mobius_val_diag}
\end{figure*}
\begin{figure*}[!h]
    \centering
    \includegraphics[width=0.32\linewidth]{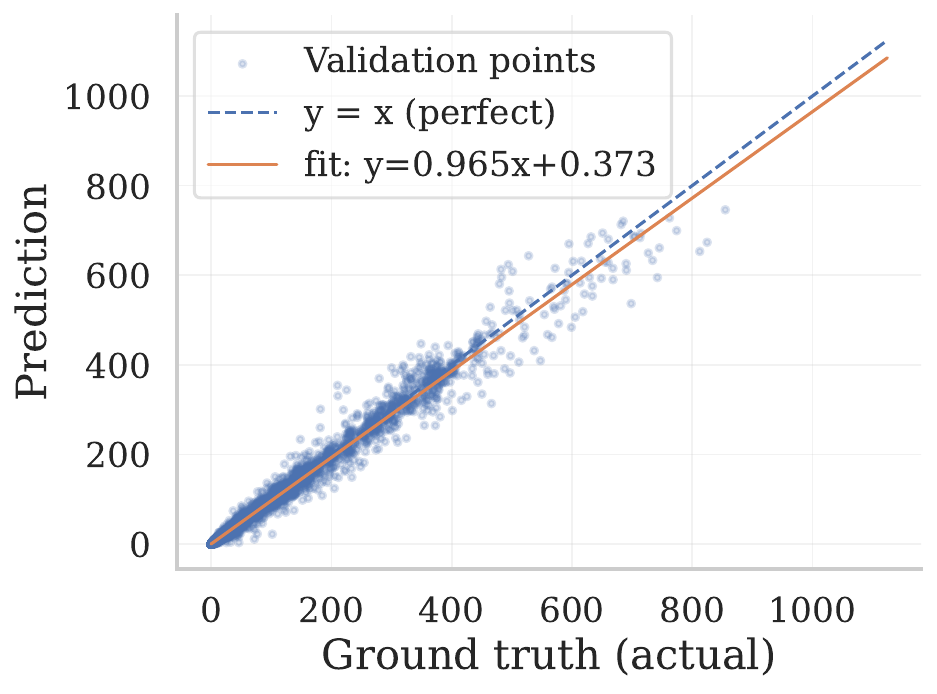}
    \includegraphics[width=0.32\linewidth]{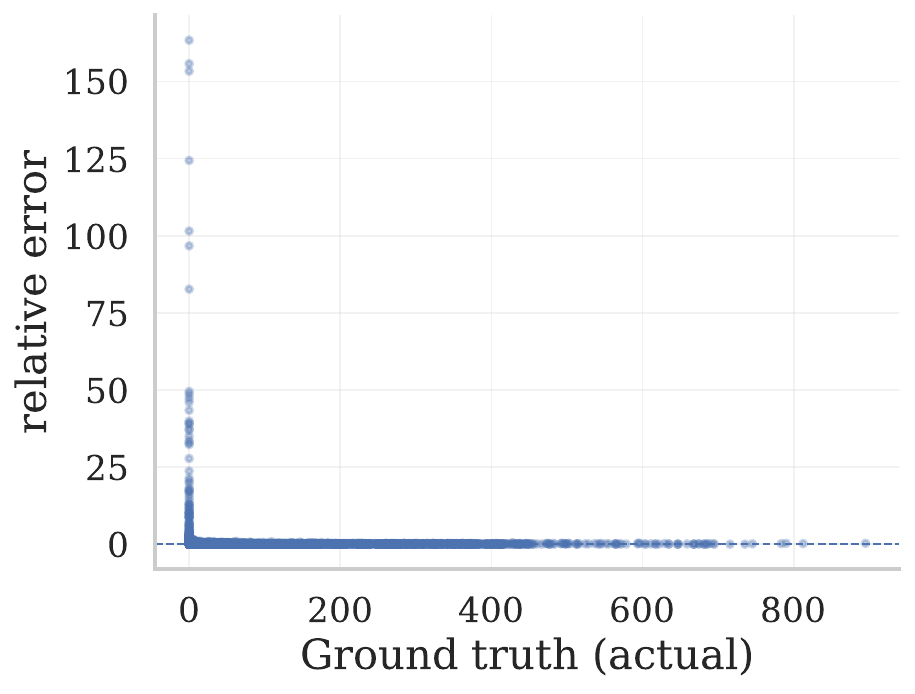}
    \includegraphics[width=0.32\linewidth]{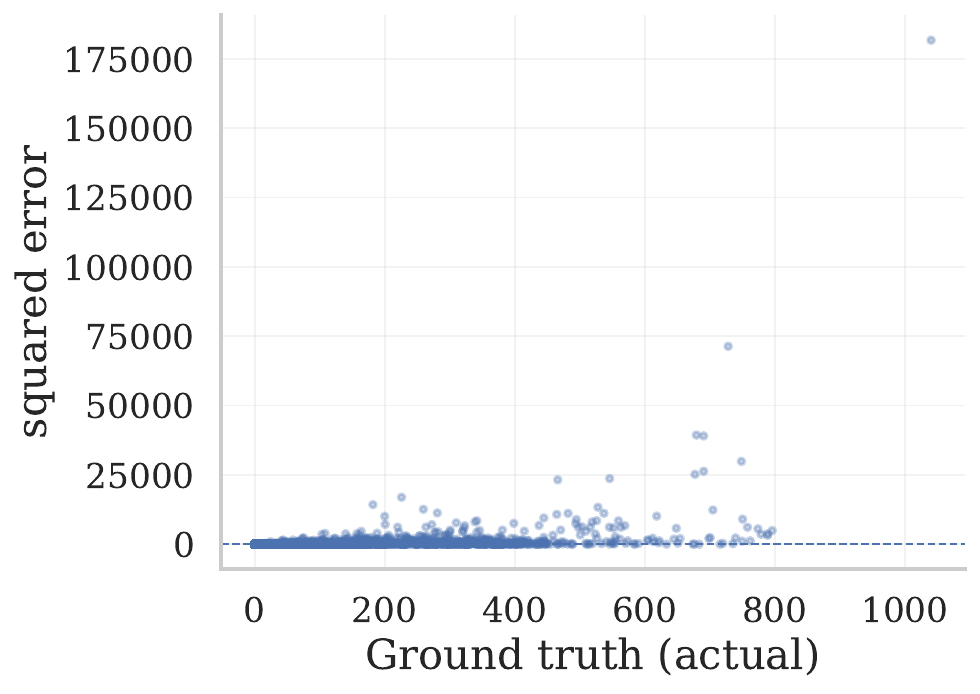}
    \caption{\textbf{Torus validation diagnostics.} Predicted vs.\ ground-truth \(\varphi_t\) values (left) and error vs.\ truth (middle/right),
    illustrating calibration and where errors concentrate.}
    \label{fig:torus_val_diag}
\end{figure*}

\subsubsection{Ambient-space Euclidean random-feature baselines} \label{app:ambient-rf}
We evaluate all methods on kernel reconstruction against the ground-truth manifold heat kernel, reporting $R^2$, RMSE, MAE, relative error (mean and median), and relative Frobenius norm.

Table~\ref{tab:tuned-sweep-best} expands the compact main-text comparison in Table~\ref{table:main} by reporting all ambient RF methods and all tested feature budgets. The downstream RF comparison is reported separately in Table~\ref{tab:vertex_velocity_random_feature_comparison}. Clearly, MRFs outperform all Euclidean baselines across every surface and every metric, even when the baselines are allowed up to $4\times$ more features.
\begin{table*}
\centering
\caption{Kernel reconstruction accuracy: MRFs vs. ambient-space
Euclidean random features with tuned $\sigma$.
MRF vectors have dimension 4000; baselines are evaluated at 4000,
8000, and 16000 features.
RMSE and MAE in units of $10^{-4}$.
Best results \textbf{bolded}.}
% \caption{Comparison between MRFs and other RF methods (with tuned parameters). \# features indicates the dimension of the random feature vectors (the MRF vectors here have a dimension of 4000, so even with additional features the other methods do not perform better). RMSE and MAE are reported in units of $10^{-4}$. Higher $R^2$ is better; lower values are better for all other metrics; top performance is \textbf{bolded}.}
\resizebox{\textwidth}{!}{%%
\begin{tabular}{lllccccccc}
\toprule
\textbf{Surface} & \textbf{Method} & \textbf{\# features} & \textbf{$R^2 \uparrow$} & \textbf{RMSE $\downarrow$} & \textbf{MAE $\downarrow$} & \textbf{Mean RE $\downarrow$} & \textbf{Median RE $\downarrow$} & \textbf{Rel.\ Frob.\ $\downarrow$} \\
\midrule

\multirow{13}{*}{\textbf{Sphere}} & \textbf{MRF} & -- & \textbf{0.995} & \textbf{242.71} & \textbf{59.11} & \textbf{0.039} & \textbf{0.005} & \textbf{0.067} \\
\cmidrule(lr){2-9}
 & RFF & \multirow{4}{*}{4000} & 0.975 & 556.09 & 433.66 & 33.418 & 26.056 & 0.153 \\
 & ORF &  & 0.976 & 553.63 & 419.85 & 29.771 & 23.221 & 0.153 \\
 & PRF &  & 0.147 & 3266.96 & 1647.41 & 89.222 & 78.716 & 0.901 \\
 & PORF &  & 0.257 & 3048.50 & 935.74 & 20.537 & 4.146 & 0.841 \\
\cmidrule(lr){2-9}
 & RFF & \multirow{4}{*}{8000} & 0.985 & 432.81 & 323.86 & 23.246 & 18.064 & 0.119 \\
 & ORF &  & 0.983 & 463.96 & 331.72 & 21.359 & 16.734 & 0.128 \\
 & PRF &  & 0.235 & 3094.36 & 890.47 & 16.485 & 3.055 & 0.853 \\
 & PORF &  & 0.351 & 2850.61 & 1019.82 & 28.066 & 6.216 & 0.786 \\
\cmidrule(lr){2-9}
 & RFF & \multirow{4}{*}{16000} & 0.989 & 367.93 & 253.48 & 15.913 & 12.253 & 0.101 \\
 & ORF &  & 0.987 & 403.97 & 270.26 & 15.857 & 12.390 & 0.111 \\
 & PRF &  & 0.391 & 2759.94 & 1027.42 & 28.759 & 6.300 & 0.761 \\
 & PORF &  & 0.269 & 3023.77 & 931.91 & 20.888 & 4.484 & 0.834 \\
\midrule
\multirow{13}{*}{\textbf{Ellipsoid}} & \textbf{MRF} & -- & \textbf{0.998} & \textbf{0.56} & \textbf{0.10} & \textbf{0.004} & \textbf{0.000} & \textbf{0.047} \\
\cmidrule(lr){2-9}
 & RFF & \multirow{4}{*}{4000} & 0.904 & 3.63 & 1.93 & 0.159 & 0.109 & 0.304 \\
 & ORF &  & 0.898 & 3.74 & 1.94 & 0.158 & 0.102 & 0.313 \\
 & PRF &  & 0.144 & 10.81 & 3.05 & 0.161 & 0.040 & 0.905 \\
 & PORF &  & 0.247 & 10.14 & 3.50 & 0.224 & 0.068 & 0.849 \\
\cmidrule(lr){2-9}
 & RFF & \multirow{4}{*}{8000} & 0.909 & 3.52 & 1.64 & 0.130 & 0.076 & 0.295 \\
 & ORF &  & 0.902 & 3.66 & 1.70 & 0.135 & 0.074 & 0.306 \\
 & PRF &  & 0.305 & 9.74 & 3.70 & 0.253 & 0.090 & 0.815 \\
 & PORF &  & 0.304 & 9.75 & 3.80 & 0.265 & 0.092 & 0.816 \\
\cmidrule(lr){2-9}
 & RFF & \multirow{4}{*}{16000} & 0.911 & 3.48 & 1.46 & 0.112 & 0.054 & 0.291 \\
 & ORF &  & 0.906 & 3.58 & 1.53 & 0.118 & 0.056 & 0.300 \\
 & PRF &  & 0.326 & 9.59 & 3.88 & 0.276 & 0.099 & 0.803 \\
 & PORF &  & 0.327 & 9.58 & 3.94 & 0.283 & 0.107 & 0.802 \\
\midrule
\multirow{13}{*}{\textbf{M\"obius strip}} & \textbf{MRF} & -- & \textbf{0.999} & \textbf{0.39} & \textbf{0.06} & \textbf{0.002} & \textbf{0.000} & \textbf{0.035} \\
\cmidrule(lr){2-9}
 & RFF & \multirow{4}{*}{4000} & 0.738 & 5.67 & 2.08 & 0.140 & 0.093 & 0.499 \\
 & ORF &  & 0.738 & 5.67 & 2.08 & 0.141 & 0.094 & 0.499 \\
 & PRF &  & 0.120 & 10.39 & 3.06 & 0.172 & 0.026 & 0.915 \\
 & PORF &  & 0.154 & 10.19 & 3.21 & 0.192 & 0.031 & 0.898 \\
\cmidrule(lr){2-9}
 & RFF & \multirow{4}{*}{8000} & 0.745 & 5.60 & 1.83 & 0.116 & 0.068 & 0.493 \\
 & ORF &  & 0.744 & 5.60 & 1.80 & 0.113 & 0.065 & 0.493 \\
 & PRF &  & 0.203 & 9.89 & 3.51 & 0.233 & 0.051 & 0.871 \\
 & PORF &  & 0.192 & 9.96 & 3.43 & 0.221 & 0.043 & 0.877 \\
\cmidrule(lr){2-9}
 & RFF & \multirow{4}{*}{16000} & 0.748 & 5.56 & 1.65 & 0.098 & 0.049 & 0.489 \\
 & ORF &  & 0.748 & 5.56 & 1.63 & 0.096 & 0.047 & 0.490 \\
 & PRF &  & 0.216 & 9.81 & 3.55 & 0.239 & 0.051 & 0.864 \\
 & PORF &  & 0.212 & 9.83 & 3.58 & 0.241 & 0.053 & 0.866 \\
\midrule
\multirow{13}{*}{\textbf{Torus}} & \textbf{MRF} & -- & \textbf{0.996} & \textbf{0.73} & \textbf{0.12} & \textbf{0.005} & \textbf{0.000} & \textbf{0.062} \\
\cmidrule(lr){2-9}
 & RFF & \multirow{4}{*}{4000} & 0.817 & 4.96 & 2.25 & 0.180 & 0.106 & 0.418 \\
 & ORF &  & 0.803 & 5.16 & 2.35 & 0.188 & 0.103 & 0.434 \\
 & PRF &  & 0.138 & 10.77 & 3.30 & 0.190 & 0.057 & 0.908 \\
 & PORF &  & 0.189 & 10.45 & 3.66 & 0.237 & 0.076 & 0.881 \\
\cmidrule(lr){2-9}
 & RFF & \multirow{4}{*}{8000} & 0.821 & 4.91 & 2.00 & 0.155 & 0.072 & 0.414 \\
 & ORF &  & 0.806 & 5.11 & 2.14 & 0.167 & 0.073 & 0.430 \\
 & PRF &  & 0.221 & 10.24 & 3.97 & 0.276 & 0.107 & 0.863 \\
 & PORF &  & 0.220 & 10.24 & 3.92 & 0.269 & 0.098 & 0.863 \\
\cmidrule(lr){2-9}
 & RFF & \multirow{4}{*}{16000} & 0.818 & 4.95 & 1.91 & 0.146 & 0.052 & 0.417 \\
 & ORF &  & 0.812 & 5.02 & 1.98 & 0.152 & 0.054 & 0.423 \\
 & PRF &  & 0.232 & 10.17 & 4.02 & 0.281 & 0.108 & 0.857 \\
 & PORF &  & 0.222 & 10.23 & 4.01 & 0.279 & 0.109 & 0.862 \\
\bottomrule
\end{tabular}}
\label{tab:tuned-sweep-best}
\end{table*}

\subsection{Mesh interpolation experimental details}
\label{app:mesh-interpolation}

\subsubsection{Vertex normal prediction setup}
\label{app:vertex_normal_prediction_setup}

We consider vertex normal interpolation on a triangular mesh with vertex set $\mathrm{V}$, faces $F$, and vertex positions $\{\mathbf{x}_i\in\mathbb{R}^3\}_{i\in \mathrm{V}}$. Ground-truth vertex normals $\{\mathbf{n}_i\in\mathbb{R}^3\}_{i\in \mathrm{V}}$ are computed by accumulating incident (unnormalized) face normals and normalizing each vertex vector. We randomly mask a subset $M\subseteq \mathrm{V}$ with $|M|=0.8|\mathrm{V}|$ (80\% missing), set $\widetilde{\mathbf{n}}_i=\mathbf{0}$ for $i\in M$ and $\widetilde{\mathbf{n}}_i=\mathbf{n}_i$ otherwise, and predict normals at the masked vertices via kernel-based field integration. Performance is reported as the mean cosine similarity on the masked set,
$\frac{1}{|M|}\sum_{i\in M}(\mathbf{n}_i^{\mathrm{pred}})^\top \mathbf{n}_i$,
where $\mathbf{n}_i^{\mathrm{pred}}$ is the unit-normalized prediction. The mesh graph $G=(\mathrm{V},E)$ uses the mesh edges induced by faces. For each edge $(i,j)\in E$, we define an affinity
\[
(\mathbf{W})_{ij}=\exp\!\Big(-\frac{\|\mathbf{x}_i-\mathbf{x}_j\|^2}{\sigma^2}\Big),\qquad
\sigma^2=\mathrm{median}_{(i,j)\in E}\|\mathbf{x}_i-\mathbf{x}_j\|^2,
\]
compute degrees $(\mathbf{D})_{ii}=\sum_j (\mathbf{W})_{ij}$, and form the symmetrically-normalized matrix $\mathbf{W}_f=\mathbf{D}^{-1/2}\mathbf{W}\mathbf{D}^{-1/2}$. The full-kernel (FK) baseline explicitly constructs the dense kernel $\mathbf{K}=\exp(\tau \mathbf{W}_f)$, with $\tau=20$, by eigen-decomposition of the dense $\mathbf{W}_f$ (double precision) and then applies $\mathbf{N}^{\mathrm{pred}} = \mathbf{K}\widetilde{\mathbf{N}}$ with $\widetilde{\mathbf{N}}\in\mathbb{R}^{|\mathrm{V}|\times 3}$ the masked normal field (masked rows set to $\mathbf{0}$), followed by row-wise unit normalization. We record FK preprocessing time as dense $\mathbf{W}_f$ construction plus eigendecomposition and kernel formation, and FK interpolation time as the matrix--field product $\mathbf{K}\widetilde{\mathbf{N}}$ plus normalization.

MRFs target the same kernel $\mathbf{K}=\exp(\tau \mathbf{W}_f)$ but avoid explicit construction by learning a low-rank factorization. We first estimate signature values corresponding to $\exp(\tfrac{\tau}{2}\mathbf{W}_f)$ using the g-GRF random-walk estimator with modulation coefficients $\alpha_k=(\tfrac{\tau}{2})^k/k!$ (truncated when $\alpha_k<10^{-300}$), halting probability $p_{\mathrm{halt}}=0.01$, and $m=10{,}000$ walks per start node. We sample 1000 start nodes uniformly without replacement (restricted to vertices when a denser discretization is used). For each start, we sample candidate anchors $\omega$ uniformly, keep all samples with target value at least $0.1$, and retain smaller targets with probability $0.025$. We also train a continuous surrogate $g_\theta(\mathbf{x}_{\text{start}},\mathbf{x}_\omega)$ using a 3-layer MLP with two 128-wide ReLU hidden layers. Optimization uses Adam with learning rate $10^{-3}$, batch size $32{,}768$, $1000$ epochs, a relative-error loss with $\varepsilon=0.1$, and a $20\%$ validation split.

For training $g_\theta$ we use a denser point set on the surface: by default we set $N_{\mathrm{dense}}=\max(|\mathrm{V}|,5000)$ by adding area-weighted samples on faces and build a $k$NN graph on these points with $k=16$. Inference remains on the original vertices: we sample $n_{\mathrm{rf}}=256$ anchors $\{\omega_\ell\}$ uniformly from the dense set and approximate $\mathbf{K}$ via $\mathbf{K}\approx \boldsymbol{\Phi}\boldsymbol{\Phi}^\top$ with $(\boldsymbol{\Phi})_{i\ell}=g_\theta(\mathbf{x}_i,\mathbf{x}_{\omega_\ell},d(i,\omega_\ell))/\sqrt{n_{\mathrm{rf}}}$, applying $\mathbf{N}^{\mathrm{pred}}\approx \boldsymbol{\Phi}(\boldsymbol{\Phi}^\top \widetilde{\mathbf{N}})$ and normalizing row-wise. MRF preprocessing time includes supervision generation and training, and MRF interpolation time includes anchor sampling, distance computation, network evaluation, and accumulation. We evaluate on meshes from Thingi10k with the following mesh IDs:

\texttt{[368622, 42435, 65282, 116878, 409624, 101902, 73410, 87602, 255172, 98480, 57140, 285606, 96123, 203289, 87601, 409629, 37384, 57084, 136024, 202267, 101619, 72896, 90064, 127243, 78671, 285610, 75667, 80597, 75651, 75654, 75657, 75665, 75652, 123472, 88855, 444375, 208741, 73877]}

\subsubsection{More details regarding velocity prediction setup}
Since velocity magnitudes are meaningful (unlike unit normals), we use a normalized kernel interpolant.
Let $\mathbf{m}\in\{0,1\}^{|\widetilde{\mathrm V}|}$ be the indicator of observed nodes ($m_i=1$ if $i\notin M$, else $0$), and let $\mathbf{m}\odot \mathbf{U}$ denote row-wise masking.
We predict the full field via normalization:
\begin{equation}
\mathbf{U}^{\mathrm{pred}}
\;=\;
\frac{\mathbf{K}(\mathbf{m}\odot \mathbf{U})}{\mathbf{K}\mathbf{m}}\,,
\end{equation}
where the division is row-wise and applied to each of the $3$ velocity channels.

To match the full-kernel baseline used in vertex-normal prediction, we compute $\mathbf{K}$ via a full spectral decomposition.
Specifically, we form $\mathbf{W}_f$ and compute $\mathbf{W}_f = \mathbf{V}\operatorname{diag}(\boldsymbol{\lambda})\mathbf{V}^\top$, so that
$\mathbf{K} = \mathbf{V}\operatorname{diag}(\exp(\tau\boldsymbol{\lambda}))\mathbf{V}^\top$.
Baseline preprocessing time therefore includes graph construction and the dense eigendecomposition (dominated by $O(|\widetilde{\mathrm V}|^3)$ time and
$O(|\widetilde{\mathrm V}|^2)$ memory to store $\mathbf{V}$), and each per-frame interpolation requires two dense applications of $\mathbf{K}$
(we apply $\mathbf{K}$ once to the stacked right-hand side $[\mathbf{m},\;\mathbf{m}\odot \mathbf{U}]$ to obtain both numerator and denominator).

For MRFs, we train the regressor $g_\theta$ once per discretization on the same dense graph,
and precompute a random-feature matrix $\mathbf{Z}\in\mathbb{R}^{|\widetilde{\mathrm V}|\times M}$ for a fixed number of features $M$.
This yields the approximation $\mathbf{K} \approx \mathbf{Z}\mathbf{Z}^\top$, so that per-frame interpolation reduces to two thin matrix multiplications:
$\mathbf{Z}\mathbf{Z}^\top(\mathbf{m}\odot \mathbf{U}) = \mathbf{Z}(\mathbf{Z}^\top(\mathbf{m}\odot \mathbf{U}))$ and similarly for $\mathbf{K}\mathbf{m}$.
We report MRF preprocessing as supervision generation, training, and feature precomputation, and MRF interpolation time as the rest.

\subsection{Manifold-valued attention: full masked-reconstruction results}
\label{app:attention}
Table~\ref{tab:attention-full} reports all Euclidean RF, graph/geodesic low-rank, exact-kernel, and linear-kernel baselines for every source count. Specifically, we compared MRFs against the full graph heat kernel (Exact heat), a low-rank Nyström approximation of that same manifold heat kernel (Geodesic Nystrom), a low-rank spectral approximation of that same manifold heat kernel (Graph diffusion map), the exact Gaussian kernel in ambient Euclidean space (Exact RBF), a low-rank anchor-feature approximation of that ambient Gaussian kernel (Linear RBF), and other RF methods. 

\begin{table*}[h] 
\centering
\caption{Masked reconstruction results for source counts 5--10. Lower is better for both metrics.}
\label{tab:attention-full}
\resizebox{\textwidth}{!}{%
\begin{tabular}{lcccccccccccccc}
\toprule
 & \multicolumn{2}{c}{Sources 5} & \multicolumn{2}{c}{Sources 6} & \multicolumn{2}{c}{Sources 7} & \multicolumn{2}{c}{Sources 8} & \multicolumn{2}{c}{Sources 9} & \multicolumn{2}{c}{Sources 10} \\
\cmidrule(lr){2-3} \cmidrule(lr){4-5} \cmidrule(lr){6-7} \cmidrule(lr){8-9} \cmidrule(lr){10-11} \cmidrule(lr){12-13}
Method & MSE & Rel. $\ell_2$ & MSE & Rel. $\ell_2$ & MSE & Rel. $\ell_2$ & MSE & Rel. $\ell_2$ & MSE & Rel. $\ell_2$ & MSE & Rel. $\ell_2$ \\
\midrule
MRF & \textbf{3.10} & \textbf{1.97} & \textbf{3.03} & 2.90 & \textbf{2.90} & \textbf{4.87} & \textbf{3.11} & \textbf{2.15} & \textbf{3.15} & \textbf{3.46} & \textbf{3.25} & \textbf{2.46} \\
Geodesic Nystr\"om & 3.77 & 2.05 & 4.76 & 4.49 & 6.38 & 7.54 & 5.74 & 4.40 & 4.52 & 6.87 & 4.59 & 5.22 \\
Graph diffusion map & 3.61 & 2.67 & 3.67 & 3.37 & 3.98 & 7.27 & 3.96 & 4.11 & 3.72 & 5.37 & 3.70 & 4.52 \\
RFF & 3.32 & 2.36 & 3.20 & 3.40 & 4.14 & 6.76 & 3.51 & 3.30 & 3.40 & 5.03 & 3.42 & 3.52 \\
ORF & 3.26 & 2.24 & 3.15 & 3.16 & 3.84 & 6.69 & 3.47 & 3.20 & 3.37 & 4.92 & 3.38 & 3.38 \\
PRF & 3.34 & 2.52 & 3.05 & \textbf{2.23} & 3.31 & 5.64 & 3.79 & 4.63 & 3.30 & 4.51 & 3.65 & 4.20 \\
PORF & 3.52 & 2.75 & 3.16 & 3.18 & 3.33 & 6.47 & 4.18 & 5.26 & 3.25 & 3.97 & 3.53 & 4.01 \\
Exact RBF & 3.38 & 2.47 & 3.22 & 3.53 & 4.66 & 7.36 & 3.60 & 3.48 & 3.48 & 5.41 & 3.44 & 3.65 \\
Linear RBF & 3.29 & 2.42 & 3.11 & 3.20 & 3.92 & 6.72 & 3.40 & 3.07 & 3.35 & 4.90 & 3.38 & 3.42 \\
Exact heat & 3.87 & 2.00 & 5.00 & 4.36 & 6.91 & 8.08 & 5.93 & 4.47 & 4.47 & 5.82 & 4.81 & 5.67 \\
\bottomrule
\end{tabular}%
}
\end{table*}

\subsection{Higher-dimensional descriptor manifolds: Brodatz} \label{app:brodatz}
We evaluate MRFs on a harder Brodatz protocol designed to test whether unlabeled transformed samples help recover a useful descriptor manifold. Each grayscale Brodatz image defines one class, and each sampled window is mapped to a 15-dimensional log-SPD covariance descriptor computed from intensity, first-order gradients, and second-order derivatives (i.e., each window becomes a $5\times5$ SPD covariance matrix). Supervision is intentionally sparse: for each class we use only 6 labeled windows drawn from 2 canonical rotations $(0^{\circ}$ and $90^{\circ})$. The manifold is built by sampling a large unlabeled pool from intermediate rotations $(0^{\circ},15^{\circ},45^{\circ},75^{\circ},90^{\circ},105^{\circ},135^{\circ},165^{\circ})$. Evaluation is performed on held-out rotations $(30^{\circ},60^{\circ},120^{\circ},150^{\circ})$. This yields 111 classes, 666 labeled training descriptors, 66{,}600 unlabeled graph descriptors, and 22{,}200 test descriptors. As in the paper, we build a graph on the unlabeled transformed descriptor cloud, compute GRF-style random-walk signatures, train the continuous surrogate, then use the resulting MRF features for $k$NN classification. MRFs noticeably outperform other methods. The metric is group accuracy, where predictions from 50 windows belonging to the same texture-angle block are aggregated by majority vote. Tuned $k \in {1,...,15}$ for $k$NN; $k=1$ was best in every case.

% \begin{table}[h]
% \centering
% \caption{Brodatz transformed-view classification with group-vote evaluation. Each method family was swept over $k \in \{1,\ldots,15\}$ for $k$-NN; $k=1$ was best in every case, so we report the corresponding best-tuned result for each method.}
% \label{tab:brodatz}
% \begin{tabular}{lcc}
% \toprule
% \textbf{Method} & \textbf{Best setting} & \textbf{Group acc. (\%)} \\
% \midrule
% MRF std & $k=1$ & \textbf{58.33} \\
% RFF-RBF & dim$=64$, $k=1$ & 44.37 \\
% raw log-SPD & $k=1$ & 41.89 \\
% Nystr\"om-RBF & dim$=128$, $k=1$ & 41.89 \\
% kPCA-RBF & $l=15$, $k=1$ & 32.43 \\
% \bottomrule
% \end{tabular}
% \label{tab:brodatz-transform-group}
% \end{table}

\begin{table*}[h]
\centering
\small
\setlength{\tabcolsep}{4.5pt}
\caption{Brodatz transformed-view classification with group-vote evaluation. Entries are group accuracy in percent, reported as mean $\pm$ standard deviation over 30 seeds. For RF and kernel methods, the feature dimension or embedding dimension is tuned within each method and $k$.}
\label{tab:brodatz-transform-group}
\begin{tabular}{lccccc}
\toprule
\textbf{Method} & $\mathbf{k=1}$ & $\mathbf{k=3}$ & $\mathbf{k=5}$ & $\mathbf{k=10}$ & $\mathbf{k=15}$ \\
\midrule
MRF & \textbf{55.7 $\pm$ 2.9} & \textbf{53.6 $\pm$ 3.3} & \textbf{51.6 $\pm$ 3.2} & \textbf{45.4 $\pm$ 3.3} & \textbf{40.3 $\pm$ 3.0} \\
Raw & 42.4 $\pm$ 2.2 & 36.9 $\pm$ 2.4 & 33.6 $\pm$ 2.7 & 27.2 $\pm$ 2.2 & 24.1 $\pm$ 1.6 \\
% Nystr\"om & 42.4 $\pm$ 2.1 & 37.0 $\pm$ 2.4 & 33.8 $\pm$ 2.7 & 27.7 $\pm$ 2.2 & 24.7 $\pm$ 1.7 \\
ORF & 42.6 $\pm$ 2.3 & 37.5 $\pm$ 2.4 & 34.5 $\pm$ 2.5 & 27.8 $\pm$ 1.9 & 24.9 $\pm$ 1.7 \\
RFF & 43.5 $\pm$ 2.5 & 38.5 $\pm$ 2.8 & 35.4 $\pm$ 3.1 & 28.9 $\pm$ 2.6 & 25.6 $\pm$ 2.4 \\
kPCA & 35.3 $\pm$ 2.4 & 30.5 $\pm$ 2.5 & 28.5 $\pm$ 2.8 & 24.8 $\pm$ 2.1 & 22.6 $\pm$ 1.6 \\
PRF & 9.2 $\pm$ 2.4 & 6.3 $\pm$ 1.2 & 5.2 $\pm$ 1.1 & 3.7 $\pm$ 0.8 & 3.0 $\pm$ 0.4 \\
PORF & 8.9 $\pm$ 2.5 & 5.9 $\pm$ 1.4 & 5.0 $\pm$ 1.4 & 3.6 $\pm$ 0.8 & 3.0 $\pm$ 0.8 \\
\bottomrule
\end{tabular}
\end{table*}

\subsection{Non-Compact Manifolds}
\label{app:non-compact-manifolds}

Non-compact manifolds require additional care because a finite graph can only approximate bounded regions. Rather than attempting a global discretization, we adopt an exhaustion-based (local-on-compacts) approach: we approximate the heat kernel using finite constructions on balls $B_R(o)$, and evaluate accuracy on a fixed inner region $B_r(o)$. Empirically, we observe that approximation error on $B_r(o)$ decreases as outer radii $R$ increases. We evaluate the approximation on a fixed inner ball with $r=1.5$, using 128 query nodes sampled from that inner region and comparing against the analytic heat kernel at time $t=0.5$. GRF and MRF walk budgets are increased with $R$.

\begin{table}[h]
\centering
\small
\setlength{\tabcolsep}{4.5pt}
\caption{Local-on-compacts exhaustion results for the $\mathbb{H}^3$ MRF experiment. Errors are evaluated on a fixed inner test ball while increasing the outer truncation radius $R$. RMSE values are reported in units of $10^{-3}$.}
\label{tab:h3-mrf-local-exhaustion}
\begin{tabular}{lccccccc}
\toprule
 & $\mathbf{R=2.0}$ & $\mathbf{R=2.5}$ & $\mathbf{R=3.0}$ & $\mathbf{R=3.5}$ & $\mathbf{R=4.0}$ & $\mathbf{R=4.5}$ & $\mathbf{R=5.0}$ \\
\midrule
Rel. $\ell_2$ & 0.31 & 0.29 & 0.25 & 0.26 & 0.19 & 0.21 & 0.18 \\
RMSE $(10^{-3})$ & 3.3 & 3.1 & 2.7 & 2.8 & 2.1 & 2.3 & 1.9 \\
\bottomrule
\end{tabular}
\end{table}

\subsection{Code availability}
\label{app:code}
Code for all our experiments can be found \href{https://anonymous.4open.science/r/graph-kernel-convergence-0612}{here}.

%%%%%%%%%%%%%%%%%%%%%%%%%%%%%%%%%%%%%%%%%%%%%%%%%%%%%%%%%%%%%%%%%%%%%%%%%%%%%%%
% NeurIPS checklist
%%%%%%%%%%%%%%%%%%%%%%%%%%%%%%%%%%%%%%%%%%%%%%%%%%%%%%%%%%%%%%%%%%%%%%%%%%%%%%%
% \newpage
% \input{checklist.tex}

\end{document}